\documentclass[11pt,a4paper]{scrartcl}

\usepackage[table]{xcolor}
\usepackage{calc}
\usepackage{amssymb}
\usepackage{amstext}
\usepackage{amsmath}
\usepackage{amsfonts}
\usepackage{amscd}
 \usepackage{amsthm}
\usepackage{hyperref}
\usepackage{multirow}
 
 \definecolor{light-gray}{gray}{0.95}
 \definecolor{lightlight-gray}{gray}{0.98}
  \usepackage{epsfig}

\usepackage{graphicx}
\usepackage{caption}
\usepackage{subcaption}

\newtheorem{theorem}{Theorem}[section]
\newtheorem{proposition}{Proposition}[section]

\newtheorem{corollary}{Corollary}[section]
\newtheorem{example}{Example}[section]
\newtheorem{remark}{Remark}[section]
 \newtheorem{definition}{Definition}[section]

 \newcommand\N{\mathbb{N}}

\newcommand\Q{\mathbb{Q}}
\newcommand\R{\mathbb{R}}

\setlength\parindent{0pt}

\usepackage[margin=0.65in]{geometry}

\linespread{1.1}

\setlength{\parskip}{0.1cm}

\begin{document}
\title{A general unified framework for interval pairwise comparison matrices\footnote{This is a preprint of the paper ``Cavallo B. and Brunelli M., A general unified framework for interval pairwise comparison matrices. \emph{International Journal of Approximate Reasoning}, 93, 178--198. \htmladdnormallink{DOI:10.1016/j.ijar.2017.11.002.}{http://dx.doi.org/10.1016/j.ijar.2017.11.002}}}
\author{
{Bice Cavallo$^a$ and Matteo Brunelli$^b$}
\\
{\normalsize $^a$Department of Architecture, University of Naples ``Federico II'', Italy} \\
{\normalsize e--mail:
\texttt{bice.cavallo@unina.it}}\\
{\normalsize $^b$Department of Industrial Engineering, University of Trento, Italy} \\
{\normalsize e--mail:
\texttt{matteo.brunelli@unitn.it}}}

\date{}

\maketitle \thispagestyle{empty}

\begin{center}
{\textbf Abstract }
\end{center}

{\small \noindent 
Interval Pairwise Comparison Matrices have been widely used to account for uncertain statements concerning the preferences of decision makers.
Several approaches have been proposed in the literature, such as multiplicative and fuzzy interval matrices. In this paper, we propose a general unified approach  to Interval Pairwise Comparison Matrices, based on Abelian linearly ordered groups. In this framework, we generalize some consistency conditions provided for multiplicative and/or fuzzy interval pairwise comparison matrices and provide inclusion relations between them. Then, we provide a concept of distance between intervals that, together with a notion of mean defined over real continuous Abelian linearly ordered groups, allows us to provide a consistency index and an indeterminacy index.
In this way, by means of suitable isomorphisms between Abelian linearly ordered groups, we will be able to compare the inconsistency and the indeterminacy of different kinds of Interval Pairwise Comparison Matrices, e.g. multiplicative, additive, and fuzzy, on a unique Cartesian coordinate system..}

 \vspace{0.3cm}
 \noindent {\small {\textbf
 Keywords}: Multi-criteria decision making;  interval pairwise comparison matrix;  Abelian linearly ordered group;  consistency; consistency index;  indeterminacy index.}
 \vspace{0.3cm}

\section{Introduction} \label{sec:Introduction}
As their name suggests, Pairwise Comparison Matrices (PCMs) have been a long standing technique for comparing alternatives and their role has been pivotal in the development of modern decision making methods. In accordance with decision theory, in this paper we shall consider a finite non-empty set of $n$ entities (e.g. criteria or alternatives) $X=\{ x_{1},\ldots,x_{n} \}$, and the object of our investigation is the set of comparisons between them with respect to one of their properties. That is, we are interested in the subjective estimations $a_{ij}~\forall i,j \in \{ 1,\ldots,n \}$, where $a_{ij}$ is a numerical representation of the intensity of preference of $x_{i}$ over $x_{j}$.
 
With respect to the values that $a_{ij}$ can assume  and their interpretation, it is fundamental to be aware that various proposals have been presented, studied, and applied in the literature to solve real-world problems. 

The foremost type of representation of valued preferences, at least with respect to the number of real-world applications is probably the multiplicative representation, used among others by Saaty in the theory of the Analytic Hierarchy Process (AHP). In this sense, pairwise comparisons are expressed as positive real numbers, $a_{ij} \in ]0 , +\infty[$ satisfying a condition of multiplicative reciprocity, $a_{ij} \cdot a_{ji} = 1$. We shall note that the AHP \cite{Saaty77} is not the only method using this scheme for pairwise comparisons. For instance, proponents of Multi Attribute Value Theory (MAVT) such as Keeney and Raiffa \cite{KeeneyRaiffa1976} and Belton and Stewart \cite{BeltonStewart2002} advocate the use of pairwise comparisons to estimate the ratios between weights of criteria when the value function is additive. Hereafter, representations of preference of this kind will be called \emph{multiplicative}.

Reciprocal preference relations, whose origins can be traced back at least to a study by Zermelo \cite{Zermelo1929}, assume that intensities of preferences are represented on the open unit interval; that is, $a_{ij}\in ]0,1[$. Similarly to the multiplicative case, reciprocal preference relations obey a condition of reciprocity, in this case $a_{ij} + a_{ji} = 1$. Interestingly, such a representation was studied, among others, also by Luce and Suppes \cite{LuceSuppes1965} under the name of `probabilistic preference relations' and has been widely popularized within the fuzzy sets community under the name of `fuzzy preference relations'. Instances of influential studies on these mathematical structures under the fuzzy lens have been offered by Tanino \cite{Tanino1984}, Herrera-Viedma et al. \cite{ViedmaHerreraChiclanaLuque} and Kacprzyk \cite{Kacprzyk1986}. For sake of simplicity, and because the unit interval recalls the idea of membership function, we shall refer to this case as the \emph{fuzzy} case in the rest of this manuscript.

The third representation considered in this paper shall be called \emph{additive} due to the fact that intensities of preferences are expressed as real numbers, $a_{ij} \in ]-\infty , + \infty[$ and comply with a condition of additive reciprocity, i.e. $a_{ij}+a_{ji}=0$. We shall note that this representation coincides with the Skew-symmetric additive representation of utilities proposed by Fishburn \cite{Fishburn1999} and with the representation used by some decision analysis methodologies such as REMBRANDT \cite{Olson1995}.

All in all, it emerges a picture where the technique of pairwise comparisons plays an important role within decision theory. Moreover, in spite of their different formulations and interpretations, it was formalized that different representations share the same algebraic structure \cite{CavalloDapuzzo}, based on Abelian linearly ordered groups, i.e. commutative groups equipped with an ordering relation. Hence, to derive results which are general enough to pertain to each of these representations of preferences, we will focus on this more general algebraic representation  and exploit the full potential of group theory. Several authors have already adopted this approach based on Abelian linearly ordered groups (e.g. \cite{Hou,Koczkodaj201681,Ramik2015236,Chen}). Nevertheless, in spite of the general formulation of our results, examples involving specific representations of preferences will be used in the rest of this paper.

More specifically, within this framework, we shall investigate the case of Interval Pairwise Comparisons Matrices (IPCMs) according to which comparison values are expressed as intervals $\tilde{a}_{ij}=[a_{ij}^{-},a_{ij}^{+}] \subset \mathbb{R}$ instead of real numbers. The approach with intervals has been widely used to account for uncertain statements concerning the preferences of a decision maker  (e.g. \cite{Landes20141301,Zhang20141787}) and studied separately in the case of multiplicative preference relations \cite{SaatyVargas1987,SaloHamalainen1995} and in the case of fuzzy preference relations \cite{Xu2004}, just to cite few examples. In this paper, we shall generalize it and derive broader results. More specifically, we will generalize interval arithmetic and propose a concept of metric on intervals when these are subsets of Abelian linearly ordered groups. This will be instrumental to formulate the concept of IPCM and study, in a more general context, the notions of reciprocity, consistency, and indeterminacy. Having done this, we will propose and justify a consistency index which, in concert with an indeterminacy index, can be used to evaluate the acceptability of IPCMs.

There are further papers in the literature that take into consideration consistency and indeterminacy: Wang \cite{Wang2015252} considered multiplicative IPCMs and proposed a geometric mean based uncertainty index to capture the inconsistency in the original multiplicative PCM;  Liu \cite{Liu20092686} measured the consistency of a multiplicative IPCM by computing Saaty's consistency index \cite{Saaty77} of one or two  associated PCMs; Li et al. \cite{Li2016628} and  Wang and Chen  \cite{WANG201459}  proposed as indeterminacy index the geometric mean of the ratios $\frac{a_{ij}^+}{a_{ij}^-} $ for  both  multiplicative and fuzzy IPCMs.
 However, no paper proposed a consistency index to be computed directly from the IPCM, i.e. without considering associated PCMs, and no paper proposes both a general consistency index and a general indeterminacy index suitable for each kind of IPCM (e.g. additive, multiplicative, and fuzzy).
%

The paper is organized as follows. Section \ref{sec:notation} provides the necessary notions and notation for the real-valued case. Next, in Section \ref{sec:Interval_arithmetics}, we discuss the idea of intervals defined over a special type of group structure. By drawing from the previous two sections, in Section \ref{sec:IPCMs} we present a general notion of interval pairwise comparison matrix which has the merit of unifying different approaches under the same umbrella. This will give us the possibility, in Section \ref{sec:consistency}, to discuss reciprocity and consistency conditions in a more general setting. In Sections \ref{sec:inconsistency} and \ref{sec:indeterminacy}, we introduce a consistency and an indeterminacy index, respectively. These indices can be used in concert to evaluate the acceptability of preferences.  Section \ref{sec:conclusions} draws some conclusions and proposes directions for future work. 
Finally,  Appendix contains the proofs of the statements
%

\section{Notation and preliminaries}
\label{sec:notation}
In this section, we will provide notation and preliminaries
  which will be necessary in the rest of the paper.

\subsection{Abelian linearly ordered groups} \label{sec:alo_groups}
We start providing  definitions  and  essential notation about Abelian linearly ordered groups in order  to define Pairwise Comparison Matrices (Subsection \ref{sec:PCM_over_Alo_groups}) and Interval Pairwise Comparison Matrices (Sections \ref{sec:IPCMs} and \ref{sec:consistency});  for further details the reader can refer to \cite{CavalloDapuzzo}.

\begin{definition}\label{Def_alo_group}
 Let $G$ be a non-empty set, $\odot: G\times G \rightarrow G$ a binary operation on G, $ \leq$ a   weak order on $G$.
  Then, $\mathcal{G}= (G, \odot, \leq)$ is an  \emph{Abelian linearly ordered group}, \emph{Alo-group} for short, if 
 $(G, \odot)$ is an Abelian  group and
\begin{equation}
\label{ordercons} a\leq b\Rightarrow  a\odot c \leq b\odot c.
\end{equation}
\end{definition}

Let us denote with $e$ the  \emph{identity} with respect to $\odot$,  $a^{(-1)}$ the
\emph{inverse} of $a\in G$ with respect  to $\odot$ and $\div$ the operation defined by $a \div b= a \odot b^{(-1)}\quad \forall a,b \in G$.
 Then, we have \cite{CavalloDapuzzo}: 
   \begin{equation}
\label{composition-1}
 a^{(-1)}=\textit{e}\div a,\quad (a \odot b)^{(-1)}=a ^{(-1)}\odot b ^{(-1)},  \quad (a \div b) ^{(-1)}= b \div
 a ,\quad a \geq e \Leftrightarrow a^{(-1)} \leq e, \quad a\leq b \Leftrightarrow b^{(-1)} \leq a^{(-1)}.
\end{equation}
Furthermore, we can define the concept of ($n$)-\emph{natural-power}.
 \begin{definition}\label{eq:def_n_power}\cite{CavalloDapuzzo}
Let $\mathcal{G}= (G, \odot, \leq)$  be an Alo-group and $n\in \mathbb{N}_{0}$. The ($n$)-\emph{natural-power} $a^{(n)}$ of $a\in G$ is
defined as follows:
$$
a^{(n)}= \left\{
            \begin{array}{ll}
              e,    \quad &\text{if } n=0 \\
             a^{(n-1)}\odot a,\quad & \text{if } n\geq
  1.
            \end{array}
          \right.
$$
\end{definition}
%

 Let $z\in \mathbb{Z}$; then the ($z$)-\emph{integer- power} $a^{(z)}$ of $a\in G$ is defined as follows \cite{CavalloDapuzzo}:
\begin{equation}     \label{eq:def_z_power}
a^{(z)}= \left\{
            \begin{array}{ll}
              a^{(n)}, \quad & \quad if \quad z=n\in \mathbb{N}_{0}\\
               (a^{(n)})^{(-1)}&  \quad if\quad  z=-n, \quad n\in \mathbb{N}.
            \end{array}
          \right.
\end{equation}

 An \emph{isomorphism } between two Alo-groups $\mathcal{G}= (G,
\odot, \leq)$ and $\mathcal{H}= (H, *, \leq)$ is a bijection $\phi:G\rightarrow H$
 that  is   both a lattice isomorphism and a group isomorphism, that is:
\begin{equation} 
\label{isoeq} a<b \Leftrightarrow \phi(a)< \phi(b)\quad and\quad \phi(a\odot
b) = \phi(a) * \phi(b);
\end{equation}
where
  $<$ is the strict simple order defined by \lq\lq$a<b \Leftrightarrow a\leq b \; and\; a\neq b$\rq\rq.


\subsubsection{ $\mathcal{G}$-norm and  $\mathcal{G}$-distance}
 \label{sec:distanceAloGrous}
%
By definition, an Alo-group   $\mathcal{G}$ is a
\emph{lattice ordered group} \cite{Birkoff}. Namely, there exists $ \max\{a, b \}$, for each  $a, b \in  G$. Thus, the existence of the max value  between two elements of $G$ and the existence of the inverse of each element of $G$ allows us to formulate the notions of $\mathcal{G}$-norm and $\mathcal{G}$-distance, which are generalizations to $\mathcal{G}$ of the usual concepts of norm and distance.

\begin{definition} \cite{CavalloDapuzzo} \label{norm}
Let $\mathcal{G}= (G, \odot, \leq)$  be an Alo-group. Then, the
function:
\begin{equation} || \cdot||_{\mathcal{G}} : a\in G
\rightarrow  ||a||_{\mathcal{G}} =  \max\{a, a^{(-1)} \}\in G
\end{equation}
is a $\mathcal{G}$-\emph{norm}, or a \emph{norm} on $\mathcal{G}$.
\end{definition}

\begin{proposition}\label{triangle}\cite{CavalloDapuzzo} The $\mathcal{G}$-norm satisfies the properties:
\begin{enumerate}
  \item $||a||_\mathcal{G}=||a^{(-1)}||_\mathcal{G}$;
  \item $ a\leq ||a||_\mathcal{G}$;
  \item $ ||a||_\mathcal{G}\geq e$;
   \item $||a||_\mathcal{G}= e\Leftrightarrow a=e$;
\item  $||a\odot b||_\mathcal{G}\leq  ||a||_\mathcal{G}\odot ||b||_\mathcal{G}.$
\end{enumerate}
\end{proposition}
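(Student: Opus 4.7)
The plan is to verify each of the five items in turn by direct computation from the definition $||a||_{\mathcal{G}}=\max\{a,a^{(-1)}\}$, using only the basic identities collected in equation (\ref{composition-1}) and the order-compatibility axiom (\ref{ordercons}).

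For item 1, I would simply observe that $(a^{(-1)})^{(-1)}=a$ by group axioms, so $||a^{(-1)}||_{\mathcal{G}}=\max\{a^{(-1)},a\}=||a||_{\mathcal{G}}$. Item 2 is immediate since the maximum of a two-element set dominates each element. For item 3, I would use the equivalence $a\geq e \Leftrightarrow a^{(-1)}\leq e$ from (\ref{composition-1}): exactly one of $a,a^{(-1)}$ lies in the upper cone $\{x:x\geq e\}$ (or both equal $e$), so $\max\{a,a^{(-1)}\}\geq e$. Item 4 is then a short two-way argument: if $a=e$, then $a^{(-1)}=e$ and the norm is $e$; conversely, if $||a||_{\mathcal{G}}=e$, then by item 2 both $a\leq e$ and $a^{(-1)}\leq e$, and the second inequality rewrites as $a\geq e$ via (\ref{composition-1}), forcing $a=e$.

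Item 5, the triangle inequality, is the only step that requires combining ingredients, but it is still short. The key idea is to apply item 2 twice and then use order compatibility. Specifically, from $a\leq ||a||_{\mathcal{G}}$ and $b\leq ||b||_{\mathcal{G}}$, two applications of (\ref{ordercons}) yield $a\odot b\leq ||a||_{\mathcal{G}}\odot ||b||_{\mathcal{G}}$. Similarly, from $a^{(-1)}\leq ||a^{(-1)}||_{\mathcal{G}}=||a||_{\mathcal{G}}$ (using item 1) and the analogous inequality for $b$, one obtains $a^{(-1)}\odot b^{(-1)}\leq ||a||_{\mathcal{G}}\odot ||b||_{\mathcal{G}}$; rewriting the left-hand side as $(a\odot b)^{(-1)}$ via (\ref{composition-1}) gives the second required bound. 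Taking the maximum of the two left-hand sides concludes the proof.

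I do not anticipate any genuine obstacle; the mild pitfall to watch for is to remember that order compatibility as stated in (\ref{ordercons}) only asserts the one-sided implication $a\leq b \Rightarrow a\odot c\leq b\odot c$, so in item 5 I must apply it twice sequentially (first multiplying the $a$-inequality by $b$ on the right, then multiplying the $b$-inequality by $||a||_{\mathcal{G}}$ on the right, and finally chaining the results via transitivity) rather than invoking a non-existent joint monotonicity principle.
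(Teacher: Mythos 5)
Your proof is correct. Note that the paper itself does not prove Proposition \ref{triangle}: it is quoted from the earlier work \cite{CavalloDapuzzo}, and no argument for it appears in the Appendix. Your verification is the standard one and matches the style the authors use for the interval analogue (Proposition \ref{Prop:propertiesNorm}), whose item 5 is likewise obtained by bounding each of the two candidates for the maximum separately and then combining; your care in applying the one-sided compatibility \eqref{ordercons} twice (together with commutativity) rather than assuming joint monotonicity is exactly right.
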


\begin{definition}\label{dis}\cite{CavalloDapuzzo}
 Let $\mathcal{G}= (G, \odot, \leq)$  be an Alo-group.
 Then, the operation
 \begin{displaymath}
\label{d} d:(a,b)\in G \times G\rightarrow d(a, b)\in G
\end{displaymath}
is a $\mathcal{G}$-\emph{metric} or  $\mathcal{G}$-\emph{distance} if:
 \begin{enumerate}
  \item $d(a, b)\geq e$;
  \item  $d(a, b)= e  \Leftrightarrow a=b$;
   \item  $d(a, b)=d(b, a)$;
  \item  $d(a, b) \leq d(a, c) \odot d(c, b)$.
\end{enumerate}
\end{definition}

\begin{proposition}\cite{CavalloDapuzzo} \label{dcirc}
Let $\mathcal{G}= (G, \odot, \leq)$  be an Alo-group. Then, the
operation
 \begin{equation}
d_{\mathcal{G}}:(a,b)\in G \times G \rightarrow d_{\mathcal{G}}(a, b)=||a\div b||_{\mathcal{G}}\in G
\end{equation}
 is a  $\mathcal{G}$-distance.
 \end{proposition}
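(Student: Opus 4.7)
The plan is to verify directly the four axioms of a $\mathcal{G}$-distance listed in Definition \ref{dis}, using exclusively the norm properties in Proposition \ref{triangle} together with the identities for $\div$ recalled in equation (\ref{composition-1}). Since $d_{\mathcal{G}}(a,b)$ is defined as a $\mathcal{G}$-norm, each axiom will translate into a corresponding norm property applied to the element $a\div b$.

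First, I would address axioms 1 and 2 together, as they follow immediately from properties 3 and 4 of Proposition \ref{triangle}: non-negativity $d_{\mathcal{G}}(a,b)=\|a\div b\|_{\mathcal{G}}\geq e$ is exactly property 3, while $d_{\mathcal{G}}(a,b)=e$ is equivalent (by property 4) to $a\div b=e$, which in turn is equivalent to $a=b$ since $\div$ comes from group inversion.

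Next I would handle symmetry (axiom 3). The key observation is the identity $(a\div b)^{(-1)}=b\div a$ recorded in (\ref{composition-1}). Combining this with property 1 of Proposition \ref{triangle} (invariance of the norm under inversion) gives
\begin{equation*}
d_{\mathcal{G}}(a,b)=\|a\div b\|_{\mathcal{G}}=\|(a\div b)^{(-1)}\|_{\mathcal{G}}=\|b\div a\|_{\mathcal{G}}=d_{\mathcal{G}}(b,a).
\end{equation*}

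The only step requiring a small algebraic manipulation is the triangle inequality (axiom 4), and I expect this to be the main, though still routine, obstacle. The idea is to produce a telescoping decomposition
\begin{equation*}
a\div b=a\odot b^{(-1)}=a\odot c^{(-1)}\odot c\odot b^{(-1)}=(a\div c)\odot(c\div b),
\end{equation*}
which relies on commutativity and the cancellation $c^{(-1)}\odot c=e$. Applying the subadditivity of the norm (property 5 of Proposition \ref{triangle}) to this factorization then yields
\begin{equation*}
d_{\mathcal{G}}(a,b)=\|(a\div c)\odot(c\div b)\|_{\mathcal{G}}\leq \|a\div c\|_{\mathcal{G}}\odot\|c\div b\|_{\mathcal{G}}=d_{\mathcal{G}}(a,c)\odot d_{\mathcal{G}}(c,b),
\end{equation*}
which completes the verification. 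No deeper structural argument is needed: the proposition is essentially a transcription, to the Alo-group setting, of the standard derivation of the metric induced by a norm on a commutative group.
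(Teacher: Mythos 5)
Your verification is correct and complete: all four axioms of Definition \ref{dis} follow from the norm properties in Proposition \ref{triangle} exactly as you describe, with the telescoping factorization $a\div b=(a\div c)\odot(c\div b)$ and subadditivity of the norm handling the triangle inequality. The paper does not reprove this proposition (it is recalled from the cited reference \cite{CavalloDapuzzo}), but your argument is the standard one and is essentially the proof given there.
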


%

Let  $\phi$ be an isomorphism between $\mathcal{G}= (G,
\odot, \leq)$ and $\mathcal{H}= (H, *, \leq)$, $g_1,g_2 \in G$ and $h_1,h_2\in H$; then, Cavallo and D'Apuzzo \cite{CavalloDapuzzo} prove that:
\begin{equation}
\label{eqisodis} d_{\mathcal{H}}(h_1, h_2) =
\phi(d_{\mathcal{G}}(\phi^{-1}(h_1), \phi^{-1}(h_2))), \quad
d_{\mathcal{G}}(g_1,g_2) = \phi^{-1} (d_{\mathcal{H}}(\phi(g_1), \phi(g_2))).
\end{equation}

\subsubsection{$\mathcal{G}$-mean in real continuous Alo-groups}
An Alo-group $\mathcal{G}= (G, \odot, \leq)$ is called \emph{continuous} if the operation $\odot$ is continuous \cite{CavalloDapuzzo}, and \emph{real} if  $G$ is a subset of the real line $\R$ and  $\leq$ is the weak order  on $G$ inherited from the  usual order on  $\R$. From now on, we will assume that $\mathcal{G}= (G, \odot, \leq)$ is a real continuous Alo-group, with $G$ an open interval.
%
%
%
%
%
%
%
%
%
Under these assumptions, 
the equation $x^{(n)}=a$ has a unique solution \cite{CavalloDapuzzo}; thus,  it is reasonable to consider the following notions of ($n$)-\emph{root} and  $\mathcal{G}$-\emph{mean}.
 \begin{definition}\cite{CavalloDapuzzo}For each $n\in \mathbb{N}$ and $a \in G$,
the ($n$)-\emph{root} of $a$, denoted by $a^{(\frac{1}{n})}$,  is the unique solution of the equation $x^{(n)}=a$, that is:
\begin{equation*}
\label{eq:n_root}\left( a^{(\frac{1}{n})} \right)^{(n)}=a.
\end{equation*}
\end{definition}
 \begin{definition}\cite{CavalloDapuzzo}\label{def:G_mean}
The $\mathcal{G}$-\emph{mean} $ m_{\mathcal{G}}( a_{1},a_{2}, ..., a_{n})$ of
the  elements $a_{1},a_{2}, ..., a_{n}$  of $ G$ is
\begin{equation}
\label{mean} m_{\mathcal{G}}( a_{1},a_{2}, ..., a_{n})=\begin{cases}
   a_{1} & \text{for n=1 }, \\
      \left( \bigodot^{n}_{i=1} a_{i} \right)^{(1/n)}& \text {for $n\geq
      2$}.\nonumber
\end{cases}
\end{equation}
\end{definition}
%


For each $q=\frac{m}{n}\in \Q$, with $m \in \mathbb{Z}$ and $n \in \mathbb{N}$,
and for each $a \in G$, the $(q)$-\emph{rational-power} $a^{(q)}$ is defined as follows  \cite{CavalloDapuzzoSquillanteIJIS3}:
\begin{equation}\label{def_a_el_q}
a^{(q)}= (a^{(m)})^{(\frac{1}{n})}.
\end{equation}

The following are examples of real continuous Alo-groups which will be relevant in the rest of the paper (see \cite{CavalloDapuzzoIJIS2,CavalloDapuzzoSquillanteIJIS3} for details):
\begin{description}

 \item [\textbf{Multiplicative Alo-group.}]  $\mathcal{R}^+=(\R^+, \cdot, \leq)$, where  $\R^+=]0, + \infty[$ and  $\cdot$ is
  the usual multiplication on $\R$. Thus,  the $\mathcal{R^+}$-mean operator is the geometric mean,
$$ m_{\mathcal{R^+}}\left( a_{1}, ..., a_{n} \right) =\left( \prod _{i=1}^{n}
 a_{i} \right)^{\frac{1}{n}}$$
and the $\mathcal{R^+}$-distance between $a$ and $b$ is $d_{\mathcal{R}^+}(a,b)= \max\left\{\frac{a}{b},\frac{b}{a} \right\} $.
  \item[\textbf{Additive Alo-group.}]$  \mathcal{R} = (\R, +, \leq)$, where  $\R=]-\infty, + \infty[$ and $+$ is  the usual addition   on $\R$. Thus,  the $\mathcal{\mathcal{R}}$-mean operator is the arithmetic mean,
$$ m_{\mathcal{R}}(
a_{1},..., a_{n})=\dfrac{\sum
_{i=1}^{n}a_{i}}{n}$$
and the $\mathcal{\mathcal{R}}$-distance between $a$ and $b$ is $d_{\mathcal{R}}(a,b)= \max \{a - b,b - a \} = |a-b|$.
%
     \item [\textbf{Fuzzy Alo-group.}]  $\mathcal{I}=(I, \otimes, \leq)$, where $I=]0,1[$ and $\otimes: ]0,1[^{2}\rightarrow ]0,1[$
     is the operation defined by
\begin{equation}\label{eq:fuzzy_operation}
a\otimes b=\frac{ab}{ab+(1-a)(1-b)}.
\end{equation}     
Thus, the $\mathcal{\mathcal{I}}$-mean operator is given by the following function \cite{CavalloDapuzzoSquillanteIJIS3}:
\begin{equation}
\label{eq:fuzzy_mean}
m_{\mathcal{I}}( a_{1}, ...,  a_{n})=\frac{\sqrt[n]{\prod_{i=1}^{n}a_{i}}}{\sqrt[n]{\prod_{i=1}^{n}a_{i}}
+ \sqrt[n]{\prod_{i=1}^{n}(1-a_{i})}}.
\end{equation}
The operation $\otimes$ is the restriction to $]0,1[^{2}$ of a widely known uninorm introduced by Yager and Rybalov \cite{YagerRybalov} and then studied by Fodor et al.\frenchspacing \cite{FodorYagerRybalov}. For this Alo-group, the $\mathcal{\mathcal{I}}$-distance between $a$ and $b$ is the following one:
\[
d_{\mathcal{I}}(a,b)= \max \left\{ \frac{a(1-b)}{a(1-b)+(1-a)b} , \frac{b(1-a)}{b(1-a)+(1-b)a}  \right\}.
\]

%
%
\end{description}

It was proven that for each pair $\mathcal{G}= (G, \odot, \leq)$ and $\mathcal{H}= (H, *, \leq)$ of real continuous Alo-groups with $G$ and $H$ open intervals, there exists an isomorphism between them
\cite{CavalloDapuzzo}.
 For example, the function
%
\begin{equation}
\label{eq:isomorphism_psi_01}
    h: x\in ]0, +\infty[ \mapsto \frac{x}{1+x}\in ]0,1[
\end{equation}
is an isomorphism between multiplicative Alo-group and fuzzy Alo-group.
Another example is the function
\begin{equation}
\label{eq:isomorphism_psi_02}
    g: x\in ] -\infty, +\infty[ \mapsto \frac{e^x}{1+e^x}\in ]0,1[,
\end{equation}
which is an isomorphism between the additive Alo-group and the fuzzy Alo-group.
\\ Moreover, let  $\phi$ be an isomorphism between $\mathcal{G}= (G,
\odot, \leq)$ and $\mathcal{H}= (H, *, \leq)$, $g_1,g_2 \ldots, g_n \in G$ and $h_1,h_2 \ldots, h_n \in H$; then, Cavallo and D'Apuzzo \cite{CavalloDapuzzo} prove that:
\begin{eqnarray}
 m_{\mathcal{G}}( g_{1}, g_{2}, ..., g_{n})  & = & \phi^{-1}\big(m_{\mathcal{H}}(\phi(g_{1}), \phi(g_{2}), ...,\phi(g_{n})) \big);  \nonumber\\
\label{eq:mean_isomorphism}  \\ m_{\mathcal{H}}( h_{1}, h_{2}, ..., h_{n}) & =
&\phi\big(m_{\mathcal{G}}(\phi^{-1}(h_{1}), \phi^{-1}(h_{2}), ...,\phi^{-1}(h_{n}))
\big). \nonumber \end{eqnarray}

We believe that it is important to stress that the use of Alo-groups and the definition of group isomorphisms between them is not a mere theoretical exercise. Alo-groups and their isomorphisms are \emph{necessary} to show the formal equivalence between different approaches. For instance, in his widely used textbook, Fraleigh \cite{Fraleigh2002} defines an isomorphism as \lq\lq the concept of two systems being structurally identical, that is, one being just like the other except for names\rq\rq.

\subsection{Pairwise Comparison Matrices over a real continuous Alo-group} \label{sec:PCM_over_Alo_groups}
 Quantitative pairwise comparisons are a  useful tool  for estimating  the relative weights  on a set  $X=\{x_{1},x_{2},..., x_{n}\}$ of  decision elements  such as  criteria or alternatives.  
 Pairwise comparisons can be modelled by a quantitative preference relation on $X$:  
$$\mathcal{A}:
   (x_{i}, x_{j})\in X\times X\rightarrow a_{ij}=\mathcal{A}(x_{i}, x_{j})\in G$$
where  $G$ is an open interval of $\R$ and $a_{ij}$ quantifies the preference intensity of $x_{i}$  over  $x_{j}$.  When the cardinality of $X$ is small, $\mathcal{A}$  can be   represented by a \emph{Pairwise Comparison Matrix} (\emph{PCM})
 \begin{equation}\label{pcm}
A = \; \bordermatrix{
~     & x_{1}  & x_{2}  & \cdots & x_{n} \cr
x_{1} & a_{11} & a_{12} & \dots  & a_{1n} \cr
x_{2} & a_{21} & a_{22} & \cdots & a_{2n} \cr
\vdots & \vdots & \vdots & \ddots & \vdots \cr
x_{n} & a_{n1} & a_{n2} & \cdots &  a_{nn}
}.
 \end{equation}

 \begin{definition}\label{def:dot_reciprocity}\cite{CavalloDapuzzo}
 A PCM $A=(a_{ij})$  is a  $\mathcal{G}$-\emph{reciprocal} if it verifies the condition:
  \begin{displaymath}
a_{ji} =a_{ij}^{(-1)} \quad \forall \; i,j\in \{1, \ldots, n\}.
\end{displaymath}
\end{definition}

Let $A=(a_{ij})$ be a $\mathcal{G}$-reciprocal PCM and $ (\sigma(1), \ldots, \sigma(n))$ a permutation of $(1,\ldots,n)$; then, by  Definition \ref{def:dot_reciprocity},  for each permutation  $\sigma$,  the following equalities hold true:
$$ a_{\sigma(j)\sigma(i)}=a_{\sigma(i)\sigma(j)}^{(-1)}~ \forall \; i,j\in \{1, \ldots, n\}$$ and, as a consequence,  $A^\sigma$ defined as follows:

\begin{equation}\label{eq:sigma_PCM}
   A^\sigma=\left(\begin{array}{cccc}
a_{\sigma(1)\sigma(1)} & a_{\sigma(1)\sigma(2)} & \cdots &
a_{\sigma(1)\sigma(n)} \\
a_{\sigma(2)\sigma(1)} & a_{\sigma(2)\sigma(2)} & \cdots &
a_{\sigma(2)\sigma(n)} \\
\vdots & \vdots & \ddots & \vdots \\
a_{\sigma(n)\sigma(1)} & a_{\sigma(n)\sigma(2)} & \cdots &
a_{\sigma(n)\sigma(n)}\end{array}\right) 
\end{equation}
is  a $\mathcal{G}$-reciprocal PCM too. 
 In other words, if we apply row-column permutations to  a $\mathcal{G}$-reciprocal PCM, then also the resulting matrix will be a $\mathcal{G}$-reciprocal PCM.



 \begin{definition}\label{def:dot_consistency}\cite{CavalloDapuzzo}
 $A=(a_{ij})$ is a $\mathcal{G}$-\emph{consistent} PCM, if   verifies the following condition:
\begin{equation}\label{eq:consistency}
a_{ik}= a_{ij}\odot a_{jk}\qquad\forall i, j, k \in \{1, \ldots, n\}.
\end{equation}
\end{definition}

\begin{proposition}\cite{CavalloDapuzzo}\label{prop:consistency_i<j<k}
Let  $A=(a_{ij})$  be $\mathcal{G}$-reciprocal PCM. Then, the following statements are equivalent:
 \begin{enumerate}
\item $A=(a_{ij})$ is a $\mathcal{G}$-consistent PCM;
\item $a_{ik}= a_{ij}\odot a_{jk}\qquad\forall i< j<k \in \{1, \ldots, n\}.$
\end{enumerate}

\end{proposition}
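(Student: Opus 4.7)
The plan is to deal with the trivial direction (1)$\Rightarrow$(2) in one line, since (2) is literally a restriction of (1), and then focus on the converse (2)$\Rightarrow$(1).

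For (2)$\Rightarrow$(1), I would proceed in two stages. First, I would dispose of the degenerate cases in which two of the indices $i,j,k$ coincide. From reciprocity applied to the diagonal, $a_{ii}=a_{ii}^{(-1)}$, which combined with the identity properties in (\ref{composition-1}) forces $a_{ii}=e$. Hence whenever two of $i,j,k$ are equal, the relation $a_{ik}=a_{ij}\odot a_{jk}$ collapses to a trivial identity. This leaves only the case in which $i,j,k$ are pairwise distinct.

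Second, with $i,j,k$ distinct, let $p<q<r$ denote the triple $\{i,j,k\}$ arranged in increasing order; the hypothesis (2) gives $a_{pr}=a_{pq}\odot a_{qr}$. The goal is to recover $a_{ik}=a_{ij}\odot a_{jk}$ for every one of the six permutations of $(p,q,r)$. The core idea is that $\mathcal{G}$-reciprocity together with commutativity of $\odot$ and the formulas in (\ref{composition-1}) allow us to pass freely between a consistency relation and its \emph{inverse}: taking the inverse of $a_{pr}=a_{pq}\odot a_{qr}$ yields $a_{rp}=a_{qp}\odot a_{rq}=a_{rq}\odot a_{qp}$, which is exactly the consistency condition for the reversed triple $(r,q,p)$. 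The four remaining permutations are handled by solving for a single factor: for instance $a_{pq}=a_{pr}\odot a_{qr}^{(-1)}=a_{pr}\odot a_{rq}$ handles $(p,r,q)$, and the analogous manipulations, each one step of algebra inside the Abelian group, handle $(q,p,r)$, $(q,r,p)$, and $(r,p,q)$.

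The ``obstacle'' is really only bookkeeping: six orderings to check, naturally split into three pairs that are exchanged by taking group inverses. No algebraic difficulty arises because $\mathcal{G}$ is Abelian, so the order of factors on the right-hand side of the consistency relation is irrelevant, and the machinery in (\ref{composition-1}) together with $\mathcal{G}$-reciprocity provides all the rearrangements needed. An alternative, more compact write-up would observe that the row-column permutation $A^\sigma$ of a $\mathcal{G}$-reciprocal PCM is again $\mathcal{G}$-reciprocal (as recalled in the discussion of (\ref{eq:sigma_PCM})), and use this to reduce each distinct triple to the ordered case via an appropriate $\sigma$, combined with the inverse-pairing argument above.
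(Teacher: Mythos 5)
The paper does not prove this proposition at all: it is imported verbatim from the cited reference \cite{CavalloDapuzzo}, and no argument for it appears in the Appendix, so there is no in-paper proof to compare against. Your argument is correct and is the standard one: the degenerate cases follow from $a_{ii}=e$ (which, as you say, follows from $a_{ii}=a_{ii}^{(-1)}$, the last equivalences in \eqref{composition-1} and the totality of $\leq$ --- note that in a general Abelian group $a\odot a=e$ would not force $a=e$, so the order structure is genuinely used here), while the case $i=k$ additionally needs reciprocity $a_{ji}=a_{ij}^{(-1)}$ rather than only the diagonal identity, a detail worth making explicit; and the six orderings of a distinct triple all reduce to the ordered one by solving for a factor and taking inverses in the Abelian group, exactly as you describe. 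The only weak point is the closing aside: reducing a permuted triple to the ordered case via $A^{\sigma}$ is not obviously non-circular, since whether condition (2) transfers from $A$ to $A^{\sigma}$ is precisely what is at stake (compare the discussion of Liu's $[\mathcal{G}]$-consistency in Section \ref{sec:consistency}, where the analogous transfer fails). Since your main argument is self-contained and complete, this aside is harmless but best omitted.
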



 \begin{definition}\label{def:consistency_index} \cite{CavalloDapuzzo}   Let $A=(a_{ij})$ be a $\mathcal{G}$-reciprocal PCM of order  $n \geq3$. Then, its  $\mathcal{G}$-consistency index is:
     $$
 I_{\mathcal{G}}(A)=
     \left( \bigodot_{i<j<k}
d_{\mathcal{G}}(a_{ik}, a_{ij}\odot
	a_{jk})\right)^{(\frac{1}{|T|})},
$$
with $T=\{(i,j,k):i<j<k\}$ and   $|T|=\frac{n(n-2)(n-1)}{6}$ its cardinality.
 \end{definition}

We stress that,  in  Definition \ref{def:consistency_index}, $|T| \in \N$, with $|T|\geq 1$, and  the $\mathcal{G}$-consistency index is a $\mathcal{G}$-mean (see Definition \ref{def:G_mean}) of $|T|$ $\mathcal{G}$-distances from  $\mathcal{G}$-consistency.
Moreover, 
let  $\phi$ be an isomorphism between $\mathcal{G}= (G,
\odot, \leq)$ and $\mathcal{H}= (H, *, \leq)$, $A'=\phi(A)=(\phi(a_{ij}))$; then, Cavallo and D'Apuzzo \cite{CavalloDapuzzo} prove that:
\begin{equation}
I_{\mathcal{H}}(A')=\phi(I_{\mathcal{G}}(A)).
\end{equation}

\begin{proposition}\cite{CavalloDapuzzo}\label{prop:unique_element_for_consistency}
Let  $A=(a_{ij})$  be $\mathcal{G}$-reciprocal PCM.  Then, the following statements hold:
\begin{enumerate}
\item $I_{\mathcal{G}}(A) \geq e$;
\item $I_{\mathcal{G}}(A) =e   \Leftrightarrow A$ is $\mathcal{G}$-consistent;
\item $I_{\mathcal{G}}(A)=I_{\mathcal{G}}(A^{\sigma})$ for all permutations $\sigma$.
\end{enumerate}
\end{proposition}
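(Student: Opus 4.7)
The plan is to verify the three claims in turn, exploiting the fact that $I_{\mathcal{G}}(A)$ is the $\mathcal{G}$-mean of the $|T|$ triple-defects $d_{\mathcal{G}}(a_{ik}, a_{ij}\odot a_{jk})$ indexed by $i<j<k$. For item 1, each defect is $\geq e$ by property 1 of Definition \ref{dis}; the order compatibility (\ref{ordercons}) implies that the $\odot$-product of elements $\geq e$ is itself $\geq e$, and the $(1/|T|)$-root preserves this inequality (if the root were strictly below $e$, iterated use of (\ref{ordercons}) would force its $|T|$-th power to lie strictly below $e$, contradicting the bound on the product). Hence $I_{\mathcal{G}}(A)\geq e$. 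Item 2 is easy in one direction: $\mathcal{G}$-consistency yields $a_{ik}=a_{ij}\odot a_{jk}$ for all $i<j<k$, so every defect equals $e$ and the $\mathcal{G}$-mean equals $e$. Conversely, if $I_{\mathcal{G}}(A)=e$, then raising to the $|T|$-th power gives $\bigodot_{i<j<k} d_{\mathcal{G}}(a_{ik}, a_{ij}\odot a_{jk})=e$; combined with each factor being $\geq e$, (\ref{ordercons}) forces every factor to equal $e$, since any strict inequality at a single factor would push the product strictly above $e$. Property 2 of Definition \ref{dis} then yields $a_{ik}=a_{ij}\odot a_{jk}$ for every $i<j<k$, and Proposition \ref{prop:consistency_i<j<k} translates this into full $\mathcal{G}$-consistency.

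Item 3 is the most delicate. Since $\odot$ is commutative, the $\mathcal{G}$-mean is symmetric in its arguments, so $I_{\mathcal{G}}$ depends only on the multiset of triple-defects. As $(i,j,k)$ ranges over triples with $i<j<k$, the image $(\sigma(i),\sigma(j),\sigma(k))$ hits each unordered index-triple of $\{1,\ldots,n\}$ exactly once, although possibly in a nontrivial order. The claim therefore reduces to showing that the defect is invariant under relabelling of an unordered triple: for each ordering $(i',j',k')$ of $\{p,q,r\}$ with $p<q<r$, $d_{\mathcal{G}}(a_{i'k'}, a_{i'j'}\odot a_{j'k'}) = d_{\mathcal{G}}(a_{pr}, a_{pq}\odot a_{qr})$. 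Writing $\alpha=a_{pq}$, $\beta=a_{qr}$, $\gamma=a_{pr}$, $\mathcal{G}$-reciprocity combined with commutativity of $\odot$ rewrites each permuted defect in the form $||u||_{\mathcal{G}}$ where $u$ equals $\gamma\div(\alpha\odot\beta)$ or its $\odot$-inverse; property 1 of Proposition \ref{triangle}, together with the identity $(x\div y)^{(-1)}=y\div x$ from (\ref{composition-1}), then collapses both possibilities to the common value $||\gamma\div(\alpha\odot\beta)||_{\mathcal{G}}=d_{\mathcal{G}}(a_{pr}, a_{pq}\odot a_{qr})$.

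The main obstacle is the bookkeeping in item 3: one must inspect each of the six orderings of $\{p,q,r\}$ and verify the explicit reciprocity cancellations producing that common value. Since (\ref{composition-1}) and Proposition \ref{triangle} are already available, this is routine rather than conceptually challenging; the conceptual content lies in combining the symmetry of the $\mathcal{G}$-mean with the reciprocity-induced invariance of the consistency defect under triple-reordering.
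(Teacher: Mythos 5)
Your proof is correct. Note, however, that the paper itself offers no proof of this proposition: it is recalled from the cited reference (Cavallo and D'Apuzzo) and does not appear in the Appendix, so there is no in-paper argument to compare against. Your reconstruction is sound on all three points: the positivity and the characterisation of equality follow exactly as you say from property 1 of Definition \ref{dis}, the order-compatibility \eqref{ordercons} (which, by cancellativity, also preserves strict inequalities, justifying both the root step in item 1 and the ``one strict factor pushes the product above $e$'' step in item 2), and Proposition \ref{prop:consistency_i<j<k}. For item 3, your reduction is the right one: reciprocity rewrites each permuted defect as the $\mathcal{G}$-norm of one of the two cyclic products $a_{pq}\odot a_{qr}\odot a_{rp}$ or its inverse $a_{pr}\odot a_{rq}\odot a_{qp}$, and $||u||_{\mathcal{G}}=||u^{(-1)}||_{\mathcal{G}}$ collapses all six orderings to a single value, after which commutativity of $\odot$ makes the $\mathcal{G}$-mean depend only on the multiset of defects. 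This matches the standard argument for this result in the source reference.
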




\section{Intervals over a real continuous Alo-group} \label{sec:Interval_arithmetics}
In this section, by respecting standard rules of interval arithmetic \cite{Cabrer20141623,RAMON2009},
 we extend interval arithmetic to work on a real continuous Alo-group $\mathcal{G}=(G,\odot,\leq)$, with $G$ an open interval of $\R$.
 For notational convenience, let  $[G]$ be the set of closed intervals over $G$, that is:
\begin{equation}\label{eq:interval_on_G}
 [G]=\{\tilde{a}=[a^-,a^+]|  a^-,a^+ \in G, \; a^- \leq a^+\}.
\end{equation}
The subset of all \textit{singleton intervals (points)} is denoted by
$[G]_p$, that is:
\begin{equation}
[G]_p=\{\tilde{a}=[a^-,a^+] \in  [G]|   a^- =a^+\}.
\end{equation}
Of course, if $ \tilde{a} \in [G]_p$ then $\tilde{a} $ degenerates in an element of $G$.
Equality relation on $[G]$ is defined as follows:
\begin{equation} \label{eq:equality_intervals}
\tilde{a}=\tilde{b} \Leftrightarrow  a^-=b^- \text{ and } a^+=b^+.
\end{equation}
Following \cite{RAMON2009} and \cite{Dawood}, for each $\tilde{a}=[a^-,a^+] \in   [G]$, we denote with:
\begin{equation}\label{eq:reciprocalInterval}
\tilde{a}^{(-1)}=[(a^+)^{(-1)},(a^-)^{(-1)}]
\end{equation}    
the reciprocal interval of  $\tilde{a}$; of course, $\tilde{a}^{(-1)} \in   [G]$  because, by the last equivalence in \eqref{composition-1}, $(a^+)^{(-1)} \leq (a^-)^{(-1)}$.\\
Let us consider 
 $\tilde{a}=[a^{-},a^{+}]$ and $ \tilde{b}=[b^{-},b^{+}] \in   [G]$;
then we can borrow the definition of binary operation on intervals and set: 
\begin{equation}\label{eq:interval_operation}
 \tilde{a} \odot_ { [G]} \tilde{b}=\{ a \odot b |\, a \in \tilde{a}, b \in \tilde{b}\}
\end{equation}
and consequently
\begin{equation} \label{eq:inverseOperationIntervals}
\tilde{a} \div_ { [G]}  \tilde{b}= \tilde{a} \odot_ { [G]}  \tilde{b}^{(-1)}.
\end{equation}


%
The following theorem provides a further representations of $\tilde{a} \odot_ { [G]}  \tilde{b}$ and $\tilde{a} \div _ { [G]} \tilde{b}$. Its main scope is that of providing closed forms for the operations $\odot_{ [G]}$ and $\div_{ [G]}$. This will help simplify the operations and derive results in closed form.
\begin{theorem} \label{Theorem_productIntervals}
Let  $\tilde{a}, \tilde{b} \in  [G] $; then, the following equalities hold:
\begin{align*}
\tilde{a} \odot_ { [G]}  \tilde{b}&=[a^- \odot b^-, a^+ \odot b^+],\\
\tilde{a} \div _ { [G]} \tilde{b}&=[a^- \div b^+, a^+ \div b^-].
\end{align*}
\end{theorem}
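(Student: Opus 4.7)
The plan is to prove the two equalities by reducing them both to the first one, and to prove the first one by double inclusion.

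For the inclusion $\tilde{a} \odot_{[G]} \tilde{b} \subseteq [a^- \odot b^-, a^+ \odot b^+]$, I would take an arbitrary element $x = a \odot b$ with $a^- \leq a \leq a^+$ and $b^- \leq b \leq b^+$, and apply the order-preservation property (\ref{ordercons}) twice, using commutativity of $\odot$: from $a^- \leq a$ we get $a^- \odot b \leq a \odot b$, and from $b^- \leq b$ we get $a^- \odot b^- \leq a^- \odot b$, so $a^- \odot b^- \leq a \odot b$; the upper bound $a \odot b \leq a^+ \odot b^+$ is obtained symmetrically. This immediately gives $x \in [a^- \odot b^-, a^+ \odot b^+]$ and also shows that $a^- \odot b^- \leq a^+ \odot b^+$, so this set is a legitimate element of $[G]$.

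For the reverse inclusion I would exploit the standing hypothesis that $\mathcal{G}$ is a \emph{real continuous} Alo-group with $G$ an open interval of $\R$. The rectangle $[a^-,a^+]\times[b^-,b^+]$ is then a connected (even compact and convex) subset of $G\times G$, and the operation $\odot$ is continuous by assumption, so its image under $(a,b)\mapsto a\odot b$ is a connected compact subset of $\R$, hence a closed interval. The previous paragraph shows that $a^- \odot b^-$ and $a^+ \odot b^+$ belong to this image and bound it from below and above respectively, so the image is exactly $[a^- \odot b^-, a^+ \odot b^+]$.

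The formula for $\tilde{a}\div_{[G]}\tilde{b}$ then follows almost for free: by definition $\tilde{a}\div_{[G]}\tilde{b}=\tilde{a}\odot_{[G]}\tilde{b}^{(-1)}$, and by \eqref{eq:reciprocalInterval} we have $\tilde{b}^{(-1)}=[(b^+)^{(-1)},(b^-)^{(-1)}]$. Applying the formula just proved with $\tilde{b}^{(-1)}$ in place of $\tilde{b}$ gives $[a^-\odot(b^+)^{(-1)},\,a^+\odot(b^-)^{(-1)}]$, which is exactly $[a^-\div b^+,\,a^+\div b^-]$ by the definition of $\div$.

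The only step I expect to be delicate is the reverse inclusion for $\odot_{[G]}$: the algebraic identity (\ref{ordercons}) alone gives only that the image sits inside the target interval, and surjectivity onto the whole interval genuinely requires the topological assumptions on $\mathcal{G}$. A purely order-theoretic alternative would be to apply the intermediate value theorem first to $a\mapsto a\odot b^-$ on $[a^-,a^+]$ to realize any target $y\in[a^-\odot b^-,\,a^+\odot b^-]$, and, when $y>a^+\odot b^-$, to $b\mapsto a^+\odot b$ on $[b^-,b^+]$; but this is just the same continuity argument spelled out in coordinates.
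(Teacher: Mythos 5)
Your proof is correct and follows essentially the same route as the paper's: monotonicity of $\odot$ (from \eqref{ordercons}) pins down $a^-\odot b^-$ and $a^+\odot b^+$ as the extremes, continuity of $\odot$ on the compact connected rectangle $[a^-,a^+]\times[b^-,b^+]$ guarantees every intermediate value is attained, and the $\div_{[G]}$ formula is obtained by substituting $\tilde{b}^{(-1)}$ into the first identity via \eqref{eq:inverseOperationIntervals}. Your double-inclusion write-up merely makes explicit the topological step that the paper compresses into \lq\lq $\odot$ assumes all values between minimum and maximum\rq\rq.
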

\begin{proposition}\label{Prop:noInverse}
The following assertions hold:
\begin{enumerate}
\item $[e,e]\in   [G]$ is the identity  with respect to $\odot_ {[G]}$;
\item  $\tilde{a} \in   [G]$ has inverse  with respect to $\odot_ {[G]}$ if and only if $\tilde{a} \in  [G]_p$.
\end{enumerate}
\end{proposition}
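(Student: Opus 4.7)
The plan is to reduce both claims to pointwise computations in the base Alo-group $\mathcal{G}$ via the closed-form expressions supplied by Theorem \ref{Theorem_productIntervals}, so that essentially no interval-specific reasoning is needed beyond bookkeeping of the endpoints.

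For item 1, I would verify directly that for any $\tilde{a}=[a^-,a^+]\in[G]$, Theorem \ref{Theorem_productIntervals} gives
$\tilde{a}\odot_{[G]}[e,e]=[a^-\odot e,\,a^+\odot e]=[a^-,a^+]=\tilde{a}$,
using that $e$ is the identity of $(G,\odot)$; commutativity of $\odot_{[G]}$ handles the other side. One should also note that $[e,e]\in[G]$, which is immediate since $e\le e$.

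For the \emph{if} direction of item 2, the computation is a one-liner: for $\tilde{a}=[a,a]\in[G]_p$, the singleton $[a^{(-1)},a^{(-1)}]$ lies in $[G]_p\subset[G]$ and, again by Theorem \ref{Theorem_productIntervals}, satisfies $[a,a]\odot_{[G]}[a^{(-1)},a^{(-1)}]=[a\odot a^{(-1)},\,a\odot a^{(-1)}]=[e,e]$.

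For the \emph{only if} direction, I would assume an inverse $\tilde{b}=[b^-,b^+]\in[G]$ exists and invoke Theorem \ref{Theorem_productIntervals} to force the endpoint equalities $a^-\odot b^-=e$ and $a^+\odot b^+=e$, so $b^-=(a^-)^{(-1)}$ and $b^+=(a^+)^{(-1)}$. The admissibility requirement $b^-\le b^+$ built into the definition \eqref{eq:interval_on_G} then reads $(a^-)^{(-1)}\le(a^+)^{(-1)}$, which by the order-reversing identity in \eqref{composition-1} is equivalent to $a^+\le a^-$. Combined with $a^-\le a^+$ we conclude $a^-=a^+$, i.e.\ $\tilde{a}\in[G]_p$. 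There is no deep obstacle here: the only subtle point is recognizing that it is precisely the admissibility constraint $\tilde{b}\in[G]$, together with the order-reversing behavior of inversion in an Alo-group, that collapses the interval to a singleton; without this constraint, the formal endpoint-wise inverse always exists as a pair in $G\times G$.
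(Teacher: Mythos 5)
Your proof is correct and follows essentially the same route as the paper's: item 1 and the ``if'' direction are the same direct computations via Theorem \ref{Theorem_productIntervals}, and for the ``only if'' direction both arguments extract the endpoint equations, apply the order-reversing property of inversion from \eqref{composition-1}, and use the admissibility constraint $\cdot^-\le\cdot^+$ to collapse the interval to a singleton (the paper happens to derive $b^-=b^+$ first and then $a^-=a^+$, while you conclude $a^-=a^+$ directly, which is an immaterial difference).
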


\begin{example}
From the previous proposition we know that  if $\tilde{a} \not \in   [G]_p$, then $\tilde{a}^{(-1)}$ in \eqref{eq:reciprocalInterval} is not its inverse. 
\\ Let us consider the multiplicative Alo-group; then, e.g. we have $[2,4] \odot_ {[\mathcal{R}^{+}]}  [1/4,1/2] =[1/2,2]  \neq [1,1]$. \\
Let us consider the additive Alo-group; then, e.g. we have $[2,4] \odot_ {[\mathcal{R}]} [-4,-2] =[-2,2] \neq [0,0] $. \\
Let us consider the fuzzy Alo-group; then, e.g. we have $[0.6,0.7] \odot_ {[\mathcal{I}]} [0.3,0.4] =[0.39, 0.61] \neq [0.5,0.5] $. 
\end{example}

A strict partial order on $[G]$ is defined as follows:
\begin{equation} \label{eq:partial_order_intervals}
\tilde{a}  <_ {[G] }\tilde{b} \Leftrightarrow a^+<b^-;
\end{equation}
thus, we set 
\begin{equation} \label{eq:partial_weak_order_intervals}
\tilde{a}  \leq_ {[G] }\tilde{b} \Leftrightarrow \tilde{a}=\tilde{b} \text{ or } \tilde{a}  <_ {[G] }\tilde{b}.
\end{equation}

It is important to note that, as one should expect, the real case is just an instance of the interval-valued case when the intervals are singletons. Hence, all the results obtained in the interval-valued case are compatible with, and apply to, the real valued case as well. For sake of precision, the following theorem stipulates this connection in the form of a isomorphism between Alo-groups.

\begin{theorem} \label{Theorem:isomorphism_intevals_Alo_group_G}
$[\mathcal{G}]_p= ([G]_p, \odot_ {[{G}]}, \leq_{[G]} )$ is an Alo-group isomorphic to $\mathcal{G}=(G, \odot, \leq)$.
\end{theorem}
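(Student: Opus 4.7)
The plan is to exhibit an explicit isomorphism $\phi\colon G\to [G]_p$ and then transport the Alo-group structure of $\mathcal{G}$ along it. The natural candidate is
\[
\phi:a\in G\longmapsto [a,a]\in [G]_p.
\]
Bijectivity is immediate from the definition of $[G]_p$ and from the equality relation \eqref{eq:equality_intervals}: every singleton interval is of the form $[a,a]$ for a unique $a\in G$, and $[a,a]=[b,b]$ forces $a=b$.

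Next I would check that $\phi$ is a group homomorphism, which reduces to a one-line application of Theorem \ref{Theorem_productIntervals}: for every $a,b\in G$,
\[
\phi(a)\odot_{[G]}\phi(b)=[a,a]\odot_{[G]}[b,b]=[a\odot b,\,a\odot b]=\phi(a\odot b).
\]
In particular $\odot_{[G]}$ is closed on $[G]_p$, so the restriction is indeed a binary operation on $[G]_p$. Closure together with the homomorphism identity immediately implies that $([G]_p,\odot_{[G]})$ is an Abelian group: associativity, commutativity, the identity $[e,e]=\phi(e)$, and inverses $\phi(a)^{(-1)}=[a^{(-1)},a^{(-1)}]$ (which agrees with \eqref{eq:reciprocalInterval} on singletons) all transfer from $\mathcal{G}$ via $\phi$; equivalently, on $[G]_p$ Proposition \ref{Prop:noInverse} guarantees the existence of the inverse with respect to $\odot_{[G]}$.

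For the order part I would unfold \eqref{eq:partial_order_intervals}–\eqref{eq:partial_weak_order_intervals} on singletons: given $a\neq b$,
\[
\phi(a)<_{[G]}\phi(b)\iff a^{+}=a<b=b^{-}\iff a<b,
\]
and combining with the equality case gives $\phi(a)\leq\phi(b)\Leftrightarrow a\leq b$. Hence $\leq_{[G]}$ restricted to $[G]_p$ is a weak (in fact linear) order and $\phi$ is a lattice isomorphism in the sense of \eqref{isoeq}. Finally, the order-compatibility axiom \eqref{ordercons} for $[\mathcal{G}]_p$ follows at once from that of $\mathcal{G}$: if $\phi(a)\leq_{[G]}\phi(b)$ then $a\leq b$, hence $a\odot c\leq b\odot c$, and applying $\phi$ again yields $\phi(a)\odot_{[G]}\phi(c)\leq_{[G]}\phi(b)\odot_{[G]}\phi(c)$.

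There is no serious obstacle here; the only point that requires a moment of care is to make sure that the restriction of $\leq_{[G]}$ to $[G]_p$ really behaves as $\leq$ on $G$, because the strict order $<_{[G]}$ is defined by the strict inequality $a^{+}<b^{-}$ which on non-degenerate intervals is not the negation of equality. On singletons, however, $a^{+}<b^{-}$ collapses to $a<b$, and this coincidence is exactly what makes the isomorphism work.
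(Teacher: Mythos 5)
Your proposal is correct and follows essentially the same route as the paper's proof: the map $a\mapsto[a,a]$, the closed form of $\odot_{[G]}$ from Theorem \ref{Theorem_productIntervals} to get the homomorphism property, Proposition \ref{Prop:noInverse} for inverses on singletons, and the collapse of $\leq_{[G]}$ to $\leq$ on $[G]_p$ for the order compatibility. The only cosmetic difference is that you transport the group axioms along the bijection while the paper verifies associativity and commutativity of $\odot_{[G]}$ directly; your closing remark about the strict order $a^{+}<b^{-}$ degenerating correctly on singletons is a nice explicit observation that the paper leaves implicit.
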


Since the proof of the previous theorem implicitly shows that $\odot_ {[G]}$ is a monoid operation for the Abelian monoid $[\mathcal{G}]=(  [G], \odot_ {[G]} )$, from now on, we will use $\odot_ {[\mathcal{G}]}$ instead of $\odot_ {[G]}$.
\\
\\
The distance between two real numbers in an Alo-group was already defined by Cavallo and D'Apuzzo \cite{CavalloDapuzzo} and here recalled in Proposition \ref{dcirc}. Now we shall extend this proposal to the more general case of intervals. First, we propose and study a notion of $[\mathcal{G}]$-norm, that is the generalization to intervals of the concept of $\mathcal{G}$-norm in Definition \ref{norm}.

\begin{definition} \label{norm_intervals}
The $[\mathcal{G}]$-\emph{norm} on $[\mathcal{G}]$ is given by the following function:
\begin{equation*}  || \cdot||_{[\mathcal{G}]} : \tilde{a}\in [G]
\rightarrow  || \tilde{a}||_{[\mathcal{G}]} = \max\{||a^-||_\mathcal{G}, ||a^+||_\mathcal{G}\} \in G.
\end{equation*}
\end{definition}

Similarly to Proposition \ref{triangle}, we provide the following properties of  $[\mathcal{G}]$-norm:
\begin{proposition}\label{Prop:propertiesNorm} The $[\mathcal{G}]$-norm satisfies the following properties:
\begin{enumerate}
  \item $||\tilde{a}||_{[\mathcal{G}]}=||\tilde{a}^{(-1)}||_{[\mathcal{G}]}$;
  \item $ a^-, a^+\leq ||\tilde{a}||_{[\mathcal{G}]}$;
  \item $ ||\tilde{a}||_{[\mathcal{G}]}\geq e$;
   \item $||\tilde{a}||_{[\mathcal{G}]}= e\Leftrightarrow a^-=a^+=e$;
\item  $||\tilde{a}\odot_ {[\mathcal{G}]} \tilde{b}||_{[\mathcal{G}]} \leq  ||\tilde{a}||_{[\mathcal{G}]}\odot ||\tilde{b}||_{[\mathcal{G}]}.$
\end{enumerate}
\end{proposition}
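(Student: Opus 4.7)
The plan is to reduce each of the five items to the corresponding pointwise property of $\mathcal{G}$-norm recorded in Proposition \ref{triangle}, exploiting the fact that $||\tilde{a}||_{[\mathcal{G}]}$ is by definition the maximum of $\mathcal{G}$-norms at the two endpoints and that the operation $\odot_{[\mathcal{G}]}$ has the endpoint-wise form supplied by Theorem \ref{Theorem_productIntervals}. Thus most items become short bookkeeping arguments and only item (5), the triangle-type inequality, requires a small combination of tools.

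For item (1) I would expand $\tilde{a}^{(-1)}=[(a^+)^{(-1)},(a^-)^{(-1)}]$ from \eqref{eq:reciprocalInterval}, so that
\[
||\tilde{a}^{(-1)}||_{[\mathcal{G}]}=\max\{\,||(a^+)^{(-1)}||_{\mathcal{G}},\,||(a^-)^{(-1)}||_{\mathcal{G}}\,\},
\]
and apply Proposition \ref{triangle}.1 at each endpoint to rewrite this as $\max\{||a^-||_{\mathcal{G}},||a^+||_{\mathcal{G}}\}=||\tilde{a}||_{[\mathcal{G}]}$. For item (2) I would observe that by Proposition \ref{triangle}.2 we have $a^-\leq||a^-||_{\mathcal{G}}$ and $a^+\leq||a^+||_{\mathcal{G}}$, and since each of these is dominated by their maximum, both $a^-$ and $a^+$ are $\leq||\tilde{a}||_{[\mathcal{G}]}$. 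Items (3) and (4) follow at once: Proposition \ref{triangle}.3 gives $||a^{\pm}||_{\mathcal{G}}\geq e$ so the maximum is $\geq e$; and if the maximum equals $e$ then both endpoint norms equal $e$, whence by Proposition \ref{triangle}.4 we get $a^-=a^+=e$, and conversely.

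The main step is item (5). Here the idea is to first invoke Theorem \ref{Theorem_productIntervals} to write
\[
\tilde{a}\odot_{[\mathcal{G}]}\tilde{b}=[a^-\odot b^-,\,a^+\odot b^+],
\]
so that
\[
||\tilde{a}\odot_{[\mathcal{G}]}\tilde{b}||_{[\mathcal{G}]}=\max\{\,||a^-\odot b^-||_{\mathcal{G}},\,||a^+\odot b^+||_{\mathcal{G}}\,\}.
\]
Next, applying Proposition \ref{triangle}.5 at each endpoint gives $||a^-\odot b^-||_{\mathcal{G}}\leq||a^-||_{\mathcal{G}}\odot||b^-||_{\mathcal{G}}$ and similarly for the $+$ endpoint. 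Finally, using the monotonicity axiom \eqref{ordercons} of the Alo-group together with $||a^{\pm}||_{\mathcal{G}}\leq||\tilde{a}||_{[\mathcal{G}]}$ and $||b^{\pm}||_{\mathcal{G}}\leq||\tilde{b}||_{[\mathcal{G}]}$, both endpoint bounds can be majorised by the common quantity $||\tilde{a}||_{[\mathcal{G}]}\odot||\tilde{b}||_{[\mathcal{G}]}$; taking the maximum yields the desired inequality.

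The only point requiring a little care is this last step, because the monotonicity in \eqref{ordercons} is stated for $\odot$ in one argument only, so I would apply it twice (first to replace $||a^-||_{\mathcal{G}}$ by $||\tilde{a}||_{[\mathcal{G}]}$, then $||b^-||_{\mathcal{G}}$ by $||\tilde{b}||_{[\mathcal{G}]}$, using commutativity in between), and similarly at the $+$ endpoint. No further subtlety is expected.
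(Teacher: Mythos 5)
Your proposal is correct and follows essentially the same route as the paper: items (1)--(4) are reduced endpoint-by-endpoint to Proposition \ref{triangle}, and item (5) uses Theorem \ref{Theorem_productIntervals} to get the closed form $[a^-\odot b^-,a^+\odot b^+]$, then the pointwise triangle inequality, then the bound $\max\{x_1\odot y_1,x_2\odot y_2\}\leq\max\{x_1,x_2\}\odot\max\{y_1,y_2\}$ (which the paper states directly and you justify via two applications of \eqref{ordercons} and commutativity). No gaps.
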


We are now ready to extend the concept of $\mathcal{G}$-distance to the interval-valued case and we call it $[\mathcal{G}]$-distance.

\begin{definition}\label{def:dis_intervals}
The  function 
 \begin{displaymath}
m:(\tilde{a},\tilde{b}) \in [G] \times [G] \rightarrow m(\tilde{a},\tilde{b}) \in G
\end{displaymath}
is a $[\mathcal{G}]$-\emph{metric} or  $[\mathcal{G}]$-\emph{distance} if:
 \begin{enumerate}
	\item $m(\tilde{a},\tilde{b}) \geq e$;
	\item $m(\tilde{a},\tilde{b}) = e \Leftrightarrow \tilde{a}=\tilde{b}$:
	\item $m(\tilde{a},\tilde{b}) = m(\tilde{b},\tilde{a})$;
	\item $m(\tilde{a},\tilde{b}) \leq m(\tilde{a},\tilde{c}) \odot m(\tilde{c},\tilde{b})$.
\end{enumerate}
\end{definition}
With the following proposition, we introduce a function acting as a $[\mathcal{G}]$-distance.
%
\begin{proposition}\label{prop:dis_intervals}
 The
function
 \begin{equation*}
d_{[\mathcal{G}]}:(\tilde{a}, \tilde{b})\in  [G] \times [G]\rightarrow d_{[\mathcal{G}]}(\tilde{a}, \tilde{b})=  ||[a^{-} \div b^{-},a^{+} \div b^{+}]||_{[\mathcal{G}]}\in G
\end{equation*}
 is a  $[\mathcal{G}]$-distance.
 \end{proposition}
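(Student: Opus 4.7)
The plan is to observe that the proposed map can be rewritten coordinate-wise as
\[
d_{[\mathcal{G}]}(\tilde{a},\tilde{b}) \;=\; \max\!\bigl\{\,\|a^{-}\div b^{-}\|_{\mathcal{G}},\; \|a^{+}\div b^{+}\|_{\mathcal{G}}\bigr\} \;=\; \max\!\bigl\{\,d_{\mathcal{G}}(a^{-},b^{-}),\, d_{\mathcal{G}}(a^{+},b^{+})\bigr\},
\]
where $d_{\mathcal{G}}$ is the distance from Proposition~\ref{dcirc}. Having unmasked this structure, three of the four axioms of Definition~\ref{def:dis_intervals} are immediate. For property~1, Proposition~\ref{Prop:propertiesNorm}(3) already gives $\|\cdot\|_{[\mathcal{G}]}\geq e$. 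For property~2, Proposition~\ref{Prop:propertiesNorm}(4) reduces $d_{[\mathcal{G}]}(\tilde{a},\tilde{b})=e$ to $a^{-}\div b^{-}=e$ and $a^{+}\div b^{+}=e$, i.e.\ to $a^{-}=b^{-}$ and $a^{+}=b^{+}$, which by \eqref{eq:equality_intervals} is exactly $\tilde{a}=\tilde{b}$. For property~3 (symmetry), note that $(a^{\pm}\div b^{\pm})^{(-1)}=b^{\pm}\div a^{\pm}$ by \eqref{composition-1}, and then property~1 of Proposition~\ref{triangle} shows that each of the two $\mathcal{G}$-norms is invariant under swapping $\tilde{a}$ and $\tilde{b}$, so the $\max$ is too.

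The real work lies in the triangle inequality (property~4). I would apply the triangle inequality for $d_{\mathcal{G}}$ coordinate-wise to get
\[
d_{\mathcal{G}}(a^{-},b^{-}) \leq d_{\mathcal{G}}(a^{-},c^{-})\odot d_{\mathcal{G}}(c^{-},b^{-}), \qquad d_{\mathcal{G}}(a^{+},b^{+}) \leq d_{\mathcal{G}}(a^{+},c^{+})\odot d_{\mathcal{G}}(c^{+},b^{+}).
\]
Abbreviating $\alpha=d_{\mathcal{G}}(a^{-},c^{-})$, $\beta=d_{\mathcal{G}}(a^{+},c^{+})$, $\gamma=d_{\mathcal{G}}(c^{-},b^{-})$, $\delta=d_{\mathcal{G}}(c^{+},b^{+})$, the order-compatibility axiom \eqref{ordercons} (applied twice, with the help of commutativity) yields $\alpha\odot\gamma \leq \max\{\alpha,\beta\}\odot\max\{\gamma,\delta\}$ and likewise $\beta\odot\delta \leq \max\{\alpha,\beta\}\odot\max\{\gamma,\delta\}$. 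Taking the $\max$ of the two resulting upper bounds on $d_{\mathcal{G}}(a^{-},b^{-})$ and $d_{\mathcal{G}}(a^{+},b^{+})$ gives exactly $d_{[\mathcal{G}]}(\tilde{a},\tilde{b})\leq d_{[\mathcal{G}]}(\tilde{a},\tilde{c})\odot d_{[\mathcal{G}]}(\tilde{c},\tilde{b})$.

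The only subtlety I expect is a minor notational one rather than a mathematical obstacle: the pair $[a^{-}\div b^{-},\, a^{+}\div b^{+}]$ need not be correctly ordered (e.g.\ in $\mathcal{R}^{+}$, take $\tilde a=[1,2]$, $\tilde b=[1,10]$), so it is not literally an element of $[G]$. However, Definition~\ref{norm_intervals} depends only on the two $\mathcal{G}$-norms of the endpoints via $\max$, so the expression is unambiguously defined and the argument above goes through without any case split on which endpoint dominates. The triangle-inequality step, specifically the passage from the coordinate-wise bounds to the $\max$-of-products bound, is the only place where the monotonicity axiom of the Alo-group is used in an essential way.
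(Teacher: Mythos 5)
Your proof is correct and follows essentially the same route as the paper's: rewrite $d_{[\mathcal{G}]}(\tilde{a},\tilde{b})$ as $\max\{d_{\mathcal{G}}(a^-,b^-),d_{\mathcal{G}}(a^+,b^+)\}$, obtain properties 1--3 from the properties of the $[\mathcal{G}]$-norm, and prove the triangle inequality by combining the coordinate-wise triangle inequalities for $d_{\mathcal{G}}$ with the bound $\max\{\alpha\odot\gamma,\beta\odot\delta\}\leq\max\{\alpha,\beta\}\odot\max\{\gamma,\delta\}$. Your observation that $[a^-\div b^-,\,a^+\div b^+]$ need not be an ordered interval (so not literally an element of $[G]$) is a valid point that the paper silently glosses over, though, as you note, it does not affect the argument since only the two endpoint norms enter.
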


\begin{remark}
As one should expect, for the additive Alo-group,  $d_{[\mathcal{G}]}$ collapses into the distance between real intervals, i.e.  $d_{[\mathcal{R}]}(\tilde{a}, \tilde{b})=\max \{ |a^{-}-a^{+}|,|b^{-}-b^{+}| \} $.
\end{remark}

\section{Interval pairwise comparison matrices over a real continuous Alo-group} 
\label{sec:IPCMs}

Let us assume that $\mathcal{G}= (G, \odot, \leq)$ is a real continuous Alo-group, with $G$ an open interval, and  $X=\{x_{1},x_{2},..., x_{n}\}$  a set
of  decision elements  such as  criteria or alternatives.\\
Having laid down the necessary mathematical foundations, we are now ready to formalize and study the concept of interval pairwise comparison matrix, where each entry is an interval in $G$ (i.e. an element of $[G]$).
 Let us consider the following quantitative preference relation on $X$:  
$$\mathcal{\tilde{A}}:
   (x_{i}, x_{j})\in X\times X\rightarrow \tilde{a}_{ij}=\mathcal{\tilde{A}}(x_{i}, x_{j})\in [G]$$
where the \emph{interval} $\tilde{a}_{ij}=[a_{ij}^{-},a_{ij}^{+}] \subset G$  represents the uncertain estimation of the preference intensity of $x_{i}$  over  $x_{j}$.  When the cardinality of $X$ is small, $\mathcal{\tilde{A}}$  can be   represented by an \emph{Interval Pairwise Comparison Matrix} (\emph{IPCM})
 \begin{equation}\label{ipcm}
 \tilde{A} = \; \bordermatrix{
~     & x_{1}  & x_{2}  & \cdots & x_{n} \cr
x_{1} & \tilde{a}_{11} & \tilde{a}_{12} & \dots  & \tilde{a}_{1n} \cr
x_{2} & \tilde{a}_{21} & \tilde{a}_{22} & \cdots & \tilde{a}_{2n} \cr
\vdots & \vdots & \vdots & \ddots & \vdots \cr
x_{n} & \tilde{a}_{n1} & \tilde{a}_{n2} & \cdots &  \tilde{a}_{nn}
}
=
\begin{pmatrix}
[{a}_{11}^{-},{a}_{11}^{+}] & [{a}_{12}^{-},{a}_{12}^{+}] & \dots  & [{a}_{1n}^{-},{a}_{1n}^{+}] \\
 [{a}_{21}^{-},{a}_{21}^{+}] & [{a}_{22}^{-},{a}_{22}^{+}] & \cdots & [{a}_{2n}^{-},{a}_{2n}^{+}] \\
\vdots & \vdots & \ddots & \vdots \\
[{a}_{n1}^{-},{a}_{n1}^{+}] & [{a}_{n2}^{-},{a}_{n2}^{+}] & \cdots &  [{a}_{nn}^{-},{a}_{nn}^{+}]
\end{pmatrix}.
 \end{equation} 

Let  $\tilde{A}=(\tilde{a}_{ij})$  be a IPCM; we say that  $\tilde{A}=(\tilde{a}_{ij})$ degenerates  in a PCM over $\mathcal{G}= (G, \odot, \leq)$ if  $ \tilde{a}_{ij} \in [G]_p, \forall i,j \in \{1, \ldots, n\}$.

\subsection{$[\mathcal{G}]$-reciprocal IPCMs} 

As it was done for PCMs, we can now formulate and study the concept of reciprocity for IPCMs in a more general framework.

\begin{definition}\label{def:alternativa3}
  $\tilde{A}=(\tilde{a}_{ij})$, with $\tilde{a}_{ij} \in [G]$ for each $i,j =1, \ldots, n$,  is a $[\mathcal{G}]$-reciprocal   IPCM if:
\begin{equation}
\label{eq:recip}
\tilde{a}_{ji}=\tilde{a}_{ij}^{(-1)}.
\end{equation}
\end{definition}

\begin{corollary}\label{cor:alternativa3}
  $\tilde{A}=(\tilde{a}_{ij})$, with $\tilde{a}_{ij} \in [G]$ for each $i,j =1, \ldots, n$,  is a $[\mathcal{G}]$-reciprocal   IPCM if and only if
\begin{equation}
\label{eq:recip2}
a_{ij}^- \odot a_{ji}^+=a_{ij}^+ \odot a_{ji}^-=e.
\end{equation}
\end{corollary}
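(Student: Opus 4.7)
The plan is to unwind the definitions and show the two conditions are just componentwise translations of each other. Reciprocity in the sense of Definition \ref{def:alternativa3} says $\tilde{a}_{ji}=\tilde{a}_{ij}^{(-1)}$; using the explicit form of the reciprocal interval given in \eqref{eq:reciprocalInterval}, this becomes
\[
[a_{ji}^-,a_{ji}^+] \;=\; [(a_{ij}^+)^{(-1)},(a_{ij}^-)^{(-1)}].
\]
Then I invoke the equality relation on intervals \eqref{eq:equality_intervals} to split this into the two scalar conditions $a_{ji}^- = (a_{ij}^+)^{(-1)}$ and $a_{ji}^+ = (a_{ij}^-)^{(-1)}$.

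The next step is to convert these conditions to the additive/multiplicative form in \eqref{eq:recip2} using the group structure in $\mathcal{G}$: in any Abelian group, $x = y^{(-1)}$ is equivalent to $x \odot y = e$. Applying this to each of the two scalar equalities above yields exactly
\[
a_{ij}^+ \odot a_{ji}^- = e \qquad\text{and}\qquad a_{ij}^- \odot a_{ji}^+ = e,
\]
which is \eqref{eq:recip2}. Each of these manipulations is reversible — $x\odot y=e$ determines $x=y^{(-1)}$ uniquely in a group — so the implication runs in both directions.

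One small bookkeeping point worth double-checking is well-definedness, i.e.\ that the objects on both sides of \eqref{eq:recip} actually belong to $[G]$; but this is already guaranteed by the remark following \eqref{eq:reciprocalInterval}, where the last equivalence in \eqref{composition-1} was used to ensure $(a_{ij}^+)^{(-1)} \leq (a_{ij}^-)^{(-1)}$. Because of this, there is no genuine obstacle in the proof: it is essentially a two-line unpacking of Definition \ref{def:alternativa3} via \eqref{eq:reciprocalInterval} and \eqref{eq:equality_intervals}, and the corollary is immediate.
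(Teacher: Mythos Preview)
Your proposal is correct and follows essentially the same approach as the paper's proof, which simply cites Definition~\ref{def:alternativa3} and \eqref{eq:reciprocalInterval}; you have merely made explicit the intermediate steps (use of \eqref{eq:equality_intervals} and the group equivalence $x=y^{(-1)}\Leftrightarrow x\odot y=e$) that the paper leaves implicit.
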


The following examples will show that  $[\mathcal{G}]$-reciprocity  is  suitable for the three most widely used types of IPCMs.

\begin{example} \label{example:multiplicativeIntervalPCM} 
The following IPCM
$$\tilde{A}=\left(
  \begin{array}{ccc}
    [1,1] &[\frac{1}{4}, \frac{1}{2}] & [6,7]  \\
    \; [2,4]  & [1,1] & [3,5] \\
        \;[\frac{1}{7}, \frac{1}{6}] &     \;[\frac{1}{5}, \frac{1}{3}] & [1,1]\\
  \end{array}
\right) $$
is a multiplicative $[\mathcal{R}^+]$-reciprocal  IPCM;  thus, each entry is an interval in $\R^+$.
\end{example}

\begin{example} \label{Example:IntervalAdditivePCM}
The following IPCM
$$\tilde{A}=\left(
  \begin{array}{cccc}
    [0,0] & [4,7] &[2,4]   & \\
    \;[-7,-4] & [0,0]  & [-3,-2]   &\\
        \;[-4,-2] &     \;[2,3] & [0,0] \\
  \end{array}
\right) $$
is an additive  $[\mathcal{R}]$-reciprocal   IPCM;  thus, each entry is an interval in $\R$.
\end{example}

\begin{example}
The following IPCM
$$\tilde{A}=\left(
  \begin{array}{ccc}
    [0.5,0.5] & [0.6,0.7] & [0.7,0.8]  \\
    \;[0.3, 0.4] &  [0.5,0.5]   & [0.6,0.8] \\
        \;[0.2,0.3] &     \;[0.2, 0.4] & [0.5,0.5]  \\
  \end{array}
\right) $$
is a fuzzy  $[\mathcal{I}]$-reciprocal  IPCM; thus, each entry is an interval in $I=]0,1[$.
\end{example}

\begin{proposition}\label{Prop:IPCMdegenratesPCM}
$\tilde{A}=(\tilde{a}_{ij})$ is a $[\mathcal{G}]$-reciprocal IPCM with 
\begin{equation}
\label{eq:cond}
\tilde{a}_{ij} \odot_{[\mathcal{G}]} \tilde{a}_{ji} =[e,e] ~~\forall i,j
\end{equation}
if and only if   $\tilde{A}=(\tilde{a}_{ij})$ degenerates in a $\mathcal{G}$-reciprocal PCM.
\end{proposition}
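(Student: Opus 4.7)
The plan is to reduce the proposition to Proposition \ref{Prop:noInverse}, which characterizes when an interval has an inverse under $\odot_{[\mathcal{G}]}$: only singletons do. Under reciprocity, the condition \eqref{eq:cond} exactly asserts that $\tilde{a}_{ji}$ is the monoid-inverse of $\tilde{a}_{ij}$, so both must be singletons. I would however prove it directly from Theorem \ref{Theorem_productIntervals} and Corollary \ref{cor:alternativa3}, which makes the argument self-contained and avoids invoking the inverse characterization as a black box.

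For the forward direction ($\Rightarrow$), I assume $\tilde{A}$ is $[\mathcal{G}]$-reciprocal and \eqref{eq:cond} holds. By Theorem \ref{Theorem_productIntervals}, the condition $\tilde{a}_{ij} \odot_{[\mathcal{G}]} \tilde{a}_{ji} = [e,e]$ expands coordinatewise to
\begin{equation*}
a_{ij}^- \odot a_{ji}^- = e \quad \text{and} \quad a_{ij}^+ \odot a_{ji}^+ = e.
\end{equation*}
By Corollary \ref{cor:alternativa3}, $[\mathcal{G}]$-reciprocity gives $a_{ij}^- \odot a_{ji}^+ = e$ and $a_{ij}^+ \odot a_{ji}^- = e$. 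Comparing the first equation with the second of these, $a_{ij}^- \odot a_{ji}^- = a_{ij}^- \odot a_{ji}^+$, and cancelling $a_{ij}^-$ in the group $\mathcal{G}$ yields $a_{ji}^- = a_{ji}^+$; symmetrically $a_{ij}^- = a_{ij}^+$. Hence every entry $\tilde{a}_{ij}$ lies in $[G]_p$, so $\tilde{A}$ degenerates into a PCM over $\mathcal{G}$, which is $\mathcal{G}$-reciprocal in the sense of Definition \ref{def:dot_reciprocity} since $a_{ij} \odot a_{ji} = e$.

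For the backward direction ($\Leftarrow$), I assume $\tilde{A}$ degenerates into a $\mathcal{G}$-reciprocal PCM $A = (a_{ij})$, i.e.\ $\tilde{a}_{ij} = [a_{ij}, a_{ij}]$ and $a_{ji} = a_{ij}^{(-1)}$. Then $\tilde{a}_{ij}^{(-1)} = [a_{ij}^{(-1)}, a_{ij}^{(-1)}] = [a_{ji}, a_{ji}] = \tilde{a}_{ji}$, so $[\mathcal{G}]$-reciprocity holds, and by Theorem \ref{Theorem_productIntervals}
\begin{equation*}
\tilde{a}_{ij} \odot_{[\mathcal{G}]} \tilde{a}_{ji} = [a_{ij} \odot a_{ji}, a_{ij} \odot a_{ji}] = [e,e].
\end{equation*}

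I do not expect a real obstacle here: the only subtle point is noticing that reciprocity alone forces the cross-products $a_{ij}^\pm \odot a_{ji}^\mp$ to equal $e$, while \eqref{eq:cond} imposes the same on the like-signed products, and these two facts together collapse each interval to a point by cancellation in $\mathcal{G}$.
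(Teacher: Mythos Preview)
Your proof is correct and follows essentially the same route as the paper's. The only cosmetic difference is in the forward direction: the paper invokes Proposition~\ref{Prop:noInverse} directly to conclude $\tilde{a}_{ij}\in[G]_p$ from \eqref{eq:cond}, whereas you inline that argument by combining the like-signed products from Theorem~\ref{Theorem_productIntervals} with the cross-products from Corollary~\ref{cor:alternativa3} and cancelling; the backward direction is identical in both.
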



From now on, we assume that $\tilde{A}$ is a $[\mathcal{G}]$-reciprocal IPCM.
Let $ (\sigma(1), \ldots, \sigma(n))$ be  a permutation of $(1,\ldots,n)$; then, similarly to $A^{\sigma}$ in \eqref{eq:sigma_PCM}, we define $ \tilde{A} ^\sigma$ as follows:
\begin{equation}\label{eq:permutation_Pi_grande} \tilde{A} ^\sigma =\left(\begin{array}{cccc}\tilde{a}_{\sigma(1)\sigma(1)} & \tilde{a}_{\sigma(1)\sigma(2)} & ... &
\tilde{a}_{\sigma(1)\sigma(n)}
 \\\tilde{a}_{\sigma(2)\sigma(1)} & \tilde{a}_{\sigma(2)\sigma(2)} & ... &
\tilde{a}_{\sigma(2)\sigma(n)} \\... & ... & ... & ... \\\tilde{a}_{\sigma(n)\sigma(1)} & \tilde{a}_{\sigma(n)\sigma(2)} & ... &
\tilde{a}_{\sigma(n)\sigma(n)}\end{array}\right) .
\end{equation}

By using an argument similar to the one used to show that $\mathcal{G}$-reciprocity of  $A$ guarantees the $\mathcal{G}$-reciprocity of $A^{\sigma}$, we provide the following proposition, which extends the result to IPCMs:

\begin{proposition}\label{Prop:reciprocalIPCM_permutations}
$\tilde{A}=(\tilde{a}_{ij})$ is $[\mathcal{G}]$-reciprocal if and only if $\tilde{A}^{\sigma}$ is $[\mathcal{G}]$-reciprocal for all permutations $\sigma$.
\end{proposition}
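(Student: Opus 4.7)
The plan is to observe that this is essentially an unpacking of definitions: the entries of $\tilde{A}^{\sigma}$ are the entries of $\tilde{A}$ with indices relabeled through $\sigma$, so the reciprocity condition, being quantified over \emph{all} pairs of indices, is preserved under any such relabeling. I do not expect any genuine obstacle, since the result follows directly from Definition \ref{def:alternativa3} and the construction in \eqref{eq:permutation_Pi_grande}; the only subtlety is making sure the bijectivity of $\sigma$ is used correctly.

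For the forward direction, I would assume $\tilde{A}=(\tilde{a}_{ij})$ is $[\mathcal{G}]$-reciprocal, so that $\tilde{a}_{qp}=\tilde{a}_{pq}^{(-1)}$ for every $p,q\in\{1,\ldots,n\}$. Fix an arbitrary permutation $\sigma$ of $(1,\ldots,n)$ and denote by $\tilde{b}_{ij}=\tilde{a}_{\sigma(i)\sigma(j)}$ the $(i,j)$-entry of $\tilde{A}^{\sigma}$. Setting $p=\sigma(i)$ and $q=\sigma(j)$ in the reciprocity of $\tilde{A}$ yields
\[
\tilde{b}_{ji}=\tilde{a}_{\sigma(j)\sigma(i)}=\tilde{a}_{\sigma(i)\sigma(j)}^{(-1)}=\tilde{b}_{ij}^{(-1)},
\]
so $\tilde{A}^{\sigma}$ satisfies \eqref{eq:recip}. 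Since $\sigma$ was arbitrary, this establishes $[\mathcal{G}]$-reciprocity of $\tilde{A}^{\sigma}$ for every permutation.

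For the converse, I would simply specialize to $\sigma=\mathrm{id}$, which gives $\tilde{A}^{\sigma}=\tilde{A}$, so the assumption that $\tilde{A}^{\sigma}$ is $[\mathcal{G}]$-reciprocal for all $\sigma$ immediately implies that $\tilde{A}$ itself is $[\mathcal{G}]$-reciprocal. This closes the biconditional, and the proposition is established. The only computation invoked is \eqref{eq:reciprocalInterval} through the unambiguous meaning of $\tilde{a}_{ij}^{(-1)}$; no properties of the Alo-group beyond what is already implicit in the definition of $[\mathcal{G}]$-reciprocity are required.
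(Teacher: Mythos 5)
Your proof is correct and follows essentially the same route as the paper's: the forward direction is the same substitution $\tilde{a}_{\sigma(j)\sigma(i)}=\tilde{a}_{\sigma(i)\sigma(j)}^{(-1)}$ derived from Definition \ref{def:alternativa3}, and the converse via the identity permutation is exactly what the paper dismisses as ``straightforward.'' No issues.
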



\section{$[\mathcal{G}]$-Consistency conditions for  $[\mathcal{G}]$-reciprocal IPCMs}
\label{sec:consistency}

The formulation of consistency conditions and reliable indices to estimate the extent of their violation have played a pivotal role in the development of the theory of pairwise comparisons. As emerges from recent studies \cite{Brunelli2016b}, there is not a meeting of minds on the best way of capturing inconsistency. This is even more so in the case of IPCMs since in this context there is not even a uniquely accepted condition of consistency.

In this section, we generalize to $[\mathcal{G}]$-reciprocal IPCMs  consistency conditions which were initially proposed in the literature for 
fuzzy IPCMs \cite{WangLi2012,WANG2015890} and multiplicative IPCMs \cite{Li2016628,Wang2015}.

\subsection{Liu's $[\mathcal{G}]$-consistency}
 Let $\tilde{A}=([a_{ij}^-, a_{ij}^+])$ be a $[\mathcal{G}]$-reciprocal IPCM, then we define $L=(l_{ij})$ and $R=(r_{ij})$ as follows
\begin{equation}\label{eq:L_R}
l_{ij}=\begin{cases}
  a_{ij}^- & \;  i<j \\
  e & \;  i=j \\
  a_{ij}^+ & \;  i>j
\end{cases}  \;\;\;  \;\;\;  \;\;\;r_{ij}=\begin{cases}
  a_{ij}^+ & \;  i<j \\
  e & \;  i=j \\
  a_{ij}^- & \;  i>j
\end{cases}\;.
\end{equation}

We stress that $L=(l_{ij})$ and $R=(r_{ij})$ are $\mathcal{G}$-reciprocal PCMs.  At this point, we can state the first condition of consistency, which we call Liu's $[\mathcal{G}]$-consistency because it generalizes a consistency condition provided by Liu \cite{Liu20092686} for multiplicative IPCMs.
\begin{definition} \label{def:LiuodotConsistency}
$\tilde{A}=([a_{ij}^-, a_{ij}^+])$ is Liu's $[\mathcal{G}]$-consistent if
\begin{equation}\label{eq:LiuodotConsistency}
\begin{cases}
l_{ik}=l_{ij} \odot l_{jk} \\
r_{ik}=r_{ij} \odot r_{jk}
\end{cases}\quad \forall i,j,k \in \{1, \ldots, n\}; 
\end{equation}
that is,  $L$ and $R$  are $\mathcal{G}$-consistent PCMs over 
$(G, \odot, \leq)$.
\end{definition}

\begin{proposition}\label{prop:LiuEquivalent}
The following statements are equivalent:
\begin{enumerate}
\item $\tilde{A}=([a_{ij}^-, a_{ij}^+])$ is Liu's $[\mathcal{G}]$-consistent;
\item $
\begin{cases}
l_{ik}=l_{ij} \odot l_{jk} \\
r_{ik}=r_{ij} \odot r_{jk}
\end{cases}
\;\;
\forall i<j<k;
$
\item 
$
\tilde{a}_{ik} = \tilde{a}_{ij} \odot_ {[\mathcal{G}]} \tilde{a}_{jk} \vspace{0.3cm}\quad  \forall i<j<k.
$
\end{enumerate}
\end{proposition}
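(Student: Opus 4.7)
The plan is to establish the chain $(1) \Leftrightarrow (2) \Leftrightarrow (3)$ by invoking the previously developed machinery rather than re-deriving anything from scratch. The implication $(1) \Rightarrow (2)$ is immediate, since $(2)$ is just the restriction of the universally quantified equalities in $(1)$ to the index set $\{(i,j,k): i<j<k\}$.

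For $(2) \Rightarrow (1)$, I would first check that $L=(l_{ij})$ and $R=(r_{ij})$ from \eqref{eq:L_R} are $\mathcal{G}$-reciprocal PCMs. This follows from Corollary \ref{cor:alternativa3}: the $[\mathcal{G}]$-reciprocity of $\tilde{A}$ gives $a_{ij}^{-}\odot a_{ji}^{+}=e$, so that for $i<j$ one has $l_{ji}=a_{ji}^{+}=(a_{ij}^{-})^{(-1)}=l_{ij}^{(-1)}$, and analogously $r_{ji}=r_{ij}^{(-1)}$. Proposition \ref{prop:consistency_i<j<k} then applies separately to $L$ and to $R$: the restricted system of equalities in $(2)$ is exactly the hypothesis needed to upgrade their consistency from triples $i<j<k$ to all triples $i,j,k\in\{1,\dots,n\}$, which is precisely $(1)$.

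The equivalence $(2) \Leftrightarrow (3)$ is a direct computation. For $i<j<k$, the definition in \eqref{eq:L_R} gives $l_{ij}=a_{ij}^{-}$, $r_{ij}=a_{ij}^{+}$, and similarly for the pairs $(j,k)$ and $(i,k)$. By Theorem \ref{Theorem_productIntervals},
\[
\tilde{a}_{ij}\odot_{[\mathcal{G}]}\tilde{a}_{jk}=[a_{ij}^{-}\odot a_{jk}^{-},\ a_{ij}^{+}\odot a_{jk}^{+}]=[l_{ij}\odot l_{jk},\ r_{ij}\odot r_{jk}],
\]
while $\tilde{a}_{ik}=[l_{ik},r_{ik}]$. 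By the equality relation on intervals \eqref{eq:equality_intervals}, the intervals coincide iff both component equalities in $(2)$ hold simultaneously. No serious obstacle is expected; the only subtle point is to remember that $[\mathcal{G}]$-reciprocity of $\tilde{A}$ is essential to make $L$ and $R$ $\mathcal{G}$-reciprocal, without which Proposition \ref{prop:consistency_i<j<k} could not be invoked in the step $(2) \Rightarrow (1)$.
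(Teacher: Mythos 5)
Your proposal is correct and follows essentially the same route as the paper: the equivalence $(1)\Leftrightarrow(2)$ via Proposition \ref{prop:consistency_i<j<k} applied to the $\mathcal{G}$-reciprocal PCMs $L$ and $R$, and $(2)\Leftrightarrow(3)$ by the componentwise computation using Theorem \ref{Theorem_productIntervals}. Your explicit verification that $L$ and $R$ are $\mathcal{G}$-reciprocal is a welcome detail the paper only asserts in passing, but it does not change the argument.
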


It is crucial to notice  that, analogously to Liu's consistency  defined for multiplicative IPCMs (see \cite{LiuPedryczZhang}),  Liu's $[\mathcal{G}]$-consistency is not invariant with respect to the permutation of alternatives.  This means that Liu's $[\mathcal{G}]$-consistency depends on the labeling of criteria$/$alternatives and, as a consequence,  Liu's $[\mathcal{G}]$-consistency (inconsistency) of $\tilde{A}$ does not imply   Liu's $[\mathcal{G}]$-consistency (inconsistency) of $\tilde{A}^{\sigma}$ for different permutations $\sigma$.

\begin{example}\label{example:Liu_not_preserves_consistency}
Let us consider the following two additive $[\mathcal{R}]$-reciprocal IPCMs:
\[
\tilde{A}=
\begin{pmatrix}
 [0,0] & [2,4] & [4,7] \\
 [-4,-2] & [0,0] & [2,3] \\
 [-7,-4] & [-3,-2] & [0,0]
\end{pmatrix}
\; \; \; \;
\tilde{A}^{\sigma}=
\begin{pmatrix}
 [0,0] &  [-4,-2] & [2,3] \\
 [2,4] & [0,0] &[4,7] \\
 [-3,-2] & [-7,-4] & [0,0]
\end{pmatrix},
\]
where the latter is obtained by applying $\sigma= \{ 2,1,3 \}$ to the former. Permutation $\sigma$ does not change the structure of the preferences, yet only the first IPCM is deemed Liu $[\mathcal{R}]$-consistent.
\end{example}

The soundness of such consistency condition has thus been questioned in recent papers \cite{MengEtAl2017,Wang2015,MengTan2017} as it seems to violate a principle according to which a simple reordering of alternatives which leaves the preferences unchanged shall not affect the consistency of these latter ones \cite{BrunelliFedrizzi2015}. Consequently, to overcome this issue, more recent formulations of consistency conditions for IPCMs are invariant under permutations of alternatives.

\subsection{Approximate $[\mathcal{G}]$-consistency} 
In order to deal with the above mentioned shortcoming of Liu's  $[\mathcal{G}]$-consistency, Liu et al. \cite{LiuPedryczZhang} proposed an approximate consistency condition for multiplicative IPCMs. This consistency condition has also been used in applications of multiplicative IPCM to solve problems of partner selection \cite{Liu2016partner}.
In this section, we provide the notion of approximate $[\mathcal{G}]$-consistency to generalize approximate consistency. 

Let  $\tilde{A}=([a_{ij}^-, a_{ij}^+])$ be a $[\mathcal{G}]$-reciprocal IPCM, $\sigma$  a permutation of $\{1,2, \ldots, n\}$, $\tilde{A}^\sigma$  in \eqref{eq:permutation_Pi_grande} and   $L^\sigma=(l_{ij}^\sigma)$ and $R^\sigma=(r_{ij}^\sigma)$ with $l_{ij}^\sigma$  and $r_{ij}^\sigma$  defined as follows:

\begin{equation}\label{eq:L_R_sigma}
l_{ij}^\sigma=\begin{cases}
  a_{\sigma(i)\sigma(j)}^- & \;  i<j \\
  e & \;  i=j\\
  a_{\sigma(i)\sigma(j)}^+ & \;  i>j
\end{cases}  \;\;\;  \;\;\;  \;\;\;r_{ij}^\sigma=\begin{cases}
  a_{\sigma(i)\sigma(j)}^+ & \;  i<j\\
  e & \;  i=j \\
  a_{\sigma(i)\sigma(j)}^- & \;  i>j
\end{cases}\quad.
\end{equation}


\begin{example}
Let us consider the  multiplicative $[\mathcal{R^+}]$-reciprocal  IPCM  in Example \ref{example:multiplicativeIntervalPCM}. Let   $\sigma=\{1,2,3\}$, then $\tilde{A}^\sigma=\tilde{A}$ and 
$$L^\sigma=\left(
  \begin{array}{ccc}
    1 & \frac{1}{4} & 6 \\
   \; 4 & 1   & 3\\
       \; \frac{1}{6}&    \frac{1}{3}&1  \\
  \end{array}
\right) \quad  R^\sigma=\left(
  \begin{array}{ccc}
    1 & \frac{1}{2} & 7 \\
   \; 2 & 1   & 5\\
       \; \frac{1}{7}&    \frac{1}{5}&1  \\
  \end{array}
\right). $$
Let  $\sigma_1=\{1,3,2\}$, then
$$\tilde{A}^{\sigma_1}=\left(
  \begin{array}{ccc}
    [1,1] &  [6,7]   &  [\frac{1}{4}, \frac{1}{2}]  \\
     \;[\frac{1}{7}, \frac{1}{6}] & [1,1] & \;[\frac{1}{5}, \frac{1}{3}] \\
        \;[2,4] &   [3,5] & [1,1]\\
  \end{array}
\right) $$
and 
$$L^{\sigma_1}=\left(
  \begin{array}{ccc}
    1 & 6 & \frac{1}{4} \\
   \;\frac{1}{6} & 1   & \frac{1}{5}\\
       \; 4&   5&1  \\
  \end{array}
\right) \quad  R^{\sigma_1}=\left(
  \begin{array}{ccc}
    1 & 7 & \frac{1}{2} \\
   \;\frac{1}{7} & 1   & \frac{1}{3}\\
       \; 2&   3&1  \\
  \end{array}
\right). $$
\end{example}

\begin{definition} \label{Def:ApproximateodotConsistency}
An IPCM $\tilde{A}=([a_{ij}^-, a_{ij}^+])$ is \emph{approximately $[\mathcal{G}]$-consistent} if there is a permutation $\sigma$ such that $L^\sigma=(l_{ij}^\sigma)$ and $R^\sigma=(r_{ij}^\sigma)$ are $\mathcal{G}$-consistent PCMs over $(G, \odot, \leq)$.
\end{definition}

\begin{remark}\label{remark:equivalenceApproximateConsistency}
We stress that, said in other words, $\tilde{A}=([a_{ij}^-, a_{ij}^+])$ is an approximately $[\mathcal{G}]$-consistent IPCM if and only  if there is a permutation $\sigma$ such that $\tilde{A}^\sigma$ is Liu's  $[\mathcal{G}]$-consistent.
\end{remark}

\begin{example}\label{Example:IntervalAdditivePCM_II_part}
Let us consider  the additive  $[\mathcal{R}]$-reciprocal   IPCM  in Example \ref{Example:IntervalAdditivePCM} and the permutation $\sigma=\{1,3,2\}$. Then:
$$\tilde{A}^\sigma=\left(
  \begin{array}{cccc}
    [0,0] & [2,4] &[4,7]   & \\
    \;[-4,-2] & [0,0]  &  [2,3]  &\\
        \;[-7,-4] &     \;[-3,-2] & [0,0] \\
  \end{array}
\right). $$
The additive PCMs
$$L^{\sigma}=\left(
  \begin{array}{ccc}
    0 & 2 & 4\\
   \;-2& 1   & 2\\
       \; -4&   -2&1  \\
  \end{array}
\right) \quad  R^{\sigma}=\left(
  \begin{array}{ccc}
    0 & 4 & 7\\
   \;-4& 1   & 3\\
       \; -7&   -3&1  \\
  \end{array}
\right)$$
are $\mathcal{R}$-consistent; thus, $\tilde{A}$ in Example \ref{Example:IntervalAdditivePCM} is  an approximately $[\mathcal{R}]$-consistent  IPCM.
\end{example}

\begin{theorem}\label{ApproximateConsistencyDegeneration}
The IPCM $\tilde{A}=([a_{ij}^-, a_{ij}^+])$ degenerates in a $\mathcal{G}$-consistent PCM over $(G, \odot, \leq)$ if and only if  $L^\sigma=(l_{ij}^\sigma)$ and $R^\sigma=(r_{ij}^\sigma)$   are $\mathcal{G}$-consistent PCMs over $(G, \odot, \leq)$ for each permutation $\sigma$.
\end{theorem}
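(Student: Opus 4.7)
The proof splits into two directions. The forward implication is direct, while the backward one requires a careful choice of transpositions to force every interval to be a singleton.

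\textbf{Forward direction.} Suppose $\tilde{A}$ degenerates into a $\mathcal{G}$-consistent PCM $A=(a_{ij})$: then $a_{ij}^- = a_{ij}^+ = a_{ij}$ and $a_{ik}=a_{ij}\odot a_{jk}$ for all $i,j,k$. The case distinctions in \eqref{eq:L_R_sigma} collapse to $l_{ij}^\sigma = r_{ij}^\sigma = a_{\sigma(i)\sigma(j)}$ off the diagonal, so $L^\sigma = R^\sigma = A^\sigma$. Since $\mathcal{G}$-consistency is preserved under simultaneous row-column permutations (a direct rewriting of Definition \ref{def:dot_consistency}, along the same lines as $\mathcal{G}$-reciprocity is preserved in \eqref{eq:sigma_PCM}), $A^\sigma$ is $\mathcal{G}$-consistent, and hence so are $L^\sigma$ and $R^\sigma$.

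\textbf{Backward direction.} Assume $L^\sigma$ and $R^\sigma$ are $\mathcal{G}$-consistent for every permutation $\sigma$. The identity permutation makes $\tilde{A}$ Liu's $[\mathcal{G}]$-consistent, so by Proposition \ref{prop:LiuEquivalent} we obtain the baseline equations
\[
a^-_{ik}=a^-_{ij}\odot a^-_{jk},\qquad a^+_{ik}=a^+_{ij}\odot a^+_{jk}\qquad \forall\, i<j<k. \quad (\star)
\]
The core step is to show $a^-_{pq}=a^+_{pq}$ for every pair $p<q$ by applying the single transposition $\sigma=(p\;q)$ at a triple containing both $p$ and $q$. If $q<n$, fix any $k>q$ and expand the $L^\sigma$-consistency identity at the triple $(p,q,k)$: using $[\mathcal{G}]$-reciprocity of $\tilde{A}$ together with \eqref{eq:L_R_sigma}, one gets $l^\sigma_{pq}=(a^+_{pq})^{(-1)}$, $l^\sigma_{pk}=a^-_{qk}$, $l^\sigma_{qk}=a^-_{pk}$, so the identity becomes $a^+_{pq}\odot a^-_{qk}=a^-_{pk}$; comparing with $(\star)$ forces $a^+_{pq}=a^-_{pq}$. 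A symmetric argument using the triple $(i,p,q)$ for any $i<p$ handles the complementary range $p\geq 2$.

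Together these two cases cover every pair except $(1,n)$. To close that gap, invoke $(\star)$ at the triple $(1,2,n)$, which is available since the statement is meaningful only for $n\geq 3$. Because $a^-_{12}=a^+_{12}$ and $a^-_{2n}=a^+_{2n}$ have already been established, the two identities for the endpoints of $\tilde{a}_{1n}$ share the same right-hand side, forcing $a^-_{1n}=a^+_{1n}$. Thus every entry of $\tilde{A}$ is a singleton, $(\star)$ reduces to the $\mathcal{G}$-consistency identity on each $i<j<k$, and Proposition \ref{prop:consistency_i<j<k} upgrades this to full $\mathcal{G}$-consistency of the underlying PCM. The main obstacle is the bookkeeping around reciprocity: one must track how the transposition $\sigma=(p\;q)$ permutes the row-column labels, how this swaps the $(-)/(+)$ bounds via \eqref{eq:reciprocalInterval}, and which triple yields an equation genuinely independent from those already in $(\star)$.
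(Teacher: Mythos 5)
Your proof is correct and follows essentially the same strategy as the paper's: both exploit the fact that a permutation reversing the order of a pair of indices swaps the roles of the lower and upper bounds in $L^\sigma$ and $R^\sigma$, so that two consistency identities referring to the same underlying entries force $a_{pq}^- = a_{pq}^+$. Your version --- the identity permutation yielding $(\star)$ combined with explicit transpositions $(p\;q)$, plus the separate treatment of the pair $(1,n)$ --- is a concrete and fully verified instantiation of the permutation $\sigma_2$ and matched indices $j_2<i_2<k_2$ whose existence the paper's proof asserts without constructing them.
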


\subsection{$[\mathcal{G}]$-consistency} 
In this section, we generalize the  consistency condition employed by Li et al.~\cite{Li2016628} and Zhang \cite{Zhang2016} for multiplicative IPCMs.

\begin{definition} \label{Def:consistent_IPCM}
$\tilde{A}=(\tilde{a}_{ij})$ is a $[\mathcal{G}]$-consistent  IPCM  if 
\begin{equation}
\label{eq:consistent_IPCM}
\tilde{a}_{ij} \odot_ {[\mathcal{G}]} \tilde{a}_{jk}  \odot_ {[\mathcal{G}]} \tilde{a}_{ki}=  \tilde{a}_{ik} \odot_ {[\mathcal{G}]} \tilde{a}_{kj}  \odot_ {[\mathcal{G}]} \tilde{a}_{ji} \quad \forall i,j,k \in \{1, \ldots, n\}.
\end{equation}
\end{definition}

By Theorem \ref{Theorem_productIntervals}, $[\mathcal{G}]$-consistency in Definition \ref{Def:consistent_IPCM} is equivalent to:
\begin{equation}\label{eq:Equivalence_odotGConsistency}
\begin{cases}
  {a}_{ij}^{-} \odot {a}_{jk}^{-} \odot {a}_{ki}^{-} = {a}_{ik}^{-} \odot {a}_{kj}^{-} \odot {a}_{ji}^{-}  \quad \forall i,j,k \in \{1, \ldots, n\}\\
 {a}_{ij}^{+} \odot {a}_{jk}^{+} \odot {a}_{ki}^{+} ={a}_{ik}^{+} \odot {a}_{kj}^{+} \odot {a}_{ji}^{+}  \quad \forall i,j,k \in \{1, \ldots, n\}.
\end{cases} 
\end{equation}

%
%
%
The following proposition will show that $[\mathcal{G}]$-consistency is invariant with respect to permutations of alternatives.

\begin{proposition} \label{Prop:invariance_odotG_consistency}
$\tilde{A}=(\tilde{a}_{ij})$ is $[\mathcal{G}]$-consistent if and only if $\tilde{A}^{\sigma}$ is $[\mathcal{G}]$-consistent for all permutations $\sigma$.
\end{proposition}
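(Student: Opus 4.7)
The plan is to exploit the universal quantification in Definition \ref{Def:consistent_IPCM}: since the equation $\tilde{a}_{ij} \odot_{[\mathcal{G}]} \tilde{a}_{jk}  \odot_{[\mathcal{G}]} \tilde{a}_{ki}=  \tilde{a}_{ik} \odot_{[\mathcal{G}]} \tilde{a}_{kj}  \odot_{[\mathcal{G}]} \tilde{a}_{ji}$ is required to hold for \emph{every} triple $(i,j,k) \in \{1,\ldots,n\}^3$, any bijective relabelling of indices merely reshuffles the collection of triples to be verified and leaves the system of equations invariant. This is the same mechanism that, in the PCM case, makes $\mathcal{G}$-consistency (as opposed to Liu's condition) permutation-invariant.

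For the nontrivial direction, I would assume $\tilde{A}=(\tilde{a}_{ij})$ is $[\mathcal{G}]$-consistent and fix an arbitrary permutation $\sigma$ of $\{1,\ldots,n\}$. Given any $i,j,k \in \{1,\ldots,n\}$, set $i'=\sigma(i)$, $j'=\sigma(j)$, $k'=\sigma(k)$. Writing out the $[\mathcal{G}]$-consistency condition for $\tilde{A}^{\sigma}$ at $(i,j,k)$ using \eqref{eq:permutation_Pi_grande}, I get exactly
\[
\tilde{a}_{i'j'} \odot_{[\mathcal{G}]} \tilde{a}_{j'k'} \odot_{[\mathcal{G}]} \tilde{a}_{k'i'} = \tilde{a}_{i'k'} \odot_{[\mathcal{G}]} \tilde{a}_{k'j'} \odot_{[\mathcal{G}]} \tilde{a}_{j'i'},
\]
which is the $[\mathcal{G}]$-consistency condition for $\tilde{A}$ at the triple $(i',j',k')$, and therefore holds by hypothesis. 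Since $(i,j,k)$ was arbitrary, $\tilde{A}^{\sigma}$ is $[\mathcal{G}]$-consistent.

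The converse is immediate by taking $\sigma$ to be the identity permutation, since $\tilde{A}^{\mathrm{id}}=\tilde{A}$. Alternatively one can note that $(\tilde{A}^{\sigma})^{\sigma^{-1}}=\tilde{A}$ and apply the forward direction to $\tilde{A}^{\sigma}$ with permutation $\sigma^{-1}$.

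I do not expect any genuine obstacle here: the symmetric, cyclic structure of the defining identity in \eqref{eq:consistent_IPCM} already treats the three indices on equal footing, so no commutativity or reciprocity argument is needed beyond mere index substitution. The only care point is notational, namely making sure that the substitution $(i,j,k)\mapsto(\sigma(i),\sigma(j),\sigma(k))$ is correctly traced through the definition of $\tilde{A}^{\sigma}$ in \eqref{eq:permutation_Pi_grande}; the equivalent coordinate-wise formulation \eqref{eq:Equivalence_odotGConsistency} can be used to double-check by writing the same argument separately for the endpoints $a_{ij}^{-}$ and $a_{ij}^{+}$.
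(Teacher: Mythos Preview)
Your proposal is correct and follows essentially the same approach as the paper: both exploit that the defining condition \eqref{eq:consistent_IPCM} is quantified over all triples $(i,j,k)\in\{1,\ldots,n\}^3$, and that a permutation $\sigma$ merely re-indexes this set of triples bijectively. The paper states this in one line by noting $S=S^{\sigma}$ for the set of triples, while you spell out the substitution $(i,j,k)\mapsto(\sigma(i),\sigma(j),\sigma(k))$ explicitly; the content is the same.
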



%

\begin{theorem} \label{Theorem_odotG_consistency_equivalences}
Let $\tilde{A}=(\tilde{a}_{ij})$ be a $[\mathcal{G}]$-reciprocal  IPCM. Then, the following assertions are equivalent:
\begin{enumerate}
\item $\tilde{A}=(\tilde{a}_{ij})$ is a $[\mathcal{G}]$-consistent  IPCM;
\item $
\label{eq:consistency_bis}
a_{ik}^- \odot a_{ik}^+= a_{ij}^- \odot a_{ij}^+ \odot a_{jk}^- \odot a_{jk}^+  \quad \forall i,j,k \in \{1, \ldots, n\};
$
\item $
\label{eq:consistency_bis2}
a_{ik}^- \odot a_{ik}^+= a_{ij}^- \odot a_{ij}^+ \odot a_{jk}^- \odot a_{jk}^+  \quad \forall i<j<k.
$
\end{enumerate}

\end{theorem}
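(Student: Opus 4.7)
The plan is to establish the chain (1)$\Leftrightarrow$(2) and (2)$\Leftrightarrow$(3) separately, with the first equivalence doing the real work and the second reducing to an already-proven result on scalar PCMs.

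For (1)$\Leftrightarrow$(2), I would start from the componentwise reformulation \eqref{eq:Equivalence_odotGConsistency}, which expresses $[\mathcal{G}]$-consistency as the conjunction of a ``$-$'' identity and a ``$+$'' identity on the endpoints. In each of these identities I would rewrite the entries $a_{ki}^{\pm}$ and $a_{ji}^{\pm}$ via the $[\mathcal{G}]$-reciprocity formulas furnished by Corollary \ref{cor:alternativa3}, namely $a_{ki}^{-}=(a_{ik}^{+})^{(-1)}$, $a_{ki}^{+}=(a_{ik}^{-})^{(-1)}$ (and the analogues for $(j,i)$). After moving the resulting inverses to the opposite side of each equation and exploiting that $\mathcal{G}$ is Abelian, both the ``$-$'' equation and the ``$+$'' equation collapse to the single identity $a_{ik}^- \odot a_{ik}^+ = a_{ij}^- \odot a_{ij}^+ \odot a_{jk}^- \odot a_{jk}^+$. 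This simultaneously gives (1)$\Rightarrow$(2) and (2)$\Rightarrow$(1), since all the manipulations are reversible in an Abelian group.

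For (2)$\Leftrightarrow$(3), the trivial direction is (2)$\Rightarrow$(3). For the converse I would introduce the auxiliary matrix $B=(b_{ij})$ defined by $b_{ij}:=a_{ij}^{-}\odot a_{ij}^{+}$ and observe two facts. First, by Corollary \ref{cor:alternativa3} and the commutativity of $\odot$, we get
\[
b_{ji}=a_{ji}^{-}\odot a_{ji}^{+}=(a_{ij}^{+})^{(-1)}\odot (a_{ij}^{-})^{(-1)}=(a_{ij}^{-}\odot a_{ij}^{+})^{(-1)}=b_{ij}^{(-1)},
\]
so $B$ is a $\mathcal{G}$-reciprocal PCM. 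Second, conditions (2) and (3) are exactly the assertions \lq\lq$b_{ik}=b_{ij}\odot b_{jk}$ for all $i,j,k$\rq\rq~and \lq\lq$b_{ik}=b_{ij}\odot b_{jk}$ for all $i<j<k$\rq\rq, respectively; that is, $\mathcal{G}$-consistency of $B$ in both its full and its restricted forms. Their equivalence is then a direct application of Proposition \ref{prop:consistency_i<j<k}.

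The main obstacle I expect is keeping the bookkeeping tidy in the derivation (1)$\Leftrightarrow$(2): one must carefully verify that the two a priori distinct cancellations (one from the ``$-$'' equation and one from the ``$+$'' equation) really do produce the same closed-form identity, and that no sign or index error creeps in when substituting the reciprocal entries. Once that coincidence is made transparent, the remainder of the argument is automatic, since the passage to the matrix $B$ reduces (2)$\Leftrightarrow$(3) to the scalar result already established in the preliminaries.
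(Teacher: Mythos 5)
Your proof is correct, and while your treatment of (1)$\Leftrightarrow$(2) follows essentially the same route as the paper (substituting the reciprocity identities from Corollary \ref{cor:alternativa3} into the componentwise form \eqref{eq:Equivalence_odotGConsistency} and cancelling — your observation that \emph{each} of the ``$-$'' and ``$+$'' equations separately collapses to the identity in (2) is a slightly sharper packaging of the same computation), your argument for (2)$\Leftrightarrow$(3) is genuinely different from the paper's. The paper proves $3\Rightarrow 2$ by hand: it first disposes of the degenerate cases with repeated indices, verifies the case $i<j<k$ directly, works out $i<k<j$ explicitly using \eqref{composition-1} and reciprocity, and then asserts that the remaining four orderings are ``similar.'' You instead introduce the auxiliary matrix $B=(b_{ij})$ with $b_{ij}=a_{ij}^{-}\odot a_{ij}^{+}$, check that reciprocity of $\tilde{A}$ makes $B$ a $\mathcal{G}$-reciprocal PCM, and observe that conditions (2) and (3) are precisely the full and restricted forms of $\mathcal{G}$-consistency of $B$, so that Proposition \ref{prop:consistency_i<j<k} finishes the job. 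This is cleaner: it eliminates the case analysis entirely (including the ``similarly'' left to the reader in the paper) and makes transparent \emph{why} the restriction to $i<j<k$ suffices, namely because the scalar result already encodes that fact. The paper's version is more self-contained in that it does not require spotting the reduction, but your route is shorter, fully rigorous, and reuses the preliminary machinery as intended.
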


\subsection{Comparisons between consistency conditions}  \label{sec:comparison:consistency}
We are now ready to compare Liu's  $[\mathcal{G}]$-consistency (Definition \ref{def:LiuodotConsistency}), approximate $[\mathcal{G}]$-consistency (Definition \ref{Def:ApproximateodotConsistency}) and $[\mathcal{G}]$-consistency (Defintion \ref{Def:consistent_IPCM}). Under the assumption of $[\mathcal{G}]$-reciprocity, the following proposition holds.

\begin{proposition}\label{prop:implicationsConsistency}
The following statements hold:
\begin{enumerate}
\item  If $\tilde{A}$ is   Liu's  $[\mathcal{G}]$-consistent  then  $\tilde{A}$ is   approximately $[\mathcal{G}]$-consistent;
\item If $\tilde{A}$ is  approximately $[\mathcal{G}]$-consistent then  $\tilde{A}$ is   $[\mathcal{G}]$-consistent.
\end{enumerate}
\end{proposition}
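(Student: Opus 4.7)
The plan is to handle the two implications separately, in both cases leveraging the reformulations established earlier in the section rather than working directly from the definitions.

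For statement (1), the proof should be essentially immediate. Liu's $[\mathcal{G}]$-consistency is precisely the condition that $L = L^{\mathrm{id}}$ and $R = R^{\mathrm{id}}$ (using the identity permutation $\sigma = \mathrm{id}$) are $\mathcal{G}$-consistent PCMs, which is exactly one instance of the existential condition in Definition \ref{Def:ApproximateodotConsistency}. So I would simply point out that the identity permutation witnesses approximate $[\mathcal{G}]$-consistency, making this implication a one-liner.

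For statement (2), I would proceed in three steps. First, using Remark \ref{remark:equivalenceApproximateConsistency}, I rewrite the hypothesis as: there exists a permutation $\sigma$ such that $\tilde{A}^{\sigma}$ is Liu's $[\mathcal{G}]$-consistent. Second, I show that Liu's $[\mathcal{G}]$-consistency implies $[\mathcal{G}]$-consistency for any $[\mathcal{G}]$-reciprocal IPCM. To do so, I apply part (3) of Proposition \ref{prop:LiuEquivalent}: Liu's $[\mathcal{G}]$-consistency of $\tilde{A}^{\sigma}$ yields $\tilde{a}^{\sigma}_{ik} = \tilde{a}^{\sigma}_{ij} \odot_{[\mathcal{G}]} \tilde{a}^{\sigma}_{jk}$ for all $i<j<k$. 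By Theorem \ref{Theorem_productIntervals}, this componentwise gives $(a^{\sigma}_{ik})^{-} = (a^{\sigma}_{ij})^{-} \odot (a^{\sigma}_{jk})^{-}$ and $(a^{\sigma}_{ik})^{+} = (a^{\sigma}_{ij})^{+} \odot (a^{\sigma}_{jk})^{+}$, so by $\odot$-composing the two equations I obtain
\[
(a^{\sigma}_{ik})^{-}\odot(a^{\sigma}_{ik})^{+}=(a^{\sigma}_{ij})^{-}\odot(a^{\sigma}_{ij})^{+}\odot(a^{\sigma}_{jk})^{-}\odot(a^{\sigma}_{jk})^{+}\quad\forall i<j<k,
\]
which is the condition in Theorem \ref{Theorem_odotG_consistency_equivalences}(3); hence $\tilde{A}^{\sigma}$ is $[\mathcal{G}]$-consistent. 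Third, I invoke Proposition \ref{Prop:invariance_odotG_consistency}, which states that $[\mathcal{G}]$-consistency is invariant under permutation of alternatives, to conclude that $\tilde{A}$ itself is $[\mathcal{G}]$-consistent.

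No step appears genuinely obstacle-worthy here: the whole argument is a chain of reformulations using results already available (Remark \ref{remark:equivalenceApproximateConsistency}, Proposition \ref{prop:LiuEquivalent}, Theorem \ref{Theorem_odotG_consistency_equivalences}, and Proposition \ref{Prop:invariance_odotG_consistency}). The only delicate point worth double-checking is the bookkeeping in passing between $L^{\sigma}, R^{\sigma}$ and the interval entries of $\tilde{A}^{\sigma}$ (namely that Liu's consistency of $\tilde{A}^{\sigma}$ is equivalent to $\mathcal{G}$-consistency of $L^{\sigma}$ and $R^{\sigma}$), but this is exactly the content of Definition \ref{def:LiuodotConsistency} rewritten for the permuted matrix, so the argument is transparent.
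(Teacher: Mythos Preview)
Your proposal is correct and follows essentially the same route as the paper's proof: for (1) you take the identity permutation as witness, and for (2) you chain Remark~\ref{remark:equivalenceApproximateConsistency}, Proposition~\ref{prop:LiuEquivalent}, the componentwise identity (via Theorem~\ref{Theorem_productIntervals}) yielding the condition in Theorem~\ref{Theorem_odotG_consistency_equivalences}(3), and finally Proposition~\ref{Prop:invariance_odotG_consistency}. The paper does exactly this, with only cosmetic differences in presentation.
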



We stress that in the same way as $\mathcal{G}$-consistency implies $\mathcal{G}$-reciprocity \cite{CavalloDapuzzo}, both Liu's  $[\mathcal{G}]$-consistency and approximate  $[\mathcal{G}]$-consistency imply  $[\mathcal{G}]$-reciprocity. Conversely, $[\mathcal{G}]$-consistency does \emph{not} imply $[\mathcal{G}]$-reciprocity. It is sufficient to propose a counterexample in the form of the following  multiplicative IPCM:
\[
\begin{pmatrix}
[1,1] & [2,10] & [6,40] \\
[\frac{1}{5},\frac{1}{2}] & [1,1] & [3,4] \\
[\frac{1}{40},\frac{1}{6}] & [\frac{1}{8} , \frac{1}{3}] & [1,1]
\end{pmatrix},
\]
which is $[\mathcal{R^+}]$-consistent but not $[\mathcal{R^+}]$-reciprocal.

Finally, it can be shown that, unlike approximate $[\mathcal{G}]$-consistency, in which case one needs to seek for a Liu's  $[\mathcal{G}]$-consistent IPCM  $\tilde{A}^\sigma$ to guarantee  approximate $[\mathcal{G}]$-consistency of  $\tilde{A}$ (see Remark \ref{remark:equivalenceApproximateConsistency}),  checking $[\mathcal{G}]$-consistency of  $\tilde{A}$ is more immediate because no permutation $\sigma$ has to be considered (see Proposition \ref{Prop:invariance_odotG_consistency}).

The following examples show that the reverse implications in Proposition \ref{prop:implicationsConsistency} are not true.

\begin{example}
The additive IPCM $\tilde{A}^\sigma$ in Example \ref{example:Liu_not_preserves_consistency} is approximately $[\mathcal{R}]$-consistent  (because $\tilde{A}$ is Liu $[\mathcal{R}]$-consistent) but  not Liu $[\mathcal{R}]$-consistent.
%

\end{example}

\begin{example}
The following additive IPCM is $[\mathcal{R}]$-consistent, since $a_{13}^{-} + a_{13}^{+} = a_{12}^{-} + a_{12}^{+} + a_{23}^{-} + a_{23}^{+}$, but not approximately $[\mathcal{R}]$-consistent. 
\[
\begin{pmatrix}
 [0,0] & [0,1] & [0,1] \\
 [-1,0] & [0,0] & [-2,2] \\
 [-1,0] & [-2,2] & [0,0]
\end{pmatrix}
\]
\end{example}

The findings of Proposition \ref{prop:implicationsConsistency} and the previous counterexamples are summarized in Figure \ref{fig:intersection}.
\begin{figure}[htbp]
	\centering
		\includegraphics[width=0.33\textwidth]{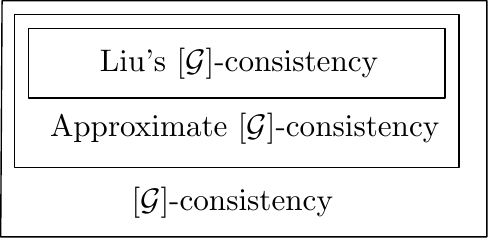}
	\caption{Inclusion relations between consistency conditions.}
	\label{fig:intersection}
\end{figure}

\section{$[\mathcal{G}]$-Consistency index of $[\mathcal{G}]$-reciprocal IPCMs}
\label{sec:inconsistency}
In this section, we propose a method for quantifying the $[\mathcal{G}]$-inconsistency of IPCMs as a violation of the condition of $[\mathcal{G}]$-consistency in Definition \ref{Def:consistent_IPCM}.

Let us denote with $\tilde{a}_{ijk}$ and $\tilde{a}_{ikj}$ the following intervals:
\begin{align*}
\tilde{a}_{ijk} & = [a_{ijk}^-, a_{ijk}^+]=\left[
{a}_{ij}^{-} \odot {a}_{jk}^{-} \odot {a}_{ki}^{-} \, , \,
{a}_{ij}^{+} \odot {a}_{jk}^{+} \odot {a}_{ki}^{+} \right] ,\\
\tilde{a}_{ikj} & = [a_{ikj}^-, a_{ikj}^+]= \left[
{a}_{ik}^{-} \odot {a}_{kj}^{-} \odot {a}_{ji}^{-} \, , \,
{a}_{ik}^{+} \odot {a}_{kj}^{+} \odot {a}_{ji}^{+}
\right] .
\end{align*}

with this notation we can rewrite the $[\mathcal{G}]$-consistency condition in Definition \ref{Def:consistent_IPCM} as follows, 
\begin{equation}
\label{eq:consistency_rewritten}
\tilde{a}_{ijk}= \tilde{a}_{ikj}  \quad \forall i,j,k \in \{1, \ldots, n\} .
\end{equation}
Since $[\mathcal{G}]$-inconsistency manifests itself in the violation of this latter condition and both sides of the equation are intervals, we consider appropriate to quantify $[\mathcal{G}]$-inconsistency by means of a suitable distance  between  $\tilde{a}_{ijk}$ and $\tilde{a}_{ikj}$. At this point, we can employ the $[\mathcal{G}]$-distance $d_{[\mathcal{G}]}$ in Proposition \ref{prop:dis_intervals} to measure the local $[\mathcal{G}]$-inconsistency associated with $i,j,k$ as follows:
\begin{align}\label{eq_oru_consistencyIndex_3}
 d_{[\mathcal{G}]}(\tilde{a}_{ijk},\tilde{a}_{ikj}) = & \max\{ d_\mathcal{G}(\tilde{a}_{ijk}^-, \tilde{a}_{ikj}^-),   d_\mathcal{G}(\tilde{a}_{ijk}^+, \tilde{a}_{ikj}^+) \} .
\end{align}

\begin{proposition}\label{Prop:simplification_DG_aijk_aikj}
Let $\tilde{A}$ be a $[\mathcal{G}]$-reciprocal IPCM. Then, the following equalities hold true:
\begin{align*}
 d_{[\mathcal{G}]}(\tilde{a}_{ijk},\tilde{a}_{ikj})=& \max \left\{  \tilde{a}_{ijk}^{-} \div \tilde{a}_{ikj}^{-} , \; \tilde{a}_{ikj}^{-} \div \tilde{a}_{ijk}^{-} \right\}   =  \max \left\{  \tilde{a}_{ijk}^{-} \div \tilde{a}_{ikj}^{-} , \;    \tilde{a}_{ijk}^{+} \div \tilde{a}_{ikj}^{+}   \right\} =  \\
 =&\max \left\{\tilde{a}_{ikj}^{-} \div \tilde{a}_{ijk}^{-}  , \; \tilde{a}_{ikj}^{+} \div \tilde{a}_{ijk}^{+} \right\} =\max \left\{  \tilde{a}_{ijk}^{+} \div \tilde{a}_{ikj}^{+} , \; \tilde{a}_{ikj}^{+} \div \tilde{a}_{ijk}^{+} \right\} .
\end{align*}
\end{proposition}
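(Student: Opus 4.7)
The plan is to prove that, under $[\mathcal{G}]$-reciprocity, the intervals $\tilde{a}_{ijk}$ and $\tilde{a}_{ikj}$ are reciprocal to one another in the sense of \eqref{eq:reciprocalInterval}, namely
\[
\tilde{a}_{ikj}^{-}= (\tilde{a}_{ijk}^{+})^{(-1)} \qquad \text{and} \qquad \tilde{a}_{ikj}^{+}= (\tilde{a}_{ijk}^{-})^{(-1)}.
\]
This is the key observation and the main technical step. It follows by substituting the identities from Corollary~\ref{cor:alternativa3}, namely $a_{ji}^{-}=(a_{ij}^{+})^{(-1)}$ and $a_{ji}^{+}=(a_{ij}^{-})^{(-1)}$, into the definitions of $\tilde{a}_{ikj}^{-}$ and $\tilde{a}_{ikj}^{+}$, then using \eqref{composition-1} together with the commutativity of $\odot$ to recognise the result as the inverses of $\tilde{a}_{ijk}^{+}$ and $\tilde{a}_{ijk}^{-}$ respectively.

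Once this identity is in hand, set $P:=\tilde{a}_{ijk}^{-}\odot \tilde{a}_{ijk}^{+}$. Using the reciprocity identity above together with \eqref{composition-1} and commutativity, a short computation gives
\[
\tilde{a}_{ijk}^{-} \div \tilde{a}_{ikj}^{-} \;=\; \tilde{a}_{ijk}^{+} \div \tilde{a}_{ikj}^{+} \;=\; P,\qquad
\tilde{a}_{ikj}^{-} \div \tilde{a}_{ijk}^{-} \;=\; \tilde{a}_{ikj}^{+} \div \tilde{a}_{ijk}^{+} \;=\; P^{(-1)}.
\]
Hence each of the four maxima appearing in the statement collapses to $\max\{P,P^{(-1)}\}$, which by Definition~\ref{norm} is exactly $\|P\|_{\mathcal{G}}$, so all four are equal.

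It remains to identify this common value with $d_{[\mathcal{G}]}(\tilde{a}_{ijk},\tilde{a}_{ikj})$. By Proposition~\ref{prop:dis_intervals} and Definition~\ref{norm_intervals},
\[
d_{[\mathcal{G}]}(\tilde{a}_{ijk},\tilde{a}_{ikj})
= \bigl\| [\tilde{a}_{ijk}^{-}\div \tilde{a}_{ikj}^{-},\; \tilde{a}_{ijk}^{+}\div \tilde{a}_{ikj}^{+}]\bigr\|_{[\mathcal{G}]}
= \max\bigl\{ \|P\|_{\mathcal{G}},\, \|P\|_{\mathcal{G}} \bigr\}
= \|P\|_{\mathcal{G}},
\]
where the middle equality uses the two identities for $P$ established above. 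This closes the chain of equalities and completes the proof. The only genuinely non-trivial step is the reciprocal-interval identity between $\tilde{a}_{ijk}$ and $\tilde{a}_{ikj}$; everything else is a routine application of the group axioms, the $\mathcal{G}$-norm, and Proposition~\ref{prop:dis_intervals}.
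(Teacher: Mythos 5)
Your key computation is correct, but your concluding step does not follow from it. Under $[\mathcal{G}]$-reciprocity one indeed has $\tilde{a}_{ikj}^{-}=(\tilde{a}_{ijk}^{+})^{(-1)}$ and $\tilde{a}_{ikj}^{+}=(\tilde{a}_{ijk}^{-})^{(-1)}$, so, writing $P=\tilde{a}_{ijk}^{-}\odot\tilde{a}_{ijk}^{+}$ and
\[
A=\tilde{a}_{ijk}^{-}\div\tilde{a}_{ikj}^{-},\quad B=\tilde{a}_{ikj}^{-}\div\tilde{a}_{ijk}^{-},\quad C=\tilde{a}_{ijk}^{+}\div\tilde{a}_{ikj}^{+},\quad D=\tilde{a}_{ikj}^{+}\div\tilde{a}_{ijk}^{+},
\]
you correctly obtain $A=C=P$ and $B=D=P^{(-1)}$, and hence $d_{[\mathcal{G}]}(\tilde{a}_{ijk},\tilde{a}_{ikj})=\max\{P,P^{(-1)}\}=||P||_{\mathcal{G}}$. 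The gap is the sentence ``hence each of the four maxima appearing in the statement collapses to $\max\{P,P^{(-1)}\}$'': by your own identities the second maximum in the statement is $\max\{A,C\}=\max\{P,P\}=P$ and the third is $\max\{B,D\}=P^{(-1)}$, and neither equals $\max\{P,P^{(-1)}\}$ unless $P=e$. So your argument establishes the first and fourth equalities of the proposition but actually \emph{refutes} the second and third; you should have noticed that your intermediate identities contradict the conclusion you then assert.

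The root of the discrepancy lies in the paper itself. The paper's proof expands $d_{[\mathcal{G}]}(\tilde{a}_{ijk},\tilde{a}_{ikj})$ as $\max\{A,B,C,D\}$ and then pairs the terms as $A=D$ and $B=C$, which would make all four maxima coincide; but that pairing rests on an inversion slip: substituting $a_{xy}^{-}=(a_{yx}^{+})^{(-1)}$ (Corollary \ref{cor:alternativa3}) into numerator $u$ and denominator $v$ of $A$ turns it into $u^{(-1)}\div v^{(-1)}=v\div u$, not $u\div v$, so the correct pairing is yours ($A=C$, $B=D$), not the paper's. The paper's own numerical example immediately after the proposition confirms this: there $A=C=5/112$ and $B=D=112/5$, so $\max\{A,C\}=5/112$ while $d_{[\mathcal{R}^{+}]}(\tilde{a}_{123},\tilde{a}_{132})=112/5$. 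The statement should therefore be weakened to the first and fourth equalities only (or the middle two expressions replaced by $\max\{A,D\}$ and $\max\{B,C\}$); that weaker version is what your argument proves and is all the paper uses later, e.g.\ in Definition \ref{def:[G]_consistencyIndex} and Example \ref{Example:Multiplicative_[G]consistencyIndex}.
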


%

\begin{example}
Let us consider the multiplicative IPCM  $\tilde{A}$ in Example \ref{example:multiplicativeIntervalPCM};  then, we have
\[
d_{[\mathcal{R^+}]}(\tilde{a}_{123},\tilde{a}_{132})= \max \left\{  \frac{3/28}{12/5} ,  \frac{12/5}{3/28}  \right\}  
= \frac{12/5}{3/28}  = \frac{112}{5} = 22.4.
\]

\end{example}

At this point, similarly  to the ${\mathcal{G}}$-consistency index in Definition \ref{def:consistency_index},  we can extend the local evaluation of the  $[\mathcal{G}]$-inconsistency to an entire IPCM of order $n \geq 3$ thanks to the concept of ${\mathcal{G}}$-mean  (see Definition \ref{def:G_mean}) as follows:

\begin{definition}\label{def:[G]_consistencyIndex}
Let $\tilde{A}$ be a $[\mathcal{G}]$-reciprocal IPCM of order  $n \geq3$. Then, its $[\mathcal{G}]$-consistency index is
\begin{equation}\label{eq_oru_consistencyIndex}
 I_{[\mathcal{G}]} \left( \tilde{A} \right) = \left( \bigodot_{i<j<k} d_{[\mathcal{G}]}(\tilde{a}_{ijk},\tilde{a}_{ikj})  \right)^{(\frac{1}{|T|})}, 
\end{equation}
with $T=\{(i,j,k):i<j<k\}$ and   $|T|=\frac{n(n-2)(n-1)}{6}$ its cardinality.
\end{definition}

Similarly to Definition \ref{def:consistency_index},   we stress that,  in  Definition \ref{def:[G]_consistencyIndex}, $|T| \in \N$, with $|T|\geq 1$, and   the $[\mathcal{G}]$-consistency index is a $\mathcal{G}$-mean (see Definition \ref{def:G_mean}) of $|T|$ $[\mathcal{G}]$-distances from  $[\mathcal{G}]$-consistency. Moreover, 
let  $\phi$ be an isomorphism between $\mathcal{G}= (G,
\odot, \leq)$ and $\mathcal{H}= (H, *, \leq)$, $\tilde{A}'=\phi(\tilde{A})=(\phi(\tilde{a}_{ij}))$; then, by \eqref{eq_oru_consistencyIndex_3}, equivalence in \eqref{isoeq}, \eqref{eqisodis} and \eqref{eq:mean_isomorphism}, we have that:
\begin{equation}
I_{\mathcal{[H]}}(\tilde{A}')=\phi(I_{\mathcal{[G]}}(\tilde{A})).
\end{equation}

\begin{proposition}\cite{CavalloDapuzzo}\label{prop:unique_element_for_[G]_consistency}
Let $\tilde{A}$ be a $[\mathcal{G}]$-reciprocal IPCM; then: 
\begin{equation*}
\label{ eq:IG_>_e}
 I_{\mathcal{[G]}}(\tilde{A}) \geq e, \quad   I_{\mathcal{[G]}}(\tilde{A})  =e   \Leftrightarrow  \tilde{A} \; is \; [\mathcal{G}]\text{-consistent IPCM}.
\end{equation*}
\end{proposition}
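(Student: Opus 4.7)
The plan is to separate the two assertions in the proposition, mirroring the structure of Proposition \ref{prop:unique_element_for_consistency} but transferring the arguments from the point-valued setting to the interval-valued one via the $[\mathcal{G}]$-distance of Proposition \ref{prop:dis_intervals} and the characterization of $[\mathcal{G}]$-consistency in Theorem \ref{Theorem_odotG_consistency_equivalences}.

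For the inequality $I_{[\mathcal{G}]}(\tilde{A}) \geq e$, I would first invoke property (1) of Definition \ref{def:dis_intervals} to obtain $d_{[\mathcal{G}]}(\tilde{a}_{ijk},\tilde{a}_{ikj}) \geq e$ for every triple $(i,j,k) \in T$. By repeated use of the order-compatibility \eqref{ordercons}, the $\odot$-product of $|T|$ factors that are each $\geq e$ is itself $\geq e$. Finally, the $(1/|T|)$-root is monotone (it is the unique solution of $x^{(|T|)} = y$, which is strictly increasing in $y$ on a real continuous Alo-group), so from $e^{(|T|)} = e$ we conclude $I_{[\mathcal{G}]}(\tilde{A}) \geq e$.

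For the equivalence, the direction $(\Leftarrow)$ is immediate: if $\tilde{A}$ is $[\mathcal{G}]$-consistent then by Definition \ref{Def:consistent_IPCM} one has $\tilde{a}_{ijk}=\tilde{a}_{ikj}$ for all $i,j,k$, in particular for every triple in $T$; property (2) of Definition \ref{def:dis_intervals} then kills every factor, the $\odot$-product equals $e$, and the root of $e$ is $e$. The direction $(\Rightarrow)$ requires a small Alo-group lemma: if $I_{[\mathcal{G}]}(\tilde{A}) = e$, then raising to the $(|T|)$-power gives $\bigodot_{i<j<k} d_{[\mathcal{G}]}(\tilde{a}_{ijk},\tilde{a}_{ikj}) = e$, and since each factor is $\geq e$, all factors must in fact equal $e$. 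Once this is established, property (2) of Definition \ref{def:dis_intervals} yields $\tilde{a}_{ijk}=\tilde{a}_{ikj}$ for every $i<j<k$, and by Theorem \ref{Theorem_odotG_consistency_equivalences} this is equivalent to $[\mathcal{G}]$-consistency of $\tilde{A}$.

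The only non-routine point is the auxiliary fact that, in an Alo-group, a finite $\odot$-product of elements $\geq e$ that equals $e$ can only be realised when every factor is $e$. The argument is short: for two factors, $a,b \geq e$ and $a \odot b = e$ give $b = a^{(-1)}$; but $b \geq e$ combined with the equivalence $a \geq e \Leftrightarrow a^{(-1)} \leq e$ from \eqref{composition-1} forces $a \leq e$, hence $a = e$ and $b = e$. A straightforward induction on the number of factors, combined with \eqref{ordercons} to bundle the remaining product, extends this to $|T|$ factors. Everything else in the proof is a direct unwinding of definitions.
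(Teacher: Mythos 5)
Your proof is correct and follows essentially the same route as the paper, whose own proof is just a pointer to the definition of $I_{[\mathcal{G}]}$, formula \eqref{eq_oru_consistencyIndex_3}, and the $[\mathcal{G}]$-distance properties of Proposition \ref{prop:dis_intervals}; you simply make explicit the standard Alo-group facts (a $\odot$-product of elements $\geq e$ equals $e$ only when every factor is $e$, monotonicity of the root) and correctly close the gap between ``$\tilde{a}_{ijk}=\tilde{a}_{ikj}$ for $i<j<k$'' and full $[\mathcal{G}]$-consistency via Theorem \ref{Theorem_odotG_consistency_equivalences}.
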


%

If all the entries of an IPCM collapse to singletons, then the  $[\mathcal{G}]$-consistency index $I_{[\mathcal{G}]}$ for IPCMs becomes the $\mathcal{G}$-consistency index $I_{\mathcal{G}}$ for PCMs (Definition \ref{def:consistency_index}).
%
Hence, it is important to know that, to corroborate the soundness of $I_{\mathcal{G}}$, it was shown \cite{Brunelli2016,cavallo2012investigating} that it satisfies a set of reasonable properties and therefore, at present, it seems a reasonable function for estimating inconsistency.
Moreover, contrarily to the approaches by Liu \cite{Liu20092686} and Li et al. \cite{Li2016628}, where the consistency of a multiplicative IPCM is measured by computing Saaty's consistency index \cite{Saaty77} of one or two  associated  PCMs, index $I_{[\mathcal{G}]}$ is computed directly on the original IPCM (i.e. without considering associated PCMs), and it is suitable for each kind of IPCM (i.e. not only multiplicative IPCMs). For the cases of multiplicative, additive, fuzzy  IPCMs, $[\mathcal{G}]$-consistency index $I_{[\mathcal{G}]}$ assumes the following forms:
\begin{equation}\label{eq:multiplicativeConsistencyIndex}
 I_{[\mathcal{R}^+]} \left( \tilde{A} \right) = \left( \prod_{i<j<k} d_{[\mathcal{R}^+]}(\tilde{a}_{ijk},\tilde{a}_{ikj}) \right)^{\frac{6}{n(n-2)(n-1)}}, 
\end{equation}
\begin{equation}\label{eq:additiveConsistencyIndex}
 I_{[\mathcal{R}]} \left( \tilde{A} \right) = \frac{6}{n(n-2)(n-1)}  \sum_{i<j<k} d_{[\mathcal{R}]}(\tilde{a}_{ijk},\tilde{a}_{ikj}), 
\end{equation}
\begin{equation}\label{eq:fuzzyConsistencyIndex}
 I_{[\mathcal{I}]} \left( \tilde{A} \right) =\frac{ \left( \prod_{i<j<k} d_{[\mathcal{I}]}(\tilde{a}_{ijk},\tilde{a}_{ikj}) \right)^{\frac{6}{n(n-2)(n-1)}}}{ \left( \prod_{i<j<k} d_{[\mathcal{I}]}(\tilde{a}_{ijk},\tilde{a}_{ikj}) \right)^{\frac{6}{n(n-2)(n-1)}}
+  \left( \prod_{i<j<k} (1- d_{[\mathcal{I}]}(\tilde{a}_{ijk},\tilde{a}_{ikj})) \right)^{\frac{6}{n(n-2)(n-1)}}}.
\end{equation}
We remark that isomorphisms between Alo-groups allow us to compare consistency of IPCMs defined over different Alo-groups; e.g. in Example \ref{Example:Fuzzy_[G]consistencyIndex}, we will compare consistency of a multiplicative IPCM with consistency of a fuzzy IPCM.
%
%
%

\begin{example}\label{Example:Multiplicative_[G]consistencyIndex}
Let us consider the following multiplicative $[\mathcal{R^+}]$-reciprocal IPCM, which was used also by Arbel and Vargas \cite{ArbelVargas1993}, Haines \cite{Haines1998}, and Wang et al. \cite{WangEtAl2005} 
\[
\tilde{A}_{1}=
\begin{pmatrix}
[1,1] & [2,5] & [2,4] & [1,3] \\
[ \frac{1}{5},\frac{1}{2} ] & [1,1] & [1,3] & [1,2] \\
[ \frac{1}{4},\frac{1}{2} ] & [ \frac{1}{3},1 ] & [1,1] & [\frac{1}{2},1] \\
[ \frac{1}{3},1 ] & [ \frac{1}{2},1 ] & [1,2] & [1,1] 
\end{pmatrix}.
\]
By applying Definition \ref{def:[G]_consistencyIndex} and Proposition \ref{Prop:simplification_DG_aijk_aikj},   $ I_{ [\mathcal{R^+}] } \left( \tilde{A}_{1} \right) $ is given by the following geometric mean:
\begin{align*}
 I_{ [\mathcal{R^+}] } \left( \tilde{A}_{1} \right) = & \sqrt[4]{d_{[R^+]}(\tilde{a}_{123}, \tilde{a}_{132} )  \cdot    d_{[R^+]}(\tilde{a}_{124}, \tilde{a}_{142} ) \cdot    d_{[R^+]}(\tilde{a}_{134}, \tilde{a}_{143} ) \cdot    d_{[R^+]}(\tilde{a}_{234}, \tilde{a}_{243} )  } =\\
=& \sqrt[4]{\max \left\{\frac{a_{12}^- \cdot a_{23}^- \cdot a_{31}^- }{a_{13}^- \cdot a_{32}^- \cdot a_{21}^-}, \frac{a_{13}^- \cdot a_{32}^- \cdot a_{21}^-}{a_{12}^- \cdot a_{23}^- \cdot a_{31}^- }\right\}         \cdot   \max \left\{\frac{a_{12}^- \cdot a_{24}^- \cdot a_{41}^- }{a_{14}^- \cdot a_{42}^- \cdot a_{21}^-}, \frac  {a_{14}^- \cdot a_{42}^- \cdot a_{21}^-}  {a_{12}^- \cdot a_{24}^- \cdot a_{41}^- }     \right\}     } \cdot \\
\cdot&  \sqrt[4]{ \max \left\{\frac{a_{13}^- \cdot a_{34}^- \cdot a_{41}^- }{a_{14}^- \cdot a_{43}^- \cdot a_{31}^-}, \frac{a_{14}^- \cdot a_{43}^- \cdot a_{31}^-}{a_{13}^- \cdot a_{34}^- \cdot a_{41}^- }     \right\}   \cdot        \max \left\{\frac{a_{23}^- \cdot a_{34}^- \cdot a_{42}^- }{a_{24}^- \cdot a_{43}^- \cdot a_{32}^-}, \frac{a_{24}^- \cdot a_{43}^- \cdot a_{32}^-}{a_{23}^- \cdot a_{34}^- \cdot a_{42}^- }     \right\}               }=
 \\
 =&
\sqrt[4]{\frac{15}{4} \cdot \frac{20}{3} \cdot \frac{4}{3} \cdot \frac{4}{3} }
\approx 2.58199.
\end{align*}
\end{example}

\begin{example}\label{Example:Fuzzy_[G]consistencyIndex}
Let us consider the following  $[\mathcal{I}]$-reciprocal fuzzy IPCM, proposed by Wang and Li \cite{WangLi2012} 

\[
\tilde{A}_{2}=
\begin{pmatrix}
[0.50, 0.50] & [0.35,0.50] & [0.50,0.60] & [0.45,0.60] \\
[ 0.50, 0.65 ] & [0.50,0.50] & [0.55,0.70] & [0.50,0.70] \\
[ 0.40,0.50 ] & [ 0.30, 0.45 ] & [0.50,0.50] & [0.40,0.55] \\
[ 0.40,0.55 ] & [ 0.30,0.50 ] & [0.45,0.60] & [0.50,0.50]
\end{pmatrix}.
\]
Its consistency index can be computed by applying \eqref{eq:fuzzyConsistencyIndex}, or   by applying the isomorphism $h:  \mathbb{R}^{+}  \rightarrow ]0,1[$ in  \eqref{eq:isomorphism_psi_01}, that is:
 $$I_{ [\mathcal{I}]} \left( \tilde{A}_{2} \right) =  h(I_{\mathcal{[R^+]}}(h^{-1}(\tilde{A}_{2} )))\approx 0.503448.$$ Interestingly, although they are expressed on two different scales, values of consistency indices from different representations of preferences are, thanks to the isomorphisms, comparable. For instance,  let us consider the multiplicative IPCM in Example \ref{Example:Multiplicative_[G]consistencyIndex}; then, 
by using the isomorphism $h$ in  \eqref{eq:isomorphism_psi_01}, we have that
\[
h \left( I_{ [\mathcal{R^+}]} \left( \tilde{A}_{1} \right) \right) \approx 0.720826,
\]
which entails that $\tilde{A}_{1}$ is more inconsistent than $\tilde{A}_{2}$.
\end{example}

\section{$[\mathcal{G}]$-Indeterminacy index of $[\mathcal{G}]$-reciprocal IPCMs}
\label{sec:indeterminacy}

If we follow the definition of $[\mathcal{G}]$-consistency (Definition \ref{Def:consistent_IPCM}), we could encounter cases where IPCMs with extremely wide intervals are considered $[\mathcal{G}]$-consistent. Li et al.\cite{Li2016628} and Zhang \cite{Zhang2016} 
reckoned that a multiplicative IPCM with all non-diagonal entries equal to $\tilde{a}_{ij}=[1/9,9]$ would be considered consistent. This case reflects a high ambiguity from the decision maker's side and is classified as fully consistent. Thus, in all these cases, the consistency of a IPCM, as formulated in Definition \ref{Def:consistent_IPCM} loses its capacity of yielding information on the real ability of a decision maker to be rational. To mitigate this problem, 
Li et al. \cite{Li2016628}
 suggested the use of an index of indeterminacy. Inconsistency of interval-valued preferences and width of the intervals can then be used in concert to better asses the discriminative capacity of a decision maker.
We shall here propose a general definition of indeterminacy index and show that the proposals by Li et al.~\cite{Li2016628} and Zhang \cite{Zhang2016}
 fits within in.

\begin{definition}\label{def:indeterminancy_index_3}
Let $\tilde{A}$ be a $[\mathcal{G}]$-reciprocal IPCM. The indeterminacy value of the entry $\tilde{a}_{ij}$ is  
\begin{equation}
\label{eq:dG}
\delta(\tilde{a}_{ij})=d_\mathcal{G} (a_{ij}^{-}, a_{ij}^{+}).
\end{equation}
\end{definition}

\begin{corollary}\label{coro:deltaIndetrminacy}
Let $\tilde{A}$ be a $[\mathcal{G}]$-reciprocal IPCM; then the following equality holds:
\begin{equation}
\delta(\tilde{a}_{ij})=a_{ij}^{+} \div a_{ij}^{-}.
\end{equation}
\end{corollary}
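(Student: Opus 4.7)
The plan is to unpack the definition of $\delta(\tilde{a}_{ij})$ using the standard facts about the $\mathcal{G}$-norm and $\mathcal{G}$-distance already established in Section \ref{sec:distanceAloGrous}, and then exploit the ordering $a_{ij}^- \leq a_{ij}^+$ that is built into the very definition \eqref{eq:interval_on_G} of an interval on $G$. Reciprocity itself plays no direct role in this particular identity; it is simply inherited from the ambient hypothesis on $\tilde{A}$.

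First, I would apply Proposition \ref{dcirc} to rewrite
\begin{equation*}
\delta(\tilde{a}_{ij}) = d_{\mathcal{G}}(a_{ij}^-, a_{ij}^+) = \|a_{ij}^- \div a_{ij}^+\|_{\mathcal{G}} = \max\{a_{ij}^- \div a_{ij}^+, (a_{ij}^- \div a_{ij}^+)^{(-1)}\},
\end{equation*}
where the last equality is just Definition \ref{norm}. Using the identity $(a\div b)^{(-1)} = b \div a$ from \eqref{composition-1}, this becomes
\begin{equation*}
\delta(\tilde{a}_{ij}) = \max\{a_{ij}^- \div a_{ij}^+, \; a_{ij}^+ \div a_{ij}^-\}.
\end{equation*}

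Next, I would observe that since $\tilde{a}_{ij} \in [G]$, by \eqref{eq:interval_on_G} we have $a_{ij}^- \leq a_{ij}^+$. Applying the order-consistency property \eqref{ordercons} with $c = (a_{ij}^-)^{(-1)}$ gives $e \leq a_{ij}^+ \div a_{ij}^-$; applying it with $c = (a_{ij}^+)^{(-1)}$ instead gives $a_{ij}^- \div a_{ij}^+ \leq e$. Hence $a_{ij}^- \div a_{ij}^+ \leq e \leq a_{ij}^+ \div a_{ij}^-$, and the maximum above is attained at $a_{ij}^+ \div a_{ij}^-$, yielding the claimed equality.

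The main obstacle here is essentially nonexistent: the argument is a short chain of definitional unfoldings combined with the elementary Alo-group identity $(a \div b)^{(-1)} = b \div a$ and the monotonicity of $\odot$. The only thing worth double-checking is that the inequality $a_{ij}^- \leq a_{ij}^+$ correctly passes through the inversion and the $\div$ operation in the right direction, which is handled cleanly by \eqref{composition-1} and \eqref{ordercons}.
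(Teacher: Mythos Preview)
Your proof is correct and follows essentially the same approach as the paper's own proof, which simply invokes $d_{\mathcal{G}}(a_{ij}^-,a_{ij}^+)=\max\{a_{ij}^-\div a_{ij}^+,\,a_{ij}^+\div a_{ij}^-\}$ together with $a_{ij}^+\geq a_{ij}^-$. Your version is just a more detailed unpacking of those two observations, and your remark that $[\mathcal{G}]$-reciprocity is not actually used is accurate.
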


%
%

\begin{definition}\label{def:indeterminancy_index}
Let $\tilde{A}$ be a $[\mathcal{G}]$-reciprocal IPCM.  The $[\mathcal{G}]$-indeterminacy index is  
\begin{equation*}
\Delta_{[\mathcal{G}]} \left( \tilde{A} \right) =\left( \bigodot_{i \neq j} \delta (\tilde{a}_{ij})\right)^{(\frac{1}{n(n-1)})}.
\end{equation*}
\end{definition}

We stress that,  in  Definition \ref{def:indeterminancy_index}, $n(n-1) \in \N$ and  the $[\mathcal{G}]$-indeterminacy index is a $\mathcal{G}$-mean (see Definition \ref{def:G_mean}) of   $n(n-1)$ indeterminacy values.  Moreover, 
let  $\phi$ be an isomorphism between $\mathcal{G}= (G,
\odot, \leq)$ and $\mathcal{H}= (H, *, \leq)$, $\tilde{A}'=\phi(\tilde{A})=(\phi(\tilde{a}_{ij}))$; then, by  \eqref{eq:mean_isomorphism}, we have that:
\begin{equation}
\Delta_{\mathcal{[H]}}(\tilde{A}')=\phi(\Delta_{\mathcal{[G]}}(\tilde{A})).
\end{equation}

Furthermore, by  Corollary \ref{cor:alternativa3}, we have that $a_{ij}^{+} \div a_{ij}^{-} = a_{ji}^{+} \div a_{ji}^{-}$; thus, 
it is sufficient to consider the comparisons in the upper triangle of $\tilde{A}$ and it leads to a simplification of the previous formula into:
\begin{equation}\label{eq:IndeterminancyIndex}
\Delta_{[\mathcal{G}]} \left( \tilde{A} \right)=\left( \bigodot_{i < j} \left( a_{ij}^{+} \div  a_{ij}^{-} \right)^{(2)} \right)^{(\frac{1}{n(n-1)})}.
\end{equation}
%
%
%
\begin{proposition}\label{prop:unique_element_for_[G]indeterminancy}
Let $\tilde{A}$ be a $[\mathcal{G}]$-reciprocal IPCM; then: 
\begin{equation*}
\label{ eq:IG_>_e2}
\Delta_{[\mathcal{G}]} \left( \tilde{A} \right) \geq e, \quad  \Delta_{[\mathcal{G}]} \left( \tilde{A} \right) =e   \Leftrightarrow  \tilde{A} \in [G]_p.
\end{equation*}
\end{proposition}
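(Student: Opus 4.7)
The plan is to reduce the proposition to two observations: first, that every indeterminacy value $\delta(\tilde{a}_{ij})$ sits at or above the identity $e$; second, that the $\mathcal{G}$-mean of a finite family of elements $\geq e$ is itself $\geq e$, with equality forcing each element to equal $e$. Both observations rest only on the order-compatibility axiom (\ref{ordercons}) and on the fact that, in a real continuous Alo-group, $(n)$-th roots are uniquely defined and hence order-preserving.

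\textbf{Step 1 (pointwise bound).} By Corollary \ref{coro:deltaIndetrminacy}, $\delta(\tilde{a}_{ij}) = a_{ij}^{+} \div a_{ij}^{-}$. Since $a_{ij}^{-} \leq a_{ij}^{+}$ by definition of an interval in $[G]$, applying (\ref{composition-1}) and (\ref{ordercons}) yields $a_{ij}^{+} \div a_{ij}^{-} \geq a_{ij}^{+} \div a_{ij}^{+} = e$. Moreover, $\delta(\tilde{a}_{ij}) = e$ if and only if $a_{ij}^{+} = a_{ij}^{-}$, i.e. $\tilde{a}_{ij} \in [G]_p$.

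\textbf{Step 2 (from pointwise to the $\mathcal{G}$-mean).} Iterating (\ref{ordercons}) on the $n(n-1)$ factors of $\bigodot_{i\neq j}\delta(\tilde{a}_{ij})$, each of which is $\geq e$, gives $\bigodot_{i\neq j}\delta(\tilde{a}_{ij}) \geq e$. Next, the $(n(n-1))$-th root preserves order: if $x < e$, then by iterated order-compatibility $x^{(n(n-1))} < e$, contradicting the bound above; hence the unique solution of $x^{(n(n-1))} = \bigodot_{i\neq j}\delta(\tilde{a}_{ij}) \geq e$ satisfies $x \geq e$. This establishes $\Delta_{[\mathcal{G}]}(\tilde{A}) \geq e$.

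\textbf{Step 3 (characterizing equality).} If $\tilde{A} \in [G]_p$, then every $\delta(\tilde{a}_{ij}) = e$ by Step 1, so $\Delta_{[\mathcal{G}]}(\tilde{A}) = (e^{(n(n-1))})^{(1/n(n-1))} = e$. Conversely, assume $\Delta_{[\mathcal{G}]}(\tilde{A}) = e$. Then $\bigodot_{i\neq j}\delta(\tilde{a}_{ij}) = e$ by uniqueness of the $(n(n-1))$-th root. If some $\delta(\tilde{a}_{i_0 j_0}) > e$, then, combining this strict inequality with $\delta(\tilde{a}_{ij}) \geq e$ for the remaining factors via (\ref{ordercons}), the linearly ordered structure gives $\bigodot_{i\neq j}\delta(\tilde{a}_{ij}) > e$, a contradiction. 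Hence every $\delta(\tilde{a}_{ij}) = e$, so $a_{ij}^{-} = a_{ij}^{+}$ for all $i,j$, i.e. $\tilde{A} \in [G]_p$.

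There is no real obstacle here; the only care needed is to invoke the uniqueness of $(n)$-roots (guaranteed by the standing hypothesis that $\mathcal{G}$ is a real continuous Alo-group with $G$ an open interval) in order to legitimately pass order inequalities through the $(1/n(n-1))$-th root appearing in the definition of the $\mathcal{G}$-mean.
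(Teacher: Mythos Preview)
Your proof is correct and follows essentially the same strategy as the paper: establish $\delta(\tilde{a}_{ij})\geq e$ with equality iff $\tilde{a}_{ij}\in[G]_p$, then pass to the $\mathcal{G}$-mean. The only cosmetic difference is that the paper obtains the pointwise bound by citing Definition~\ref{def:indeterminancy_index_3} (so $\delta(\tilde{a}_{ij})=d_{\mathcal{G}}(a_{ij}^-,a_{ij}^+)$) together with the metric axioms in Proposition~\ref{dcirc}, whereas you use the closed form of Corollary~\ref{coro:deltaIndetrminacy} and argue directly from order-compatibility; your version is simply a fully spelled-out variant of the paper's one-line citation chain.
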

%
%
For multiplicative, additive and fuzzy IPCMs, the indeterminacy index can be written as follows, respectively,
\begin{equation}\label{eq:MultiplicativeIndeterminancyIndex}
\Delta_{[\mathcal{R}^+]} \left( \tilde{A} \right)=\left( \prod_{i < j} \left( \frac{a_{ij}^{+}}{ a_{ij}^{-} }  \right) \right)^{\frac{2}{n(n-1)}}, 
\end{equation}
\begin{equation}\label{eq:AdditiveIndeterminancyIndex}
\Delta_{[\mathcal{R}]} \left( \tilde{A} \right)=  \frac{2}{n(n-1)} \sum_{i < j} \left(a_{ij}^{+} - a_{ij}^{-}     \right) , 
\end{equation}
\begin{equation}\label{eq:FuzzyIndeterminancyIndex}
\Delta_{[\mathcal{I}]} \left( \tilde{A} \right)=\frac{\left( \prod_{i < j}  \frac{a_{ij}^{+}(1-a_{ij}^{-})}{a_{ij}^{+}(1-a_{ij}^{-})+(1-a_{ij}^{+})a_{ij}^{-}} \right)^{\frac{2}{n(n-1)}} }    {\left( \prod_{i < j}  \frac{a_{ij}^{+}(1-a_{ij}^{-})}{a_{ij}^{+}(1-a_{ij}^{-})+(1-a_{ij}^{+})a_{ij}^{-}}  \right)^{\frac{2}{n(n-1)}}
+ \left( \prod_{i < j} \left( 1-\frac{a_{ij}^{+}(1-a_{ij}^{-})}{a_{ij}^{+}(1-a_{ij}^{-})+(1-a_{ij}^{+})a_{ij}^{-}} \right) \right)^{\frac{2}{n(n-1)}}}.
\end{equation}
It is worth noting that:
\begin{itemize}
\item The indeterminacy index (\ref{eq:MultiplicativeIndeterminancyIndex}) is equal to the indeterminacy index proposed by Li et al.~\cite{Li2016628}; 
  \item Function \eqref{eq:AdditiveIndeterminancyIndex} represents a  way to measure the indeterminacy of an additive IPCM and to best of our knowledge there has not been similar proposals for the additive approach in the literature;
 \item Equation (\ref{eq:FuzzyIndeterminancyIndex}) is different from the indeterminacy index proposed by Wang and Chen \cite{WANG201459}  because (\ref{eq:FuzzyIndeterminancyIndex}) takes in account the fuzzy mean, instead of geometric mean, and inverse of fuzzy group operation $\otimes$, instead of classical division (the set $]0,1[$ is not closed under the classical division of $\R$). 
\end{itemize} 
Finally, the $[\mathcal{G}]$-indeterminacy index is suitable for each kind of IPCM (i.e. not only multiplicative and fuzzy IPCMs) and isomorphisms between Alo-groups allow us to compare indeterminacy of IPCMs defined over different Alo-groups. In the following example we compare indeterminacy of a multiplicative IPCM with indeterminacy of a fuzzy IPCM.

\begin{example} \label{Example:indeterminacyIndex}
Let us consider the multiplicative IPCM $\tilde{A}_{1}$ in Example \ref{Example:Multiplicative_[G]consistencyIndex} and the fuzzy IPCM in Example \ref{Example:Fuzzy_[G]consistencyIndex}.
Firstly, we compute  the $[\mathcal{R}^{+}]$-indeterminacy index of $\tilde{A}_{1}$ as follows:

\begin{align*}
\Delta_{[\mathcal{R}^{+}]}(\tilde{A}_{1}) = \left( \prod_{i<j} \left( \frac{a_{ij}^{+}}{a_{ij}^{-}}\right)^2 \right)^{\frac{1}{4(4-1)}}
              & = \left(      \frac{5}{2} \cdot \frac{4}{2} \cdot \frac{3}{1} \cdot \frac{3}{1} \cdot \frac{2}{1} \cdot \frac{1}{\frac{1}{2}}     \right) ^{1/6} = 180^{1/6} \approx 2.376.
\end{align*}

Secondly, we   compute the $[\mathcal{I}]$-indeterminacy index of $\tilde{A}_{2}$, by applying the isomorphism $h$ in  \eqref{eq:isomorphism_psi_01}, that is: 
$$\Delta_{[\mathcal{I]}} \left( \tilde{A}_{2} \right) =  h(\Delta_{\mathcal{[R^+]}}(h^{-1}(\tilde{A}_{2} )))\approx  0.6506.$$

In their present forms the two values are incomparable, but they can be made comparable by applying   the isomorphism $h$ to $\Delta_{[\mathcal{R}^{+}]}(\tilde{A}_{1})$ and obtain 
\[
h \left( \Delta_{[\mathcal{R}^{+}]}\left(\tilde{A}_{1}\right) \right) \approx 0.7038
\]
which, being larger than $0.6506$, indicates that globally the preferences contained in the multiplicative IPCM $\tilde{A}_{1}$ are more indeterminate than those contained in the fuzzy IPCM $\tilde{A}_{2}$, in addition to being more inconsistent (see Example \ref{Example:Fuzzy_[G]consistencyIndex})
\end{example}

All in all, it has been stipulated that we can associate a consistency and an indeterminacy value to each IPCM. In line with the approach by Li et al. \cite{Li2016628}, we also propose to use both values to determine whether or not a matrix needs revision.
To this end, we observe that the conjoint use of both indices lends itself to some graphical interpretations. 
\begin{itemize}

\item For each matrix $\tilde{A}$ we have two values, $I_{[\mathcal{G}]}(\tilde{A})$ and $\Delta_{[\mathcal{G}]}(\tilde{A})$. As shown in Figure \ref{fig:1}, these two values partition the graph into four subsets. IPCMs with values in $Q_1$ have greater indeterminacy and inconsistency than $\tilde{A}$. Therefore it seems reasonable to consider them \emph{more} inaccurate/irrational. With a similar reasoning one could classify IPCMs with values in $Q_3$ as \emph{less} inaccurate/irrational. IPCMs in $Q_{2}$ and $Q_{4}$ are not comparable since they have one value which is greater, but the other one which is smaller.

\item The second interpretation, also proposed by Li et al. \cite{Li2016628} is that of fixing thresholds for both indices and accept only IPCMs whose values are smaller or equal than the thresholds. Namely, if ${t}_{I}$ and ${t}_{\Delta}$ were the thresholds, then we should accept only the IPCMs in the grey area in Figure \ref{fig:2}.

\end{itemize}

\begin{figure}
    \centering
    \begin{subfigure}[b]{0.45\textwidth}
		 \centering
        \includegraphics[scale=1]{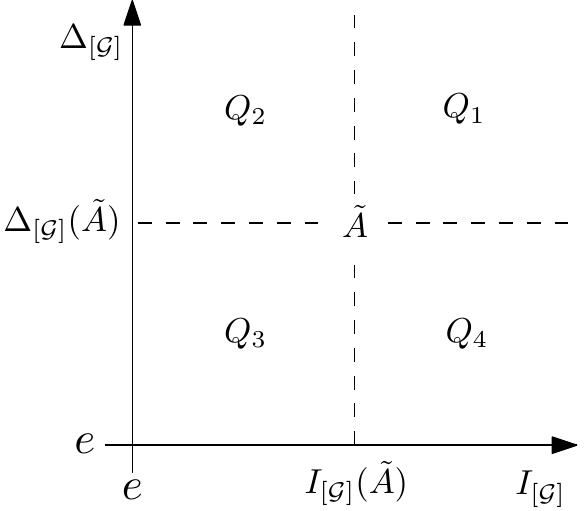}
        \caption{First interpretation}
        \label{fig:1}
    \end{subfigure}
    ~ 
    \begin{subfigure}[b]{0.45\textwidth}
		 \centering
        \includegraphics[scale=1]{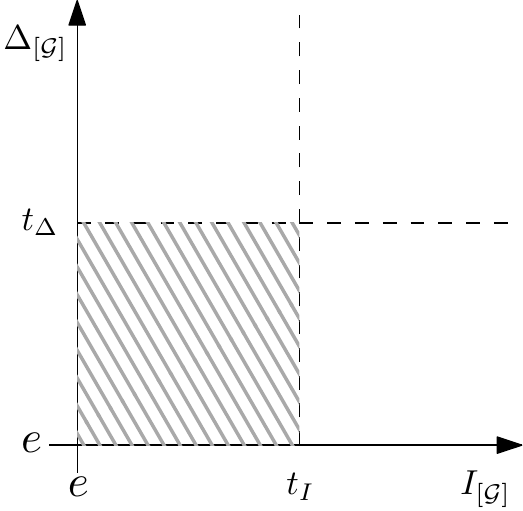}
        \caption{Second interpretation}
        \label{fig:2}
    \end{subfigure}
    \caption{ $I_{[\mathcal{G}]}$-$\Delta_{[\mathcal{G}]}$ axes: $I_{[\mathcal{G}]}$ is the $[\mathcal{G}]$-consistency index;  $\Delta_{[\mathcal{G}]}$ is the $[\mathcal{G}]$-indeterminacy index.   Lying on the $I_{[\mathcal{G}]}$ axis, there are all PCMs. Lying on the $\Delta$-axis, there are all $[\mathcal{G}]$-consistent IPCMs. On the origin of the axes $(e,e)$, there  are all $\mathcal{G}$-consistent PCMs.}   \label{fig:interpretations}
\end{figure}

\begin{example}
Let us consider the multiplicative and fuzzy IPCMs $\tilde{A}_{1}$ and $\tilde{A}_{2}$ used in Examples \ref{Example:Multiplicative_[G]consistencyIndex}, \ref{Example:Fuzzy_[G]consistencyIndex}, and \ref{Example:indeterminacyIndex}. For $\tilde{A}_{2}$, we had:
\begin{align*}
I_{[\mathcal{I}]}(\tilde{A}_{2}) &= 0.503448;\\
\Delta_{[\mathcal{I}]}(\tilde{A}_{2}) &= 0.6506.
\end{align*}
For $\tilde{A}_1$, by using the proper isomorphism $h$ mentioned in (\ref{eq:isomorphism_psi_01}), we obtained:
\begin{align*}
I_{[\mathcal{R}^{+}]}(\tilde{A}_{1}) = 2.58199 & \Rightarrow h(2.582) =
 0.720826 = I_{[\mathcal{I}]}(h(\tilde{A}_{1})); \\
\Delta_{[\mathcal{R}^{+}]}(\tilde{A}_{1}) = 2.376 & \Rightarrow h(2.376) =
 0.7038 = \Delta_{[\mathcal{I}]}(h(\tilde{A}_{1})).
\end{align*}
For sake of completeness, we also consider the following additive IPCM
\[
\tilde{A}_{3}=
\begin{pmatrix}
[0,0] & [1,3] & [2,4] & [6,8] \\
[ -3,-1 ] & [0,0] & [1,3] & [4,5] \\
[ -4,-2 ] & [ -3, -1 ] & [0,0] & [2,3] \\
[ -8,-6 ] & [ -5,-4 ] & [-3,-2] & [0,0]
\end{pmatrix};
\]
for which, by using the isomorphism $g$ presented in (\ref{eq:isomorphism_psi_01}), we can derive:
\begin{align*}
I_{[\mathcal{R}]}(\tilde{A}_{3}) = 3/2 & \Rightarrow g(3/2) =
 0.8175 = I_{[\mathcal{I}]}(g(\tilde{A}_{3})); \\
\Delta_{[\mathcal{R}]}(\tilde{A}_{3}) = 5/3 & \Rightarrow g(5/3) =
 0.841131 = \Delta_{[\mathcal{I}]}(g(\tilde{A}_{3})).
\end{align*}
At this point, thanks to the isomorphisms, we can give a common graphical interpretation of the levels of inconsistency and indeterminacy of all IPCMs, whether they be multiplicative, additive or fuzzy. In this example, we can position the preferences expressed in $\tilde{A}_1,\tilde{A}_2,\tilde{A}_3$ on  $[0.5,1[ \times [0.5,1[$. Figure \ref{fig:3} represents the \lq\lq dominance\rq\rq of the preferences expressed in $\tilde{A}_2$ over those expressed in $\tilde{A}_1$ since $\tilde{A}_2$ is both less inconsistent and less indeterminate than $h(\tilde{A}_1)$. The same can be said of the preferences of $\tilde{A}_2$ when compared to those in $\tilde{A}_3$.\\
If we establish thresholds $t_{I}=0.7$ and $t_{\Delta}=0.7$ and stipulate that an acceptable IPCM ought to satisfy both of them, then Figure \ref{fig:4} shows that only the preferences contained in $\tilde{A}_2$ should be considered acceptable.
\begin{figure}
    \centering
    \begin{subfigure}[t]{0.45\textwidth}
		 \centering
        \includegraphics[scale=1]{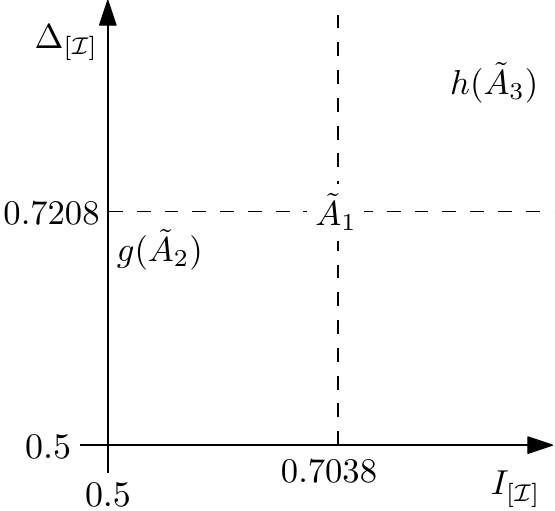}
        \caption{Preferences expressed in $\tilde{A}_2$ are less inconsistent and less indeterminate than those in $\tilde{A}_1$.
Preferences expressed in $\tilde{A}_1$ are less inconsistent and less indeterminate than those in $\tilde{A}_3$.        
}
        \label{fig:3}
    \end{subfigure}
    ~ 
    \begin{subfigure}[t]{0.45\textwidth}
		 \centering
        \includegraphics[scale=1]{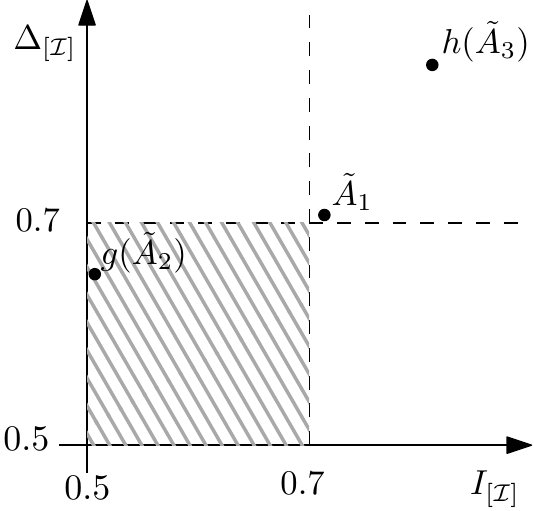}
        \caption{With thresholds $t_{I}=0.7$ and $t_{\Delta}=0.7$ only $\tilde{A}_2$ is considered acceptable.}
        \label{fig:4}
    \end{subfigure}
    \caption{ $I_{[\mathcal{I}]}$-$\Delta_{[\mathcal{I}]}$ axes: $I_{[\mathcal{I}]}$ is the $[\mathcal{I}]$-consistency index;  $\Delta_{[\mathcal{I}]}$ is the $[\mathcal{I}]$-indeterminacy index. Graphical analysis of inconsistency and indeterminacy of matrices $\tilde{A}_1$,  $\tilde{A}_2$ and $\tilde{A}_3$.}   \label{fig:interpretations_ex}
\end{figure}
\end{example}

\section{Conclusions and future work}
\label{sec:conclusions}
In the paper, after generalizing interval arithmetic to a suitable algebraic structure,  we provide a general unified framework for dealing with IPCMs; in particular, reciprocal IPCMs, whose entries are intervals on real continuous Abelian linearly ordered groups, allow us to unify several approaches proposed in the literature, such as multiplicative, additive and fuzzy IPCMs.\\ In this context, firstly, we generalize some consistency conditions proposed in the literature and we establish  inclusion relations between them. 
Then, we provide a consistency index, based on a concept of distance between intervals, in order to asses how much an IPCM is far from consistency; this consistency index generalizes a consistency index proposed in \cite{CavalloDapuzzo}, \cite{CavalloDapuzzoSquillanteIJIS3} for PCMs. 
We also consider an indeterminacy index in order to assess ambiguity of a decision maker in expressing his/her preferences; consistency index and indeterminacy index are used in concert to assess the discriminative capacity of a decision maker 
and  isomorphisms between Alo-groups allow us to compare consistency and indeterminacy of each kind of IPCM and to represent them on a unique Cartesian coordinate system.
\\
Our future work will be  directed to investigate the possibility to  extend to this kind of IPCMs further notions and results  obtained in the context of PCMs defined over Abelian linearly ordered groups, such as   weighting vector \cite{CavalloDapuzzoSC}, transitivity condition  \cite{CavalloDapuzzoFSS} and weak consistency \cite{CavalloDapuzzoFSS2}.

Finally, we observe that besides the approach based on intervals, a seemingly different approach grounded on Atanassov's concept of intuitionistic fuzzy sets has gained prominence to extend the concepts of fuzzy PCMs \cite{Xu2007} and multiplicative PCMs \cite{Xu2013,XiaXuLiao2013}. Having realized this, thanks to an isomorphism between interval-valued fuzzy sets and intuitionistic fuzzy sets (see \cite{DeschrijverKerre2003}, Th. 2.3), the results and the methods developed in this paper can be straightforwardly extended to the case of intuitionistic PCMs.

\section*{Acknowledgement}
The research  is supported by \lq\lq Programma di scambi internazionali con Universit\`{a} e Istituti di Ricerca Stranieri per la Mobilit\`{a}  di breve durata di Docenti, Studiosi e Ricercatori\rq\rq, University of Naples \lq\lq Federico II\rq\rq. The research of Matteo Brunelli was financed by the Academy of Finland (decision no.~277135). We thank Prof. Raimo P. H\"{a}m\"{a}l\"{a}inen for having invited Bice Cavallo as visiting researcher at Aalto University and for having built the opportunity of research collaboration between the two Universities.









\section*{Appendix}
\begin{proof}[Proof of Theorem \ref{Theorem_productIntervals}]
We first consider the first equality. Since  $\odot$ is continuous on  $ [a^{-},a^{+}] \times [b^{-},b^{+}]$, there exist minimum and maximum in $ [a^{-},a^{+}] \times [b^{-},b^{+}]$, and  $\odot$ assumes all values between  minimum and maximum. Finally, since  $\odot$ is monotone increasing in
both arguments, the minimum is equal to $a^{-} \odot  b^{-}$ and maximum is equal to $a^{+} \odot  b^{+}$; thus, the assertion is achieved.
The second equality follows from the previous one and from \eqref{eq:inverseOperationIntervals}
\end{proof}
%
%
\begin{proof}[Proof of Proposition \ref{Prop:noInverse}]
\begin{enumerate}
\item By Theorem \ref{Theorem_productIntervals}, 
 $\tilde{a} \odot_ {[G]} [e,e]= [e,e]  \odot_ {[G]} \tilde{a}= \tilde{a} , \forall \tilde{a} \in [G]$.
 \item  $\Rightarrow$) Let us suppose that $\exists \tilde{b} \in [G]$ such that $\tilde{a} \odot_ {[{G}]}  \tilde{b} = \tilde{b} \odot_ {[{G}]}  \tilde{a}  =  [e,e]$.
Then, we have,
\[
[a^{-},a^{+}] \odot_ {[{G}]}  [b^{-},b^{+}] = [a^{-} \odot b^{-} , a^{+} \odot b^{+} ] = [e,e],
\]
from which
\begin{equation}
\label{eq:cases}
\begin{cases}
a^{-} \odot b^{-} = e \\
a^{+} \odot b^{+} = e
\end{cases} \Rightarrow 
\begin{cases}
a^{-} = (b^{-})^{(-1)} \\
a^{+} = (b^{+})^{(-1)}
\end{cases}
\end{equation}
By $a^{-} \leq a^{+}$, we have that $(b^{-})^{(-1)} \leq (b^{+})^{(-1)}$ and, by first equivalence in \eqref{composition-1},  $ b^{-} \geq b^{+}$, which, together with $b^{-} \leq b^{+}$ implies $b^{-} = b^{+}$. This, with (\ref{eq:cases}), implies that $a^{-} = a^{+}$ and $\tilde{a} \in [G]_{p}$.\\
$\Leftarrow$) Let us consider $\tilde{a}=[a,a]\in [G]_{p}$. Then, $[a,a] \odot_ {[{G}]}  [a^{(-1)},a^{(-1)}]=[e,e]$  and the assertion is achieved.
\end{enumerate}
\end{proof}
\begin{proof}[Proof of Theorem \ref{Theorem:isomorphism_intevals_Alo_group_G}]
 By Theorem \ref{Theorem_productIntervals},  $\tilde{a} \odot_ {[G]}  \tilde{b} \in [G]$ for each  $\tilde{a} , \tilde{b} \in [G]$. Associativity and commutativity of $\odot_ {[G]}$ follow by associativity and commutativity of  $\odot$. Finally, by Proposition \ref{Prop:noInverse}, 
%
$[\mathcal{G}]_p= ([G]_p, \odot_ {[G]})$ is an Abelian group. Moreover, by \eqref{eq:partial_weak_order_intervals} and \eqref{ordercons}, we have:
$$[a,a]\leq_{[G]} [b,b] \Leftrightarrow a \leq b \Leftrightarrow a\odot c \leq b\odot c \Leftrightarrow [a,a]    \odot_ {[{G}]}    [c,c] \leq_{[G]} [b,b] \odot_{[{G}]} [c,c];$$
thus, $[\mathcal{G}]_p= ([G]_p, \odot_ {[{G}]}, \leq_{[G]} )$ is an Alo-group.
\\The bijection
$$i: a \in G \mapsto [a,a] \in [G]_p $$
is a group isomorphism because
$$i(a \odot b)=[a\odot b, a\odot b] =[a,a] \odot_ {[{G}]} [b,b] =i(a) \odot_ {[{G}]} i(b),$$
and a lattice isomorphism because
$$a \leq b \Leftrightarrow  [a,a]  \leq_{[G]} [b,b] \Leftrightarrow i(a) \leq_{[G]} i(b);$$
thus, the assertion is achieved.
\end{proof}
\begin{proof}[Proof of Proposition \ref{Prop:propertiesNorm}]
Properties $1, 2, 3,4$ follow immediately from Definition
\ref{norm_intervals} and Proposition \ref{triangle}. By  Theorem \ref{Theorem_productIntervals}, Definition
\ref{norm_intervals} and  property 5 of Proposition \ref{triangle}, we have:
\begin{align*}
||\tilde{a}\odot_ {[\mathcal{G}]} \tilde{b}||_{[\mathcal{G}]}& = ||[a^- \odot b^-, a^+ \odot b^+]||_{[\mathcal{G}]} 
                                            = \max\{||a^- \odot b^-|| _{\mathcal{G}}, ||a^+ \odot b^+|| _{\mathcal{G}} \}\  \leq\\
                                            & \leq \max\{||a^-|| _{\mathcal{G}} \odot ||b^-|| _{\mathcal{G}}  , ||a^+|| _{\mathcal{G}} \odot  ||b^+|| _{\mathcal{G}} \} \leq  \max \{   ||a^-|| _{\mathcal{G}} , ||a^+|| _{\mathcal{G}}    \} \odot  \max \{   ||b^-|| _{\mathcal{G}} , ||b^+|| _{\mathcal{G}}    \}=\\
																						& =  ||\tilde{a}||_{[\mathcal{G}]}\odot ||\tilde{b}||_{[\mathcal{G}]};
\end{align*}
thus, item 5 is achieved.
\end{proof}
\begin{proof}[Proof of Proposition \ref{prop:dis_intervals}]
By Proposition \ref{Prop:propertiesNorm},  properties 1--3 in Definition \ref{def:dis_intervals} are satisfied.\\
Let us consider $\tilde{a}=[a^-,a^+],  \tilde{b}=[b^-,b^+],   \tilde{c}=[c^-,c^+] \in [\mathcal{G}]$; then, by Definition \ref{norm_intervals}, Proposition \ref{dcirc} and property 4 in Definition \ref{dis},  we have:
\begin{align*} \label{eq_CG_distance}
d_{[\mathcal{G}]}(\tilde{a},\tilde{b})
&= \max\{ d_{\mathcal{G}}(a^{-}, b^{-}), d_{\mathcal{G}}(a^{+}, b^{+}) \} \\
&\leq \max \{ d_{\mathcal{G}}(a^{-}, c^{-})  \odot d_{\mathcal{G}}(c^{-}, b^{-}) ,  d_{\mathcal{G}}(a^{+}, c^{+})  \odot d_{\mathcal{G}}(c^{+}, b^{+}) \}  \\
&\leq \max \{d_{\mathcal{G}}(a^{-}, c^{-}) , d_{\mathcal{G}}(a^{+}, c^{+})  \} \odot \max \{d_{\mathcal{G}}(c^{-}, b^{-}) , d_{\mathcal{G}}(c^{+}, b^{+})  \} \\
&= d_{[\mathcal{G}]}(\tilde{a},\tilde{c}) \odot d_{[\mathcal{G}]}(\tilde{c},\tilde{b});
\end{align*}
thus, the assertion is achieved.
\end{proof}
\begin{proof}[Proof of Corollary \ref{cor:alternativa3}]
By  Definition \ref{cor:alternativa3} and \eqref{eq:reciprocalInterval}.
\end{proof}
\begin{proof}[Proof of Proposition \ref{Prop:IPCMdegenratesPCM}]
$\Rightarrow$) By  
\eqref{eq:cond} and Proposition  \ref{Prop:noInverse}, we have that  $\tilde{a}_{ij} \in [G]_p$; thus, by setting  $a_{ij}=a_{ij}^{-}=a_{ij}^{+} $ and by applying Corollary \ref{cor:alternativa3}, we have $a_{ij} \odot  a_{ji}  =e$ (i.e.  $A=( a_{ij})$ is a $\mathcal{G}$-reciprocal PCM).\\
$\Leftarrow$)
The assertion follows by:
$$\tilde{a}_{ij} \odot_{[\mathcal{G}]} \tilde{a}_{ji}=[a_{ij},  a_{ij}] \odot_{[\mathcal{G}]} [a_{ji},  a_{ji}]=[e,e]$$
and
$$\tilde{a}_{ji} =[a_{ji}, a_{ji}]=[a_{ij}^{(-1)}, a_{ij}^{(-1)}] =\tilde{a}_{ij}^{(-1)}.$$
\end{proof}
\begin{proof}[Proof of Proposition \ref{Prop:reciprocalIPCM_permutations}]
Let $\tilde{A}=(\tilde{a}_{ij})$ be a  $[\mathcal{G}]$-reciprocal IPCM; then, by  Definition \ref{def:alternativa3},  for each permutation  $\sigma$,  the following equalities hold true:
$$ \tilde{a}_{\sigma(j)\sigma(i)}=\tilde{a}_{\sigma(i)\sigma(j)}^{(-1)}~ \forall \; i,j\in \{1, \ldots, n\}$$ and, as a consequence,  $\tilde{A}^{\sigma}$ is  $[\mathcal{G}]$-reciprocal. \\
The vice versa is straightforward.
\end{proof}
\begin{proof}[Proof  of Proposition \ref{prop:LiuEquivalent}]
$1. \Leftrightarrow 2.$By  Proposition \ref{prop:consistency_i<j<k}.\\
$2. \Rightarrow 3. $
For each $i<j<k$, we have:
\begin{align*}
\tilde{a}_{ik} =[a^-_{ik},a^+_{ik} ]= [l_{ik},r_{ik} ]=[l_{ij} \odot l_{jk}, r_{ij} \odot r_{jk} ]=[l_{ij} , r_{ij} ] \odot_ {[\mathcal{G}]}  [l_{jk} , r_{jk} ] =[a^-_{ij},a^+_{ij} ]\odot_ {[\mathcal{G}]}  [a^-_{jk},a^+_{jk} ]= \tilde{a}_{ij} \odot_ {[\mathcal{G}]} \tilde{a}_{jk}.
\end{align*}
$3. \Rightarrow2.$ By assumption, for each $i<j<k$, we have:
$$[a^-_{ik},a^+_{ik} ]= [a^-_{ij},a^+_{ij} ]\odot_ {[\mathcal{G}]}  [a^-_{jk},a^+_{jk} ]=[a^-_{ij} \odot a^-_{jk}  ,a^+_{ij} \odot a^+_{jk} ];$$
thus:
$$
\begin{cases}
l_{ik}=  a^-_{ik}= a^-_{ij} \odot a^-_{jk}=   l_{ij} \odot l_{jk} \\
r_{ik}= a^+_{ik} =  a^+_{ij} \odot a^+_{jk} =                r_{ij} \odot r_{jk}
\end{cases}
\;\;
\forall i<j<k.
$$
\end{proof}
\begin{proof}[Proof of Theorem \ref{ApproximateConsistencyDegeneration}]
If $\tilde{A}=([a_{ij}^-, a_{ij}^+])$ degenerates in a $\mathcal{G}$-consistent PCM  $A=(a_{ij})$ over $(G, \odot, \leq)$ with  $a_{ij}^-= a_{ij}^+=a_{ij}$, for each $i,j =1, \ldots, n$, then by Proposition \ref{prop:unique_element_for_consistency}, $A^\sigma=(a_{\sigma(i)\sigma(j)})$ is a $\mathcal{G}$-consistent PCM over $(G, \odot, \leq)$ for each permutation $\sigma$. Thus,   $L^\sigma=(l_{ij}^\sigma)=R^\sigma=(r_{ij}^\sigma)$   is a $\mathcal{G}$-consistent PCM over $(G, \odot, \leq)$ for each permutation $\sigma$.\\
Viceversa, let us assume  $L^\sigma=(l_{ij}^\sigma)$ and $R^\sigma=(r_{ij}^\sigma)$   be $\mathcal{G}$-consistent PCMs over $(G, \odot, \leq)$ for all permutation $\sigma$; thus, for  each permutation $\sigma$, the following equalities hold: 
\begin{equation} \label{eq:L_R_sigma_consistent}
l_{ik}^\sigma=l_{ij}^\sigma \odot l_{jk}^\sigma \quad \quad r_{ik}^\sigma=r_{ij}^\sigma \odot r_{jk}^\sigma.
\end{equation}
 Without loss of generality,  we can assume $i_1<k_1<j_1$; thus, for a permutation $\sigma_1$, by definition of $l_{ij}^\sigma$ in 
(\ref{eq:L_R_sigma}) and first equality in (\ref{eq:L_R_sigma_consistent}), we have:
$$ a_{\sigma_1(i_1)\sigma_1(k_1)}^-=a_{\sigma_1(i_1)\sigma_1(j_1)}^- \odot a_{\sigma_1(j_1)\sigma_1(k_1)}^+ .$$
Let us consider a permutation $\sigma_2$ and integers $i_2, j_2, k_2$, with $j_2<i_2<k_2$, 
such that:
$$l_{i_1j_1}^{\sigma_1}=l_{i_2j_2}^{\sigma_2}, \;   l_{j_1k_1}^{\sigma_1}=l_{j_2k_2}^{\sigma_2}, \; l_{i_1k_1}^{\sigma_1}=l_{i_2k_2}^{\sigma_2},$$
$$r_{i_1j_1}^{\sigma_1}=r_{i_2j_2}^{\sigma_2}, \;   r_{j_1k_1}^{\sigma_1}=r_{j_2k_2}^{\sigma_2}, \; r_{i_1k_1}^{\sigma_1}=r_{i_2k_2}^{\sigma_2}.$$
Thus, by definition of $r_{ij}^\sigma$ in
(\ref{eq:L_R_sigma}) and second equality in (\ref{eq:L_R_sigma_consistent}), we have:
$$a_{\sigma_1(i_1)\sigma_1(k_1)}^+ = r_{i_1k_1}^{\sigma_1}=r_{i_2k_2}^{\sigma_2} 
= r_{i_2j_2}^{\sigma_2} \odot r_{j_2k_2}^{\sigma_2}= a_{\sigma_2(i_2)\sigma_2(j_2)}^- \odot a_{\sigma_2(j_2)\sigma_2(k_2)}^+ = a_{\sigma_1(i_1)\sigma_1(j_1)}^- \odot a_{\sigma_1(j_1)\sigma_1(k_1)}^+ . $$
Thus, $a_{\sigma(i)\sigma(k)}^-=a_{\sigma(i)\sigma(k)}^+$ ($i,k \in \{1, \ldots, n\}$) for each permutation $\sigma$, and, as a consequence, the assertion is achieved.
\end{proof}
\begin{proof}[Proof of Proposition \ref{Prop:invariance_odotG_consistency}]
Checking the $[\mathcal{G}]$-consistency of $\tilde{A}$ requires checking that condition (\ref{eq:consistent_IPCM}) holds for the set of triples in the set $S=\{ (i,j,k) \, | \, i,j,k \in \{ 1,\ldots,n \} \}$. Similarly, $[\mathcal{G}]$-consistency of $\tilde{A}^{\sigma}$ requires that condition (\ref{eq:consistent_IPCM}) hold for all the triples in $S^{\sigma}=\{ (\sigma(i),\sigma(j),\sigma(k)) \, | \, i,j,k \in \{ 1,\ldots,n \} \}$. Since by definition $\sigma:\{ 1,\ldots,n\} \rightarrow \{ 1,\ldots,n \}$ is a bijection, we know that $S = S^{\sigma}$, and hence the proposition is true.
\end{proof}
\begin{proof}[Proof of Theorem \ref{Theorem_odotG_consistency_equivalences}]
$1 \Rightarrow 2$
Let us assume \eqref{eq:Equivalence_odotGConsistency} be true. Then, by applying   $[\mathcal{G}]$-reciprocity $a_{ij}^- \odot a_{ji}^+=a_{ij}^+ \odot a_{ji}^-=e$ (Corollary \ref{cor:alternativa3}), we have:
\begin{align*}
(a_{ik}^- \odot \underbrace{a_{ik}^+) \odot (a_{ki}^{-}}_e \odot a_{kj}^{-} \odot a_{ji}^{-})=(\underbrace{a_{ik}^- \odot  a_{kj}^{-} \odot a_{ji}^{-}}_{  {a}_{ij}^{-} \odot {a}_{jk}^{-} \odot {a}_{ki}^{-} })  \odot \underbrace{a_{kj}^- \odot a_{jk}^+}_e \odot  \underbrace{a_{ji}^- \odot a_{ij}^+}_e =(a_{ij}^- \odot a_{ij}^+ \odot a_{jk}^- \odot a_{jk}^+)  \odot  (a_{ki}^{-} \odot a_{kj}^{-} \odot a_{ji}^{-} );
\end{align*}
thus, by cancellative law, the assertion is achieved.\\
$2 \Rightarrow 1$
By applying   $[\mathcal{G}]$-reciprocity $a_{ij}^- \odot a_{ji}^+=a_{ij}^+ \odot a_{ji}^-=e$ (Corollary \ref{cor:alternativa3}), 
for each $ i,j,k \in \{1, \ldots, n\}$,  we have:
\begin{align*}
  {a}_{ij}^{-} \odot {a}_{jk}^{-} \odot {a}_{ki}^{-} =&  {a}_{ij}^{-} \odot {a}_{jk}^{-} \odot {a}_{ki}^{-}  \odot \underbrace{{a}_{ji}^{-} \odot {a}_{ij}^{+} }_e \odot  \underbrace{{a}_{kj}^{-} \odot {a}_{jk}^{+} }_e=
   \underbrace{  {a}_{ij}^{-} \odot {a}_{ij}^{+} \odot {a}_{jk}^{-} \odot {a}_{jk}^{+} }_{a_{ik}^- \odot a_{ik}^+} \odot
  {a}_{ki}^{-} \odot  {a}_{ji}^{-} \odot  {a}_{kj}^{-} =
  \\=&a_{ik}^{-} \odot \underbrace{a_{ik}^+ \odot   {a}_{ki}^{-}}_{e} \odot  {a}_{ji}^{-} \odot  {a}_{kj}^{-}  =   {a}_{ik}^{-} \odot {a}_{kj}^{-} \odot {a}_{ji}^{-} 
\end{align*}and
\begin{align*}
 {a}_{ij}^{+} \odot {a}_{jk}^{+} \odot {a}_{ki}^{+} = &  {a}_{ij}^{+} \odot {a}_{jk}^{+} \odot {a}_{ki}^{+}   \odot \underbrace{{a}_{ij}^{-} \odot {a}_{ji}^{+} }_e \odot  \underbrace{{a}_{jk}^{-} \odot {a}_{kj}^{+} }_e=
  \underbrace{  {a}_{ij}^{-} \odot {a}_{ij}^{+} \odot {a}_{jk}^{-} \odot {a}_{jk}^{+} }_{a_{ik}^- \odot a_{ik}^+} \odot
  {a}_{ki}^{+} \odot  {a}_{ji}^{+} \odot  {a}_{kj}^{+} =
  \\=&a_{ik}^- \odot \underbrace{a_{ik}^+ \odot   {a}_{ki}^{+}}_e\odot  {a}_{ji}^{+} \odot  {a}_{kj}^{+}  =  {a}_{ik}^{+} \odot {a}_{kj}^{+} \odot {a}_{ji}^{+};
\end{align*}
thus, by \eqref{eq:Equivalence_odotGConsistency}, $\tilde{A}$ is $[\mathcal{G}]$-consistent.\\
$2 \Rightarrow 3$ It is straightforward.\\
$3 \Rightarrow 2$
As $a_{ii}^-=a_{ii}^+$ and $ a_{ij}^- \odot a_{ji}^+=a_{ij}^+ \odot a_{ji}^-=e$ (Corollary \ref{cor:alternativa3}),  item 2 always holds if three or any two of indices $i,j,k$ are equal. Thus, we consider the case that $i \neq j \neq k$. \\For $i<j<k$,  item 2 is identical to item 3; thus, item 2 holds.\\
Let us consider  $i<k<j$. By item 3, we have  $a_{ij}^{-} \odot a_{ij}^{+} = a_{ik}^{-} \odot a_{ik}^+ \odot a_{kj}^{-} \odot a_{kj}^+$.
Thus, by 	\eqref{composition-1} and $[\mathcal{G}]$-reciprocity (Corollary \ref{cor:alternativa3}), we have:
$$a_{ik}^{-} \odot a_{ik}^+=a_{ij}^{-} \odot a_{ij}^{+} \div (a_{kj}^{-} \odot a_{kj}^+ )= a_{ij}^{-} \odot a_{ij}^{+}  \odot  (a_{kj}^{+} \odot a_{kj}^- )^{(-1)}= a_{ij}^{-} \odot a_{ij}^{+}  \odot a_{jk}^{-} \odot a_{jk}^+.$$
Similarly, we obtain that item 2 holds true for the remaining cases 
$j<i<k$, $j<k<i$, $k<i<j$, $k<j<i$; thus, item 2 is achieved.
\end{proof}
\begin{proof}[Proof of Proposition \ref{prop:implicationsConsistency}]
$1.$  It is straightforward; it is enough to consider the permutation $\sigma$ such that $\sigma(i)=i \; \forall i \in \{1, \ldots, n\}$.
\\
$2.$ If $\tilde{A}$ is approximately $[\mathcal{G}]$-consistent then, by Remark \ref{remark:equivalenceApproximateConsistency}, 
 there is a permutation $\sigma$ such that $\tilde{A}^\sigma$ is Liu's  $[\mathcal{G}]$-consistent and, by Proposition \ref{prop:LiuEquivalent}, we have: $$\tilde{a}^\sigma_{ik}=\tilde{a}^\sigma_{ij} \odot \tilde{a}^\sigma_{jk} \quad \forall i<j<k.$$ 
As a consequence, the following equality holds:
$$[(\tilde{a}^\sigma_{ik})^-, (\tilde{a}^\sigma_{ik})^+ ]=[(\tilde{a}^\sigma_{ij})^- \odot (\tilde{a}^\sigma_{jk})^-, (\tilde{a}^\sigma_{ij})^+ \odot (\tilde{a}^\sigma_{jk})^+ ] \quad \forall i<j<k.$$
Thus, we have that:
$$(\tilde{a}^\sigma_{ik})^- = (\tilde{a}^\sigma_{ij})^- \odot (\tilde{a}^\sigma_{jk})^-, \quad  (\tilde{a}^\sigma_{ik})^+= (\tilde{a}^\sigma_{ij})^+ \odot (\tilde{a}^\sigma_{jk})^+ \quad \forall i<j<k, $$
and finally:
$$(\tilde{a}^\sigma_{ik})^- \odot (\tilde{a}^\sigma_{ik})^+=(\tilde{a}^\sigma_{ij})^- \odot (\tilde{a}^\sigma_{ij})^+ \odot (\tilde{a}^\sigma_{jk})^- \odot (\tilde{a}^\sigma_{jk})^+ \quad \forall i<j<k;$$
thus, by Theorem \ref{Theorem_odotG_consistency_equivalences},  $\tilde{A}^\sigma$ is $[\mathcal{G}]$-consistent, and by Proposition \ref{Prop:invariance_odotG_consistency}, $\tilde{A}$ is $[\mathcal{G}]$-consistent.
\end{proof}
\begin{proof}[Proof of Proposition \ref{Prop:simplification_DG_aijk_aikj}]
By applying $d_{\mathcal{G}}$ in Proposition \ref{dcirc},  equality in \eqref{eq_oru_consistencyIndex_3} can be written as follows:
$$
 d_{[\mathcal{G}]}(\tilde{a}_{ijk},\tilde{a}_{ikj}) = \max \left\{  \tilde{a}_{ijk}^{-} \div \tilde{a}_{ikj}^{-} ,  \; \tilde{a}_{ikj}^{-} \div \tilde{a}_{ijk}^{-},  \;\tilde{a}_{ijk}^{+} \div \tilde{a}_{ikj}^{+} , \; \tilde{a}_{ikj}^{+} \div \tilde{a}_{ijk}^{+}  \right\}.
$$
By applying  $[\mathcal{G}]$-reciprocity, we have:
\[
\tilde{a}_{ijk}^{-} \div \tilde{a}_{ikj}^{-}
=(a^{-}_{ij} \odot a^{-}_{jk} \odot a^{-}_{ki}) \div (a^{-}_{ik} \odot a^{-}_{kj} \odot a^{-}_{ji})= (a^{+}_{ji} \odot a^{+}_{kj} \odot a^{+}_{ik}) \div (a^{+}_{ki} \odot a^{+}_{jk} \odot a^{+}_{ij})
=
\tilde{a}_{ikj}^{+} \div \tilde{a}_{ijk}^{+}
\]
and
\[
\tilde{a}_{ikj}^{-}   \div   \tilde{a}_{ijk}^{-} 
= (a^{-}_{ik} \odot a^{-}_{kj} \odot a^{-}_{ji})   \div             (a^{-}_{ij} \odot a^{-}_{jk} \odot a^{-}_{ki})=   (a^{+}_{ki} \odot a^{+}_{jk} \odot a^{+}_{ij}) \div (a^{+}_{ji} \odot a^{+}_{kj} \odot a^{+}_{ik})
=
\tilde{a}_{ijk}^{+} \div \tilde{a}_{ikj}^{+}.
\]
Thus, the assertion is achieved.
\end{proof}
\begin{proof}[Proof of Proposition \ref{prop:unique_element_for_[G]_consistency}]
By \eqref{eq_oru_consistencyIndex}, \eqref{eq_oru_consistencyIndex_3} and  Proposition \ref{prop:dis_intervals}.
\end{proof}
\begin{proof}[Proof of Corollary \ref{coro:deltaIndetrminacy}]
By $d_\mathcal{G} (a_{ij}^{-}, a_{ij}^{+}) = \max\{\ a_{ij}^{-} \div a_{ij}^{+}, a_{ij}^{+} \div a_{ij}^{-}\}$ and  $a_{ij}^{+}  \geq a_{ij}^{-}$.
\end{proof}
\begin{proof}[Proof of Proposition \ref{prop:unique_element_for_[G]indeterminancy}] By  Definition \ref{def:indeterminancy_index}, Definition \ref{def:indeterminancy_index_3} and Proposition \ref{dcirc}.
\end{proof}

\end{document}